\def\eqref#1{equation~\ref{#1}}
\def\1{\bm{1}}
\DeclareMathAlphabet{\mathsfit}{\encodingdefault}{\sfdefault}{m}{sl}
\SetMathAlphabet{\mathsfit}{bold}{\encodingdefault}{\sfdefault}{bx}{n}
\DeclareMathOperator*{\argmin}{arg\,min}
\newcommand{\model}{SAT}
\newcommand{\mecha}{SA}
\newcommand{\mechanism}{Selective Attention}
\newcommand{\mechanismlowercase}{selective attention}
  \providecommand\BibTeX{{%
    \normalfont B\kern-0.5em{\scshape i\kern-0.25em b}\kern-0.8em\TeX}}}
\begin{document}

%%
%% The "title" command has an optional parameter,
%% allowing the author to define a "short title" to be used in page headers.
\title{Not All Neighbors Are Worth Attending to: Graph Selective \\ Attention Networks for Semi-supervised Learning}
%for Semi-supervised Learning}

%%
%% The "author" command and its associated commands are used to define
%% the authors and their affiliations.
%% Of note is the shared affiliation of the first two authors, and the
%% "authornote" and "authornotemark" commands
%% used to denote shared contribution to the research.
\author{%
  Tiantian~He$^{1*}$,~~~Haicang~Zhou$^{2,3*}$,~~~Yew-Soon~Ong$^{1,2,3}$,~~~Gao~Cong$^{2,3}$\\
  $^1$Agency for Science, Technology and Research (A*STAR)\\
  $^2$Nanyang Technological University\\
  $^3$Singtel Cognitive and Artificial Intelligence Lab for Enterprises@NTU, Singapore\\
  {$\lbrace$He\_Tiantian@ihpc.,Ong\_Yew\_Soon@hq.$\rbrace$a-star.edu.sg} \\
  {$\lbrace$haicang001@e.,ASYSOng@,gaocong@$\rbrace$ntu.edu.sg}
}
\thanks{*Equal Contribution}
\renewcommand{\shortauthors}{Tiantian He et al.}

%%
%% The abstract is a short summary of the work to be presented in the
%% article.
\begin{abstract}
Graph attention networks (GATs) are powerful tools for analyzing graph data from various real-world scenarios. To learn representations for downstream tasks, GATs generally attend to all neighbors of the central node when aggregating the features. In this paper, we show that a large portion of the neighbors are irrelevant to the central nodes in many real-world graphs, and can be excluded from neighbor aggregation. Taking the cue, we present Selective Attention (SA) and a series of novel attention mechanisms for graph neural networks (GNNs). SA leverages diverse forms of learnable node-node dissimilarity to acquire the scope of attention for each node, from which irrelevant neighbors are excluded. We further propose Graph selective attention networks (SATs) to learn representations from the highly correlated node features identified and investigated by different SA mechanisms. Lastly, theoretical analysis on the expressive power of the proposed SATs and a comprehensive empirical study of the SATs on challenging real-world datasets against state-of-the-art GNNs are presented to demonstrate the effectiveness of SATs.
\end{abstract}

%%
%% The code below is generated by the tool at http://dl.acm.org/ccs.cfm.
%% Please copy and paste the code instead of the example below.
%%
\begin{CCSXML}
<ccs2012>
   <concept>
       <concept_id>10002951.10003227.10003351</concept_id>
       <concept_desc>Information systems~Data mining</concept_desc>
       <concept_significance>500</concept_significance>
       </concept>
   <concept>
       <concept_id>10010147.10010257.10010293.10010294</concept_id>
       <concept_desc>Computing methodologies~Neural networks</concept_desc>
       <concept_significance>500</concept_significance>
       </concept>
 </ccs2012>
\end{CCSXML}

\ccsdesc[500]{Information systems~Data mining}
\ccsdesc[500]{Computing methodologies~Neural networks}
%% To change. `Graph` should be incorporated.

%%
%% Keywords. The author(s) should pick words that accurately describe
%% the work being presented. Separate the keywords with commas.
\keywords{Graph Attention, Graph Neural Networks, Semi-supervised Learning, Network Analysis}

%% A "teaser" image appears between the author and affiliation
%% information and the body of the document, and typically spans the
%% page.

%%
%% This command processes the author and affiliation and title
%% information and builds the first part of the formatted document.
\maketitle

\section{Introduction}
% Graph is universal for representing the complex relations among real-world data samples, such as social ties among social network users, and collaborations among academic researchers.
% Due to the significant roles played in various real applications, such as social network analysis \cite{he2021learning,DBLP:conf/nips/HamiltonYL17}, biological module detection \cite{DBLP:conf/iclr/VelickovicCCRLB18,DBLP:conf/iclr/KipfW17,DBLP:conf/iclr/0001ZWZ20}, and collaboration analysis \cite{DBLP:conf/iclr/VelickovicFHLBH19,he2021learning,bianchi2021graph}, graph learning has drawn much interest in the recent.

Graph neural networks (GNNs) \cite{DBLP:conf/iclr/KipfW17, DBLP:conf/cvpr/MontiBMRSB17, DBLP:conf/nips/HamiltonYL17, DBLP:conf/iclr/VelickovicCCRLB18, DBLP:conf/icml/XuLTSKJ18, DBLP:conf/icml/ChenZS18, DBLP:conf/iclr/KlicperaBG19, DBLP:conf/iclr/XuHLJ19, DBLP:conf/icml/WuSZFYW19, abu2019mixhop, DBLP:conf/icml/YouYL19, DBLP:conf/nips/KlicperaWG19, DBLP:conf/nips/YouYL20, DBLP:conf/kdd/ZhangHZZ20} have achieved great success in semi-supervised learning tasks in graph-structured data.
Among various types of GNNs, graph attention networks have gained popularity with multifarious real-world applications, especially arising from social and collaboration graphs \cite{he2021learning,DBLP:conf/iclr/VelickovicCCRLB18,DBLP:conf/iclr/0001ZWZ20, DBLP:conf/kdd/GaoJ19, DBLP:conf/www/WangJSWYCY19, ying2021transformers, brody2022how, min2022transformer}.
In each graph attention layer \citep{DBLP:conf/iclr/VelickovicCCRLB18}, the node representation is generally learned following a two-step procedure.
Attention scores (attention coefficients) between each node and all its neighbors are firstly computed by some attention mechanism.
The node representation for downstream tasks is then computed as a weighted aggregation of all neighbor features.

%Although most GATs typically include all neighbors in the scope of attention,
Existing GNNs typically include all neighbors in the scope (i.e., the receptive field) for feature aggregation \cite{zeng2021decoupling}.
%Although existing work on GNNs typically include all neighbors in feature aggregation,
% Although attending to all neighbors in feature aggregation is widely accepted,
%Although including all neighbors in the scope of attention is widely accepted, 
However, it might not be always best to do so in attention-based GNNs (Fig. \ref{fig:motivation}).
Recent studies \citep{Rong2020DropEdge, DBLP:conf/icml/ZhengZCSNYC020} have shown that 
incorporating all neighbors in the scope
%considering all neighbors 
for feature aggregation can possibly lead to deterioration in the predictive performances of GNNs on various graph learning tasks.
%the learning capability of GNNs.
Notably, our study also indicates that most neighbors in widely used social and collaboration graph datasets, such as Cora and Cite \citep{DBLP:journals/aim/SenNBGGE08}, are found to be far apart
% a simple distance measure and share at most one common neighbor
%simple distance (or similarity) measures 
(see Appendix \ref{conflict-example} for more details).
This reveals that most neighbors in real-world graphs are highly dissimilar to the central node and are hence likely irrelevant for feature aggregation.
%This reveals that most neighbors in these graphs are highly irrelevant.
% and can be excluded from the neighbor aggregation.
Our idea is that adapting the scope of attention appropriately (i.e., the receptive field of graph attention for feature aggregation)
%appropriately modulating the scope of attention \newtext{(As defined in \citep{zeng2021decoupling}, scope means the receptive field. Thus, the scope of attention means the receptive field of graph attention.)}
can enable attention-based GNNs to learn better representations by attending more to highly relevant neighbors while ignoring the irrelevant ones.
We conjecture that with better representations, the performance of GNNs on semi-supervised learning tasks would improve.

% Aiming to fine-tune the scope of neighbors, some strategies, e.g., top-$k$ ranking \citep{DBLP:conf/kdd/GaoJ19} have been proposed to constrain attention-based GNNs to learn representations by aggregating the features from a fixed subset of neighbors.
% Such arbitrary approaches cannot identify that the neighbor scope for neighbor aggregation is possibly different from one node to another.
% Consequently, inappropriate neighbors might be attended to in the feature aggregation, probably leading to a deteriorated performance on semi-supervised learning tasks.
% Although it is of great importance to develop more effective attention-based GNNs, how to adaptively determine the scope of neighbors is still nascent to date.

\begin{figure*}[ht]
    \centering
    \subfigure[Neighbor aggregation in GAT]{
        \includegraphics[width=0.23\textwidth]{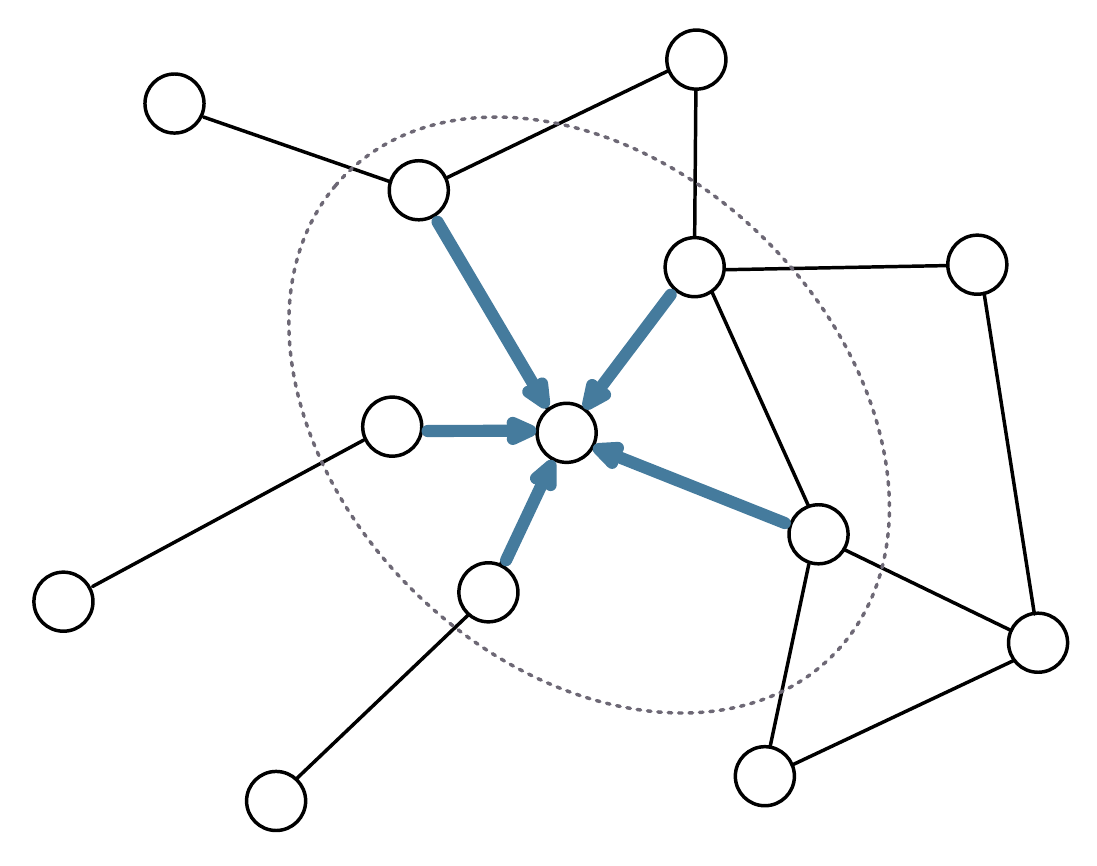}
    }
    \subfigure[Neighbor aggregation in SAT]{
        \includegraphics[width=0.23\textwidth]{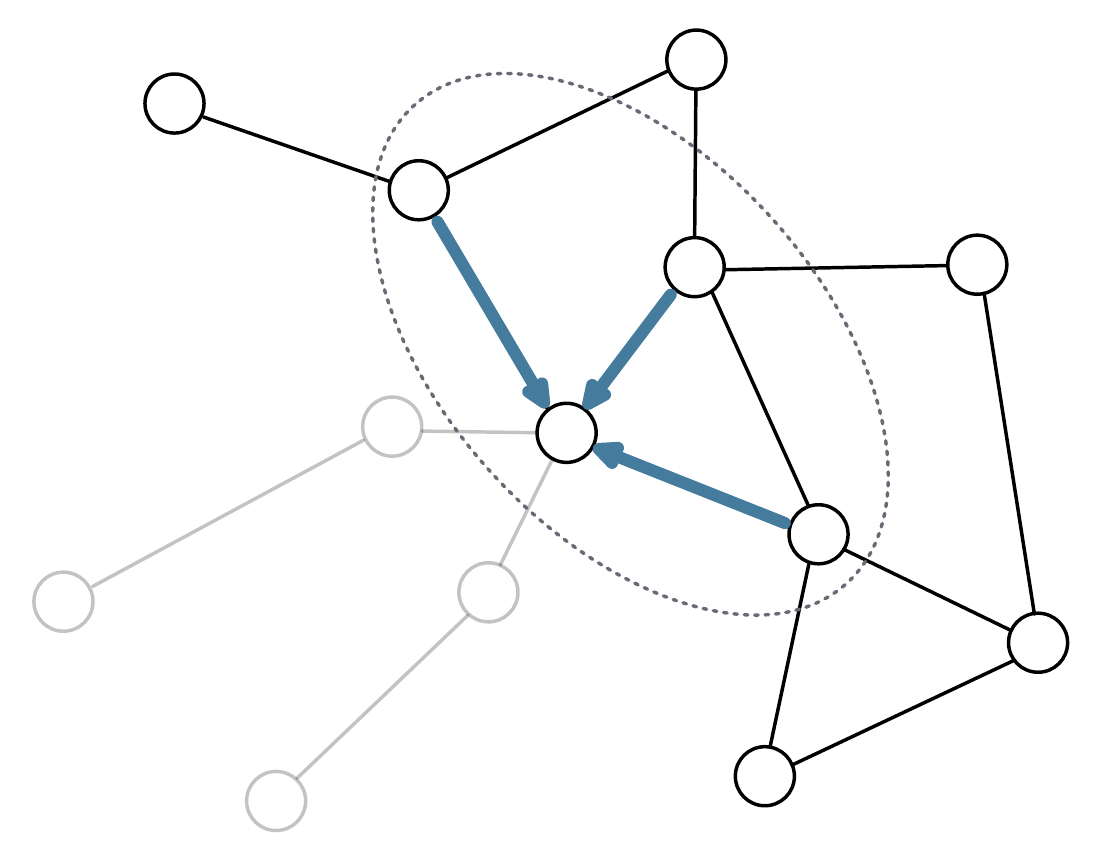}
    }
    \subfigure[A case study of attention scores]{
        \includegraphics[width=0.27\textwidth]{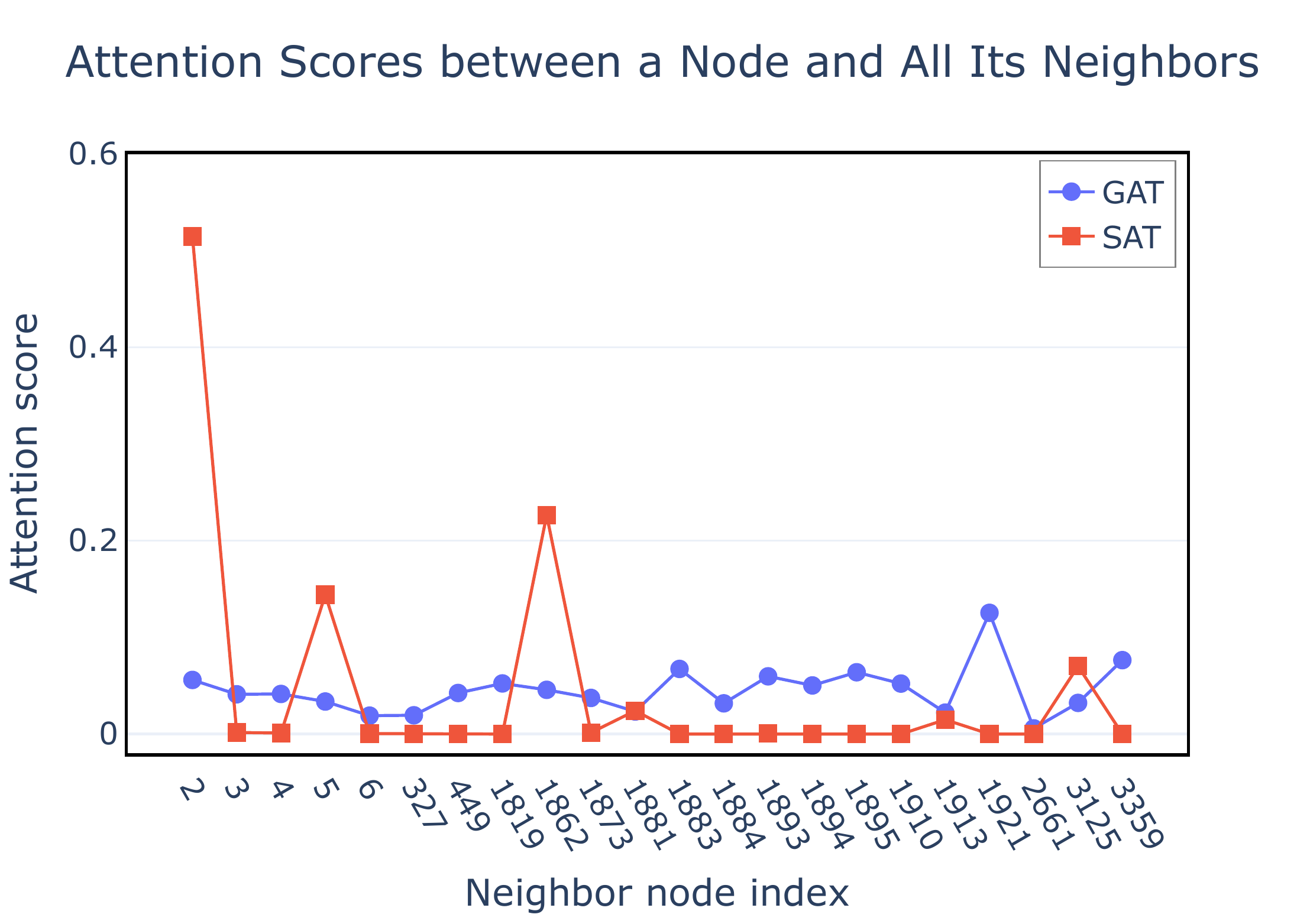}
    }
    \caption{Schematic illustration of the difference between SAT and GAT. 
    SAT (subfigure (b)) can learn the scope of attention for each node while GAT cannot (subfigure (a)).
    %SAT (subfigure (b)) can ignore irrelevant neighbors while GAT cannot (subfigure (a)).
    In subfigure (c), we exemplify the attention scores from a node and all its neighbors in a real-world dataset. Most of the learned attention scores learned by \model~are very close to 0, which means the corresponding neighbors are excluded from the feature aggregation. (Please refer to Section \ref{sec:attention_analysis} for more details.)}
    \label{fig:motivation}
\end{figure*}

% In contrast, human beings are born with the capacity of adaptively perceiving which subset of objects are worthy attending to.
% Previous studies have shown 
Our present idea is inspired and motivated by studies \cite{chi1975span,johnston1986selective,fu2020can} in cognitive science. 
Humans are known to be capable of determining the number of stimuli to respond to well, such as spoken words and presented images within their apprehension span \citep{chi1975span}.
The high quality of cognition is maintained by paying \mechanism~\cite{johnston1986selective,fu2020can} to the few stimuli identified in the apprehension span as most relevant to the cognitive goal, and ignoring the irrelevant ones.
Although the idea of apprehension span has been tentatively adapted to solve some learning-based tasks in computer vision \cite{huang2021leveraging} and natural language processing \cite{DBLP:conf/naacl/MarufMH19, han2022sancl}, how to adapt the scope of attention in attention-based GNNs for  representation learning of graph data remains under-explored to date.
%\newtext{Meanwhile, we notice that selective attention has been successfully adapted for computer vision (CV) \cite{huang2021leveraging} and natural language processing (NLP) \cite{DBLP:conf/naacl/MarufMH19, han2022sancl}, but it has not been explored on graph data. This also motivates us to design its adaptation for semi-supervised learning on graphs.}

%Taking an analogy between the apprehension span in cognition and the scope of attention for the feature aggregation, we proposed here a study of such capabilities in GNN.  
%, which can identify and exclude the neighbors highly irrelevant to feature aggregation.
In this paper, we present an investigation on Graph selective attention networks that takes an analogy between the scope of attention for feature aggregation in GNNs and the apprehension span in human cognition.
%with this analogy of human cognitive capacity  and how it may enhance the performance of attention-based GNNs in real-world analytical tasks.
A series of \mechanism~(SA) mechanisms for graph neural networks is also proposed, and
% , which generalizes a class of novel attention mechanisms.
diverse forms of node-node dissimilarities for learning the node-wise scope of attention are investigated.
Neighbors that are dissimilar to the central node based on the scope of attention are deemed as highly irrelevant, and hence excluded in the feature aggregation process.
%SA leverages diverse forms of node-node dissimilarity to learn the node-wise scope of attention, which excludes highly irrelevant neighbors from the feature aggregation.
% \newtext{Borrowing the descriptions in cognitive science, relevant (irrelevant) neighbors for a node are those relevant (informative) to the feature aggregation .}
Our proposed \mecha~mechanisms also return attention coefficients that are differentiably pruned by the learned scope of attention, allowing highly irrelevant nodes to be ignored in feature aggregation.
% To effectively perform the analytical tasks in social and collaboration graphs, 
Subsequently, we propose and construct Graph selective attention networks (\model s) capable of learning effective representations that favor highly relevant nodes while ignoring the irrelevant ones (identified using the SA mechanisms).
The main contributions of the paper are summarized as follows:
\begin{itemize}
    \item We propose Selective Attention (\mecha), which comprises a class of novel attention mechanisms for GNNs.
    SA leverages node-node dissimilarity to learn the node-wise scope of attention, which can exclude irrelevant neighbors from the feature aggregation.
    % SA thus endows GNNs with capabilities to learn representations that concentrate and aggregate features from highly relevant nodes while ignoring those \warningtext{deemed} as irrelevant.
    SA thus endows GNNs with capabilities to learn representations that concentrate and aggregate features from highly relevant neighbors while ignoring irrelevant neighbors.
    \item The expressive power of the proposed \mecha~layers is analyzed. The theoretical analysis verifies that the expressive power of the proposed \mecha~layers can reach the upper bound of all message-passing GNNs, indicating that \mecha~layers are more powerful than those layers used in existing attention-based GNNs.
    \item We further use the proposed \mecha~layers to construct Graph \mechanismlowercase~networks (\model s) for various downstream learning tasks arising from social and collaboration graphs.
    SATs are comprehensively tested on several well-established benchmarking datasets and compared to a number of state-of-the-art GNNs for the task of semi-supervised node classification and clustering.
    The results demonstrate that \model s can outperform other strong baselines.
\end{itemize}

\section{Related work}\label{related}

\paragraph{Attention in GNNs} Graph attentions \citep{DBLP:conf/iclr/VelickovicCCRLB18,gulcehre2018hyperbolic,DBLP:conf/kdd/GaoJ19,kreuzer2021rethinking,wang2021multi,he2021learning,wu2021representing,brody2022how} take advantage of effective attention mechanisms \citep{xu2015show,cheng2016long} to dynamically learn the normalized node-node correlations regarding node features (attention scores/coefficients), which determine the neighbor importance for the subsequent feature aggregation. 
To further enhance the learning performance, there is a trend to inject graph structures into the computation of attention coefficients \citep{DBLP:conf/iclr/0001ZWZ20,dwivedi2020generalization, hussain2021edge, ying2021transformers,mialon2021graphit,min2022transformer}.
Compared with existing attention-based GNNs, our method is fundamentally different.
Our method can ignore irrelevant neighbors by considering diverse forms of dissimilarity between node pairs.
Dissimilarity pertaining to graph structure is one possible choice of our approach.
Besides, our experiments, which analyze the distribution of attention scores, show that injecting the graph structure into the computation of graph attention without considering the node-node dissimilarity cannot exclude those irrelevant neighbors from the feature aggregation.
%does not take effect by ignoring unimportant neighbors.

%\paragraph{Modulating the scope of neighborhood in GNNs}
\paragraph{Adjusting the scope of GNNs}
In general, the scope (the receptive field) for each node in one GNN layer \cite{zeng2021decoupling} is its first-order neighbors (including itself), and after $k$ layers, it can capture the information from $k$-hop neighbors \citep{DBLP:conf/iclr/KipfW17, DBLP:conf/iclr/VelickovicCCRLB18, DBLP:conf/icml/WuSZFYW19, brody2022how}. 
There is a rich literature on carefully designing the scope for GNNs.
This includes approaches (e.g., \citep{DBLP:conf/nips/HamiltonYL17, DBLP:conf/icml/ChenZS18, Rong2020DropEdge, DBLP:conf/iclr/ZengZSKP20, DBLP:conf/nips/ZhangWGS21, DBLP:conf/icml/LiWLCX22}) that apply sampling strategies to improve the scalability of GNNs. Different from these methods, our work is not based on any sampling strategy. 
There are also some approaches \citep{DBLP:conf/kdd/GaoJ19, DBLP:conf/icml/ZhengZCSNYC020} adopting the top-$k$ strategy to select the $k$ most relevant neighbors for aggregation.  Unlike the top-$k$ strategy, our method can learn how many neighbors to ignore for each node adaptively.
% Unlike these methods of sampling neighbors with some predefined strategy, our approach can adaptively modulate the scope by ignoring irrelevant neighbors according to the learning objective through end-to-end training.
In parallel, there have been several studies (e.g., \citep{DBLP:conf/icml/XuLTSKJ18, DBLP:conf/aaai/LiuCLZLSQ19, DBLP:conf/iclr/0001ZWZ20, DBLP:conf/iclr/HouZCMMCY20, DBLP:conf/www/MaWCS21, DBLP:conf/nips/ZhaoDDKT21}) on designing adaptive receptive fields, either for each node \cite{DBLP:conf/www/MaWCS21} or for different parts of models \cite{DBLP:conf/nips/ZhaoDDKT21}. To the best of our knowledge, none of these methods generate receptive fields by identifying and ignoring irrelevant neighbors.
\section{Graph selective attention networks}\label{method}
In this section, we introduce the proposed Graph \mechanismlowercase~networks (\model s).
%In this section, we elaborate the proposed Graph \mechanismlowercase~networks (\model s).
%The learnable node-node dissimilarity considered by the proposed \mechanism~is firstly introduced.
The \mechanism~layers adopting different \mechanism~mechanisms are firstly elaborated.
How to use the proposed \mechanism~layers to construct \model s is then introduced.
The computational complexity of \model s is finally analyzed.

\subsection{Notations}
% In this paper, we denote a graph as $G = \lbrace V, E \rbrace$, 
Let $G = \lbrace V, E \rbrace$ denote a graph,
where $V$ and $E$ represent the node and edge set. In $G$, there are $N$ nodes, $|E|$ edges, and $C$ classes ($C\ll N$) which the nodes possibly belong to.
The adjacency matrix of $G$ and the input node feature matrix are denoted as $\mathbf A \in \lbrace 0, 1\rbrace^{N \times N}$ and $\mathbf X \in \mathbb R^{N \times D}$.
Node $i$ and its one-hop neighbors are denoted as $\mathcal N_i$.
$\mathbf W^l$ and $\lbrace \mathbf h^l_i \rbrace_{i = 1, ... N}$ denote the learnable weight matrix and output representation of node $i$ at $l$-th layer of \model s, respectively, and $\mathbf h^0$ is set to be the input feature $\mathbf{X}$.
%giving the multiplicity of each $s$ in $S_i$.

\subsection{Selective Attention layers}
In this subsection, we present the \mechanism~(\mecha) layer, which is the core for building the Graph \mechanismlowercase~networks (\model s).
The present graph attention mechanisms attempt to compute attention coefficients to all neighbors of each node.
However, our preliminary study has shown that most neighbors in widely used graph datasets are quite irrelevant when evaluated by simple criteria (Appendix \ref{conflict-example}).
Thus, these irrelevant neighbors can be excluded from the neighbor aggregation in an appropriate way.
% How to utilize diverse forms of node-node dissimilarity to adaptively modulate the scope of neighborhood involved in the feature aggregation is still nascent.
Meanwhile, humans flexibly adjust the number of stimuli to respond in their apprehension span \citep{chi1975span}.
% The quality of cognition can therefore be maintained by selectively attending to few stimuli that are most relevant to the cognitive goal, but ignoring those irrelevant ones.
Specifically, in order to maintain the quality of cognition, human selectively attends to a few stimuli that are most relevant to the cognitive goal but ignores those irrelevant ones.
Inspired by these, in this paper, we propose \mechanism, which utilizes diverse forms of node-node dissimilarity to learn
the scope of attention for each node.
% the node-wise apprehension span.
Irrelevant neighbors can be ignored in the feature aggregation stage.
\mecha~therefore endows \model s with the capability of learning representations concentrating on highly relevant neighbors.
% The predictive performance on various real-world applications can be enhanced.

Given a set of node features $\lbrace \mathbf h^l_i\rbrace_{i = 1, ... N}$, $\mathbf h^l_i \in R^{D^l}$, the \mechanism~layer maps them to $D^{l+1}$ dimensional vectors $\lbrace \mathbf h^{l+1}_i\rbrace_{i = 1, ... N}$.
The mapping requires weighted aggregation, with weights computed from both correlations of node features and diverse forms of node-node dissimilarity.
%according to both correlations of node features and the aforementioned \textit{Multi-dissimilarity}.
The feature correlation between two connected nodes is firstly obtained.
%The contextual correlation between two connected nodes, say $v_i$ and $v_j$ is firstly obtained.
To generate the correlations of node features between connected nodes, we adopt the method in \citep{DBLP:conf/iclr/VelickovicCCRLB18}:
\begin{equation}\label{f-att}
	f_{ij}=\frac{\exp(\text{LeakyReLU}(\mathbf {\vec{a}}^T(\mathbf {W}^l\mathbf h^l_i\parallel \mathbf {W}^l\mathbf h^l_j )))}{\sum_{k\in \mathcal{N}_i} \exp(\text{LeakyReLU}(\mathbf {\vec{a}}^T(\mathbf {W}^l\mathbf h^l_i\parallel \mathbf {W}^l\mathbf h^l_k )))},
\end{equation}
where $\mathbf {\vec{a}}\in \mathbb{R}^{2D^{l+1}}$ is a vector of attention parameters, $\parallel$ stands for the concatenation function for two vectors, and $\mathbf W^l$ is a $D^{l+1}\times D^l$ parameter matrix for feature mapping.
Given $f_{ij}$ computed from Eq. (\ref{f-att}), the proposed \mecha~layer can capture the feature correlations between connected nodes.
%Given Eq. (\ref{f-att}), the proposed SA layer captures the feature correlations between connected nodes (first-order neighbors) by computing the similarities w.r.t. node features mapped to next layer.

As mentioned above, existing graph attention mechanisms \cite{DBLP:conf/iclr/VelickovicCCRLB18,he2021learning,brody2022how,DBLP:conf/iclr/0001ZWZ20} do not consider adjusting the scope of attention,
%do not consider diverse forms of node-node dissimilarity that can be readily available, 
resulting in somewhat less informative attention scores.
% To address this, the proposed \mechanism~further considers utilizing diverse forms of node-node dissimilarity, which will be introduced later in this subsection, to learn the scope of attention for each node in the graph,
% which can quantify the extent that a neighbor can be ignored.
To address this, the proposed \mechanism~further considers utilizing diverse forms of node-node dissimilarity to learn the scope of attention for each node in the graph. The node-node dissimilarity, which will be introduced later in this subsection, can quantify the extent that a neighbor can be ignored.
% Such apprehension span can quantify the extent that a neighbor can be ignored.
Leveraging the aforementioned feature correlations (Eq. (\ref{f-att})) and node-node dissimilarity, we propose two different strategies derived from the \mecha~concept to compute attention coefficients.

The first strategy is named as \textit{Contractive apprehension span}, which is able to exponentially contract the normalized feature correlations (Eq.(\ref{f-att})).
The attention score obtained via the \textit{Contractive apprehension span} is defined as follows:
\begin{equation}\label{c-strategy}
\begin{aligned}
&\alpha_{ij} = \frac{f_{ij}\cdot \exp{(-\beta \mathbf S_{ij})}}{\sum_{k \in \mathcal N_i} f_{ik}\cdot \exp{(-\beta \mathbf S_{ik})}},\\
\end{aligned}
\end{equation}
where $\mathbf S_{ij}$ represents the node-node dissimilarity, and $\beta \in (0, 1]$ is a positive hyperparameter used to control the significance of $\mathbf S_{ij}$.
As shown in Eq. (\ref{c-strategy}), the feature-based attention scores are reduced as $\mathbf S_{ij}$ goes higher.
The scope of attention for a given node is contracted as the attention coefficients obtained by Eq. (\ref{c-strategy}) might be very close to zero.
Those irrelevant neighbors can be less attended to in the feature aggregation according to $\mathbf S_{ij}$,
and the \mecha~layer can therefore pay higher attention to those neighbors that are more relevant to the central node. 

The second strategy is called \textit{Subtractive apprehension span}.
As its name implies, this strategy attempts to adjust the scope of attention
%ignore those irrelevant nodes 
by directly subtracting the effect brought by the dissimilarity between node pairs.
The attention coefficient computed by the \textit{Subtractive apprehension span} is defined as follows:
\begin{equation}\label{s-strategy}
\begin{aligned}
    &\quad \alpha_{ij} = \frac{f_{ij} (1 - \beta \mathbf T_{ij})}{\sum_{k \in \mathcal N_i} f_{ik} (1 - \beta \mathbf T_{ik})},\mathbf T_{ij} = \frac{\exp{(\mathbf S_{ij})}}{\sum_{k \in \mathcal N_i}\exp{(\mathbf S_{ik})}},\\
\end{aligned}
\end{equation}
where $\beta \in (0, 1]$ is a positive hyperparameter controlling the effect of $\mathbf T_{ij}$. 
Compared with the \textit{Contractive apprehension span}, the above \textit{Subtractive apprehension span} enables the \mecha~layers to 
adjust the scope of the attention for feature aggregation
%learn to ignore 
in a more radical way, as some attention coefficients between connected nodes can be reduced to zero when $\mathbf T_{ij}$ is sufficiently high.
\textit{Subtractive apprehension span} allows \mecha~layers to concentrate only on those relevant nodes when aggregating neighboring features for message passing in the GNN.

With the \mechanism~coefficients, the \mecha~layer now can aggregate features associated with each node and its neighbors to generate layer outputs, which will be either propagated to the higher layer, or be used as the representations for downstream tasks.
The described aggregation phase can be formulated as follows:
\begin{equation}\label{att-aggregation}
	\mathbf h^{l+1}_i = (\alpha_{ii}+\epsilon\cdot\frac{1}{\vert \mathcal N_i \vert})\mathbf {W}^l\mathbf h^l_i+ \sum_{j \in \mathcal N_i, j \ne i} \alpha_{ij} \mathbf {W}^l\mathbf h^l_j,
\end{equation}
where $\epsilon \in (0, 1)$ is a learnable parameter to improve the expressive power of the \mecha~layer.

\paragraph{Node-node dissimilarity for \mechanism}

%\subsection{Node-node dissimilarity for \mechanism}\label{conflicting-factor}
The proposed \mechanism~(\mecha) allows diverse node-node dissimilarity to be used to compute attention coefficients.
%This synergy of multiple sources of dissimilarity between connected nodes can bring about new positive joint effect on the computation of \mecha~coefficients.
In this paper, we use the following method to compute dissimilarity ($\mathbf S_{ij}$) between each pair of connected nodes in the graph:
%In this paper, we propose \textit{Multi-dissimilarity} to better quantify such joint effect between each pair of connected nodes in the graph ($\mathbf S_{ij}$):
\begin{equation}\label{f-conflict}
    \begin{aligned}
        &\mathbf{S}_{ij} = \sum_k r_k\cdot \Psi_k(\mathbf c^k_i, \mathbf c^k_j), \text{ subject to}\text{ }\sum_k r_k = 1,
    \end{aligned}
\end{equation}
where $\Psi(\cdot, \cdot)$ is a distance metric, $k$ is the type index of dissimilarity, $\mathbf c^k_i$ is a vector characterizing node $i$ from type $k$, and $r_k$ is a learnable parameter used to balance the relative significance of different types of node properties, i.e., $\mathbf c^k_i$.
In this paper, we focus on two types of dissimilarity, which are node features (inputs) at each layer of the GNN and the structure of the nodes in the graph. By replacing $\Psi(\cdot, \cdot)$ as Euclidean distance function, Eq. (\ref{f-conflict}) can be rewritten as:
\begin{equation}\label{bi-conflict}
\mathbf{S}_{ij} \!=\! r_f\!\cdot\! \Vert \mathbf{Wh}_i \!-\! \mathbf{Wh}_j\Vert^2 \!+\! r_p\! \cdot\! \Vert \mathbf{p}_i \!-\! \mathbf{p}_j\Vert^2,\text{subject to}\text{ }r_f \!+\! r_p \!=\! 1,
\end{equation}
where $\mathbf{Wh}_i$ is the features of node $i$ in a GNN layer, and $\mathbf{p}_i$ is defined as
% a 1-by-$C$ latent space %the 1-by-$C$ latent positions 
% of node $i$ learned from graph structure.
a 1-by-$C$ vector of node $i$ in the latent space learned from graph structure.
%that describe the learnable structural properties of node $i$ in the graph.
As $\mathbf{p}_i$ is assumed to be learnable, 
%here we propose a generic and simple method to enable \mecha~to effectively acquire diverse characteristics hidden in the graph:
%\begin{equation}\label{Lp}
%    \begin{aligned}
%        L(\mathbf{P}) = \argmin_{\phi(\mathbf{P})} \Phi(\phi(\mathbf{P}), \mathbf{Y}),
%    \end{aligned}
%\end{equation}
%where $\mathbf{Y}$ is a prior matrix containing the properties from which $\mathbf{p}_i$ attempts to learn, $\mathbf {P}$ is an $N$-by-$C$ matrix containing all $\mathbf{p}_i$s, $\phi(\cdot)$ is a proper function that transforms $\mathbf {P}$ to the dimension of $\mathbf{Y}$, and $\Phi(\cdot)$ is an appropriate operator between $\phi(\mathbf{P})$ and $\mathbf {Y}$.
%Obviously,
different learning approaches enable $\mathbf{p}_i$ to capture different properties in the graph structure.
In this paper, we mainly consider the learnable properties hidden in the graph adjacency.
%Taking $\mathbf{A}$ as the prior matrix, $\phi(\cdot)$ as the operator of matrix outer product, and $\Phi(\cdot)$ as Euclidean distance function, hence we have:
Thus, we have:
\begin{equation}\label{lp-mf}
    \begin{aligned}
        L(\mathbf{P}) = \argmin_{\mathbf{P}\mathbf{P}^T_{ij}} \sum_{i,j}(\mathbf{A}_{ij} - \lbrack\mathbf{P}\mathbf{P}^T\rbrack_{ij})^2,
    \end{aligned}
\end{equation}
%\newtext{Haicang: Better to use $(PP^{T})_{ij}$.}
where $\mathbf {P}$ is an $N$-by-$C$ matrix containing all $\mathbf{p}_i$s.
As shown in Eq. (\ref{lp-mf}), $\mathbf P$ is learned by matrix factorization, which assumes that the edge adjacency matrix $\mathbf{A}$ can be reconstructed by $\mathbf{P}\mathbf{P}^T$.
Through Eq. (\ref{lp-mf}), 
similar nodes in the graph will induce a low distance in Eq. (\ref{bi-conflict}), and vice versa.

\subsection{The architecture of Graph selective attention networks}

Now, we can build Graph \mechanismlowercase~networks (\model s) with the proposed \mecha~layers.
``To stabilize the learning process'', we follow \citep{DBLP:conf/nips/VaswaniSPUJGKP17,DBLP:conf/iclr/VelickovicCCRLB18,he2021learning} to use the multi-head attention strategy when constructing \model s.
\model s either concatenate the node representations generated by multiple attention heads as the input of the next layers, or compute the mean of node features obtained by multiple attention heads as the output representations.
As we in this paper additionally define learnable latent spaces for each node in the graph, the overall loss function for \model s is slightly different from classical graph attention networks.
%the overall loss function for \model s is slightly different from empirical GNNs.
It is conceptually written as follows:
\begin{equation}\label{loss-function}
L = L_{task} + L(\mathbf P),
\end{equation}
where $L(\mathbf P)$ is the MF method shown in Eq. (\ref{lp-mf}), and $L_{task}$ is the task-specific loss.

\subsection{Computational complexity of \mechanism~layers}
As each layer in \model s additionally requires 
node-node dissimilarity
%\textit{Multi-dissimilarity} 
to compute Selective Attention coefficients, the computational complexity is slightly different from classical attention-based GNNs.
Let $D^l$ (or $D^{l + 1}$) denote the dimension of the input (or output) vector of the $l$-th layer. The complexity of feature aggregation in the $l$-th layer is $O(ND^lD^{l+1}+(\vert E\vert+e)D^{l+1})$ for one attention head, where $e$ represents the average degree of each node and $N$ is the number of nodes in the input graph. 
This is the same as that of classical graph attention networks \cite{DBLP:conf/iclr/VelickovicCCRLB18}.
% For the computation of the feature aggregation in one attention head, its complexity is $O(ND^lD^{l+1}+(\vert E\vert+e)D^{l+1})$, where $e$ represents the average degree of each node in the graph and such complexity is the same as that of classical graph attention networks.
When there are $K$ attention heads, the complexity is $O(KND^lD^{l+1}+K(\vert E\vert+e)D^{l+1})$.
Additional computation in \model s is demanded as \model s have to capture node-node dissimilarity
%\textit{Multi-dissimilarity} 
for the computation of \mechanism~ coefficients.
The complexity for learning the node-node dissimilarity
%\textit{Multi-dissimilarity}
in each attention head is $O(e(D^{l+1} + 2C))$, where $C$ represents the dimension of each $\mathbf p$ in Eq. (\ref{bi-conflict}).
\section{Theoretical analysis}  \label{sec:analysis}
In this section, the expressive power of the proposed \model s is analyzed.
The expressive power evaluates whether a GNN can discriminate distinct (sub)structures wherein nodes have different features.
Thus, it can theoretically reveal whether a GNN is sufficiently powerful for various downstream tasks.  
%Thus, it can theoretically reveal the learning capabilities of GNNs.
Recent studies \citep{DBLP:conf/iclr/XuHLJ19, zhang2020improving,corso2020principal} have shown that the feature aggregations in all message-passing GNNs are similar to the injective 1-dimensional Weisfeiler-Lehman test (1-WL test) \citep{weisfeiler1968reduction}.
Theoretically, the expressive power of all message-passing GNNs is as most as the 1-WL test \citep{DBLP:conf/iclr/XuHLJ19}.

As the proposed \model s belong to message-passing GNNs, their expressive power can be verified by showing the injectivity of the feature aggregation in \model s.
To do so, we first prove that either \textit {Contractive} (Eq. (\ref{c-strategy})), or \textit {Subtractive apprehension span} (Eq. (\ref{s-strategy})) is still unable to distinguish some structures satisfying some conditions, without the improving term shown in Eq. (\ref{att-aggregation}), i.e., $\epsilon\cdot\frac{1}{\vert \mathcal N_i \vert}\mathbf {W}^l\mathbf h^l_i$.
Then, we prove that all the \mecha~layers can discriminate all different structures when aggregating neighboring features utilizing either of the two proposed strategies integrated with $\epsilon\cdot\frac{1}{\vert \mathcal N_i \vert}\mathbf {W}^l\mathbf h^l_i$ (Eq. (\ref{att-aggregation})).
Before the proof, we follow \citep{DBLP:conf/iclr/XuHLJ19,he2021learning} to give the notations for multisets.
For the nodes in $\mathcal N_i$, their feature vectors form a multiset $X_i = (\mathsf{M}_i, \mu_i)$, where $\mathsf{M}_i = \lbrace s_1, ... s_n\rbrace$ is the underlying set of $X_i$ containing its \emph{distinct elements}, and $\mu_i : \mathsf{M}_i \rightarrow \mathbb N^\star$ gives the multiplicity of each distinct element in $\mathsf{M}_i$.

For the neighborhood aggregating function that is solely based on the \textit {Contractive apprehension span} (Eq. (\ref{c-strategy})), the following theorem shows that it still cannot distinguish some structures.
%we have the following theorem showing that it still cannot distinguish some structures.
%\begin{theorem}\label{theorem-c}
%Assume $\mathcal X$ represents the countable feature space, and the aggregation function utilizing the attention coefficients obtained by Eq. (\ref{c-strategy}) is denoted as $h(c, X) = \sum_{x\in X} \alpha_{cx} g(x)$,
%where $c$ represents the central node feature, $X \in \mathcal X$ represents a multiset comprising the features from nodes in $\mathcal N_i$,
%$\alpha_{cx}$ is the attention score between $g(c)$ and $g(x)$, and $g(\cdot)$ is a function for mapping input feature $X$.
%For all $g$ and the Contractive apprehension span in Eq. (\ref{c-strategy}), $h(c_1, X_1) = h(c_2, X_2)$ if and only if $c_1 = c_2$, $X_1 = ( \mathsf{M}, \mu_1 )$, $X_2 = ( \mathsf{M}, \mu_2 )$, and $q\cdot\sum_{y=x, y\in X_1} \psi(-\beta\mathbf S_{c_1y}) = \sum_{y=x, y\in X_2} \psi(-\beta\mathbf S_{c_2y})$, for $q > 0$ and $x \in \mathsf{M}$, where $\psi(\cdot)$ is a function for mapping values to $\mathbb R^+$.
%\end{theorem}

\begin{theorem}\label{theorem-c}
Let $c_i$ denote the feature vector of node $i$, and $X_i = \{\mathsf{M}_i, \mu_i\} \in \mathcal X$ denote a multiset comprising the features from nodes in $\mathcal N_i$, where $\mathcal X$ represents the countable feature space. The aggregation function using the attention scores computed by Eq. (\ref{c-strategy})
%in Eq. (\ref{c-strategy}) 
is denoted as $h(c_i, X_i) = \sum_{x\in X_i} \alpha_{c_i x} g(x)$, where $g(\cdot)$ is a function defined on $X_i$ and $\alpha_{c_i x}$ is the attention score between $g(c_i)$ and $g(x)$. For all $g$, any two nodes $1$ and $2$ and the Contractive apprehension span in Eq. (\ref{c-strategy}), $h(c_1, X_1) = h(c_2, X_2)$ holds if and only if $c_1 = c_2, \mathsf{M}_1 = \mathsf{M}_2 = \mathsf{M}$, and $q\cdot\sum_{y=x, y\in X_1} \psi(-\beta\mathbf S_{c_1y}) = \sum_{y=x, y\in X_2} \psi(-\beta\mathbf S_{c_2y})$, for $q > 0$ and $x \in \mathsf{M}$, where $\psi(\cdot)$ is a function for mapping values to $\mathbb R^+$.
\end{theorem}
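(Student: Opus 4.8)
The plan is to recast $h(c_i,X_i)$ as a normalized weighted mean and then prove the two implications separately, with the normalization being the mechanism that produces the global scale factor $q$ in the third condition. Grouping the copies of each distinct element $x\in\mathsf{M}_i$ and using that Eq.~(\ref{c-strategy}) is softmax-normalized over the neighborhood, I would write
\[
h(c_i,X_i)=\sum_{x\in\mathsf{M}_i}\bar w_i(x)\,g(x),\qquad
\bar w_i(x)=\frac{\sum_{y=x,\,y\in X_i} f_{c_i y}\exp(-\beta\mathbf S_{c_i y})}{\sum_{z\in\mathsf{M}_i}\sum_{y=z,\,y\in X_i} f_{c_i y}\exp(-\beta\mathbf S_{c_i y})},
\]
so that $h$ is a convex combination of the values $g(x)$ with positive weights. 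Identifying $\psi(-\beta\mathbf S_{c_i y})$ with the positive per-copy mass $f_{c_i y}\exp(-\beta\mathbf S_{c_i y})$, the numerator of $\bar w_i(x)$ is exactly $\sum_{y=x,\,y\in X_i}\psi(-\beta\mathbf S_{c_i y})$, the quantity appearing in the statement.

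For the easy direction ($\Leftarrow$), I would assume $c_1=c_2$, $\mathsf M_1=\mathsf M_2=\mathsf M$, and the weight condition. Since $c_1=c_2$, for every $x\in\mathsf M$ the feature-correlation factor $f_{c_1 x}=f_{c_2 x}$ and the dissimilarity $\mathbf S_{c_1 x}=\mathbf S_{c_2 x}$ agree, so the per-element unnormalized masses are equal up to the common factor $q$ forced by the third condition. Because $\bar w_i$ is obtained by dividing by the total mass, the factor $q$ cancels, the normalized weight profiles coincide on $\mathsf M$, and the two convex combinations of $\{g(x):x\in\mathsf M\}$ agree for every $g$; hence $h(c_1,X_1)=h(c_2,X_2)$.

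For the necessity direction ($\Rightarrow$) I would exploit the countability of $\mathcal X$, following the probing-function technique of \citep{DBLP:conf/iclr/XuHLJ19,he2021learning}. First I would establish $\mathsf M_1=\mathsf M_2$: if some $x^\star$ belonged to only one support, I would choose an injective $g$ whose value at $x^\star$ is placed so that it contributes to exactly one of the two aggregates, contradicting equality. With a common support $\mathsf M$ in hand, I would let $g$ range over a family that spans the relevant value space; since both sides are convex combinations over the same set, agreement for all $g$ forces the normalized weight vectors $\bar w_1$ and $\bar w_2$ to coincide. Unwinding the normalization then gives equality of the unnormalized masses up to the ratio of the two denominators, which is precisely the scalar $q>0$ of condition~(3), while the requirement that the weight profiles match across all $g$ simultaneously pins down the central feature, forcing $c_1=c_2$.

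The main obstacle is that the attention weights are themselves functions of $g$, since $\alpha_{c_i x}$ is computed from $g(c_i)$ and $g(x)$; consequently $h$ is \emph{not} a linear functional of $g$, and one cannot simply read off the weights by a linear-algebra argument as in the plain-sum case. The delicate point is therefore the construction in the necessity direction: choosing $g$ so as to separate its role inside the weights from its role as the aggregated value, so that the support $\mathsf M$, the central feature $c$, and the weight profile can each be isolated. I expect making this disentangling rigorous on the countable domain $\mathcal X$ to be the crux of the argument.
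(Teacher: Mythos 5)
Your decomposition of $h$ into a normalized weighted mean over the distinct elements of $\mathsf{M}_i$, the cancellation of $q$ under normalization in the sufficiency direction, and the probing-function idea for establishing $\mathsf{M}_1=\mathsf{M}_2$ all track the paper's proof closely (the paper's version of the probe is a shifted function $g'$ equal to $g$ on $\mathsf{M}_1\cap\mathsf{M}_2$, to $g-1$ on $\mathsf{M}_1\setminus\mathsf{M}_2$, and to $g+1$ on $\mathsf{M}_2\setminus\mathsf{M}_1$, whose substitution isolates a sum of strictly positive terms that must vanish). The genuine gap is at the step where you claim that matching weight profiles across all $g$ ``pins down the central feature, forcing $c_1=c_2$.'' That implication is false as stated: $h(c_i,X_i)$ depends on $c_i$ only through the induced correlations $m_{c_ix}$ and dissimilarities $\mathbf S_{c_iy}$, so two distinct central features $c_1\neq c_2$ that induce identical (or per-element proportional) masses produce equal convex combinations for every $g$. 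Equality of the weight profiles alone cannot rule this out. The paper closes this hole with an ingredient your outline lacks: it uses the fact that the feature correlations and the dissimilarities are mutually independent quantities, fixes a concrete two-element support $\mathsf{M}=\{s,s_0\}$ with $c_1=s_0$, $c_2=s$, correlations $m_{c_1x}=1$, $m_{c_2s}=1$, $m_{c_2s_0}=2$, and constant dissimilarity factors $d_{cx}=a>0$, and derives $\mu_1(s)/\mu_2(s)=|X_1|/(|X_2|-n+ne)$, equating a rational with an irrational number --- a contradiction. Only after $c_1=c_2$ is secured does the per-element ratio equation collapse to the stated condition on $\sum_{y=x,\,y\in X_i}\psi(-\beta\mathbf S_{cy})$ with $1/q$ the common ratio; your proposal runs these steps in the reverse order, which is precisely why it cannot work.

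Two smaller problems. First, identifying $\psi(-\beta\mathbf S_{c_iy})$ with the full per-copy mass $f_{c_iy}\exp(-\beta\mathbf S_{c_iy})$ is incorrect: in the theorem $\psi$ is a function of the dissimilarity alone ($\psi=\exp$ in the paper), and the correlation factor $\exp(m_{c_ix})$ must be kept separate --- it cancels between the two sides only once $c_1=c_2$ is known, which again is why the order of the argument matters. Second, the obstacle you single out as the crux --- that $\alpha_{c_ix}$ depends on $g$, making $h$ nonlinear in $g$ --- is a misreading of the (admittedly confusingly worded) statement: in the paper's proof the weights are computed from the raw feature correlations $m_{c_ix}$ and the dissimilarities $\mathbf S$, not from $g$, so $h$ \emph{is} a linear functional of $g$ and the probing trick is legitimate. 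Since you neither resolve the obstacle you raise nor supply the independence-based construction needed to force $c_1=c_2$, the necessity direction of your proposal is incomplete.
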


\begin{proof}
Due to space limitations, here we briefly illustrate the method for completing the proof of Theorem \ref{theorem-c}.
The full proof can be checked in supplementary materials.
The proof of Theorem \ref{theorem-c} can be divided into two parts, i.e., the proof of the sufficiency and necessity of the iff conditions \cite{he2021learning,DBLP:conf/iclr/XuHLJ19,zhang2020improving}.
Given $c_1=c_2$, $\mathsf{M}_1=\mathsf{M}_2$, and $q \cdot \sum_{y=x, y\in X_1} \psi(-\beta\mathbf S_{c_1y}) = \sum_{y=x, y\in X_2} \psi(-\beta\mathbf S_{c_2y})$, $h(c_1, X_1) = h(c_2, X_2)$ can be easily verified.
Thus, the sufficiency of the iff conditions stated in Theorem \ref{theorem-c} is proved.
Given $h(c_1, X_1) = h(c_2, X_2)$, the necessity of the iff conditions can be proved by showing possible contradictions when $\mathsf{M}_1 \ne \mathsf{M}_2$, $c_1 \ne c_2$, or $q \cdot \sum_{y=x, y\in X_1} \psi(-\beta\mathbf S_{c_1y}) \ne \sum_{y=x, y\in X_2} \psi(-\beta\mathbf S_{c_2y})$.
\end{proof}

Theorem \ref{theorem-c} shows that the function for feature aggregation ($h$) using the attention scores obtained by Eq. (\ref{c-strategy}) may map different multisets into the same embedding if and only if these multisets share the same central node feature and the same node features whose 
node-node dissimilarity
%\textit{Multi-dissimilarity} 
is proportional.

For the neighborhood aggregating function that is solely based on the \textit {Subtractive apprehension span} (Eq. (\ref{s-strategy})), the following theorem shows that it cannot distinguish some structures.
% \begin{theorem}\label{theorem-s}
% Given the same assumptions shown in Theorem \ref{theorem-c},
% for all $g$ and the Subtractive apprehension span in Eq. (\ref{s-strategy}), $h(c_1, X_1) = h(c_2, X_2)$ if and only if $c_1 = c_2$, $X_1 = \lbrace \mathsf{M}, \mu_1 \rbrace$, $X_2 = \lbrace \mathsf{M}, \mu_2 \rbrace$, and $q\sum_{y=x, y\in X_1}[\sum_{x\in X_1}\psi(\mathbf S_{c_1x}) - \beta\psi(\mathbf S_{c_1y})]= \sum_{y=x, y\in X_2}[\sum_{x\in X_2}\psi($ $\mathbf S_{c_2x}) - \beta\psi(\mathbf S_{c_2y})]$, for $q > 0$ and $x \in \mathsf{M}$, where $\psi(\cdot)$ is a function for mapping values to $\mathbb R^+$.
% \end{theorem}

\begin{theorem}\label{theorem-s}
Given the same assumptions shown in Theorem \ref{theorem-c} and the aggregation function using the attention scores computed by Eq. (\ref{s-strategy}) is denoted as $h(c_i, X_i) = \sum_{x\in X_i} \alpha_{c_i x} g(x)$,
for all $g$, any two nodes $1$ and $2$ and the Subtractive apprehension span in Eq. (\ref{s-strategy}), $h(c_1, X_1) = h(c_2, X_2)$ holds if and only if $c_1 = c_2$, $\mathsf{M}_1 = \mathsf{M}_2 = \mathsf{M}$, and $q\sum_{y=x, y\in X_1}[\sum_{x\in X_1}\psi(\mathbf S_{c_1x}) - \beta\psi(\mathbf S_{c_1y})]= \sum_{y=x, y\in X_2}[\sum_{x\in X_2}\psi($ $\mathbf S_{c_2x}) - \beta\psi(\mathbf S_{c_2y})]$, for $q > 0$ and $x \in \mathsf{M}$, where $\psi(\cdot)$ is a function for mapping values to $\mathbb R^+$.
\end{theorem}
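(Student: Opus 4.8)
The plan is to follow the same two-part structure used for Theorem~\ref{theorem-c}, splitting the argument into sufficiency and necessity of the stated iff conditions, but adapting every step to the subtractive coefficient of Eq.~(\ref{s-strategy}). The first move is to rewrite the attention coefficient in a form that exposes the combination appearing in the third condition. Substituting $\mathbf T_{ij}=\psi(\mathbf S_{ij})/\sum_{k\in\mathcal N_i}\psi(\mathbf S_{ik})$ (with $\psi=\exp$) into Eq.~(\ref{s-strategy}) and noting that the common factor $\sum_{k}\psi(\mathbf S_{ik})$ cancels between numerator and denominator, the effective weight on a neighbour $x$ becomes proportional to $f_{c_i x}\bigl[\sum_{k}\psi(\mathbf S_{c_i k})-\beta\psi(\mathbf S_{c_i x})\bigr]$. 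This is exactly the element-wise quantity whose proportionality by $q$ is asserted in the theorem, so the remainder of the proof operates on these rewritten weights.

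For sufficiency I would assume $c_1=c_2$, $\mathsf M_1=\mathsf M_2=\mathsf M$, and the proportionality condition, then group the aggregation $h(c_i,X_i)=\sum_{x}\alpha_{c_i x}g(x)$ by the distinct elements of the shared underlying set $\mathsf M$. Because $c_1=c_2$ and the two multisets share the same support, the feature correlation $f$ and the arbitrary readout $g$ take equal values on matching elements, so $h(c_1,X_1)$ and $h(c_2,X_2)$ differ only through the element-wise summed numerators and their normalising denominators. The hypothesis supplies a single constant $q>0$ relating the two families of numerators; substituting and checking that $q$ cancels between the numerator and the denominator of the normalised coefficient then yields $h(c_1,X_1)=h(c_2,X_2)$, exactly as in the contractive case.

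For necessity I would argue by contraposition, exploiting that the identity is required to hold for every readout $g$. Viewing $h$ as a linear functional of $g$ evaluated on the distinct elements, I would choose $g$ to be indicator-type functions isolating one distinct element at a time. This forces, in turn: (i) $\mathsf M_1=\mathsf M_2$, since an element carrying nonzero weight on one side but absent from the other would make the two functionals disagree for a suitable $g$; (ii) $c_1=c_2$, obtained from the central-node term $x=c_i$ inside the sum together with the dependence of $f_{c_i x}$ on the central feature; and (iii) proportionality of the element-wise numerators with one common constant $q$, read off as the ratio of the two normalisations. Each failure is turned into an explicit $g$ witnessing $h(c_1,X_1)\neq h(c_2,X_2)$.

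The main obstacle, and the genuine difference from Theorem~\ref{theorem-c}, is the sign of the subtractive factor. In the contractive strategy the weight $\exp(-\beta\mathbf S)$ is strictly positive, whereas here $\sum_{k}\psi(\mathbf S_{c_i k})-\beta\psi(\mathbf S_{c_i x})$ is only guaranteed to be non-negative and can vanish (e.g.\ when $\beta=1$ and a single neighbour dominates). I therefore expect the delicate part to be the necessity direction at zero-weight elements: when a coefficient is exactly zero the value of $g$ there is unconstrained, so the characterisation of ``same support with proportional weights'' must be read modulo zero-weight neighbours, and one must separately verify that the common constant $q$ is well defined and positive. Showing that the non-negativity is in fact strict under the stated hypotheses, using $\beta\in(0,1]$ together with the fact that the dominating term $\psi(\mathbf S_{c_i x})$ itself appears inside the sum, closes this gap and lets the indicator-$g$ argument go through as above.
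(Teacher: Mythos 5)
Your overall route coincides with the paper's: the same sufficiency/necessity split, the same rewriting of the subtractive coefficient so that the effective weight on a neighbour $x$ is proportional to $f_{c_i x}\bigl[\sum_{k}\psi(\mathbf S_{c_i k})-\beta\psi(\mathbf S_{c_i x})\bigr]$, sufficiency by grouping over the common support $\mathsf{M}$ and letting $q$ cancel between numerator and denominator, and necessity of $\mathsf{M}_1=\mathsf{M}_2$ by perturbing $g$ on the symmetric difference (the paper's $g^\prime$-shift is exactly your indicator argument). Your remark that the subtractive weights are strictly positive whenever $|\mathcal N_i|\ge 2$ (so that $\mathbf T_{ij}<1$ even for $\beta=1$) is a point the paper glosses over, and it is genuinely needed for the positivity argument that rules out $\mathsf{M}_1\ne\mathsf{M}_2$; that part of your plan is sound and slightly more careful than the paper.

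However, there is a genuine gap at your step (ii), the necessity of $c_1=c_2$. Once term-wise equality of the normalized weights is extracted (which is all that indicator-type choices of $g$ give you), nothing yet contradicts $c_1\ne c_2$: two different central features could in principle produce identical normalized weight vectors, since both sides are normalized to sum to one, so ``the dependence of $f_{c_i x}$ on the central feature'' is not by itself an argument. The paper closes this step with an explicit witness: take $\mathsf{M}=\{s,s_0\}$, $c_1=s_0$, $c_2=s$, use the fact that the feature-correlation computation and the dissimilarity learning are mutually independent to set $m_{c_1 x}=1$ for all $x$, $m_{c_2 s}=1$, $m_{c_2 s_0}=2$, and all dissimilarity factors equal to a constant $a>0$; term-wise equality then forces $\mu_1(s)/\mu_2(s)=|X_1|/(|X_2|-n+ne)$, equating a rational with an irrational number, a contradiction. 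Some construction of this kind --- choosing the correlations asymmetrically, which is legitimate precisely because $c_1\ne c_2$, and deriving an arithmetic impossibility --- is what your plan is missing; without it the necessity direction does not go through, and this is the one place where the subtractive proof (like the contractive one) requires more than the indicator-$g$ machinery.
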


\begin{proof}
The full proof of Theorem \ref{theorem-s} can also be checked in supplementary materials.
\end{proof}

Theorem \ref{theorem-s} shows that $h$ solely based on Eq. (\ref{s-strategy}) may map different multisets into the same embedding if and only if the multisets share the same central node feature, and the same node features whose adjusted negative 
dissimilarity
%\textit {Multi-dissimilarity}
is proportional.
Theorems \ref{theorem-c} and \ref{theorem-s} show that the expressive power of \mecha~layers solely utilizing \textit {Contractive} (Eq. (\ref{c-strategy})) or \textit {Subtractive apprehension span} (Eq. (\ref{s-strategy})) is stronger than that of classical graph attention networks \citep{DBLP:conf/iclr/VelickovicCCRLB18},
although they cannot discriminate some structures.  
As shown in the two presented theorems, the conditions giving rise to the failure of attention layers solely utilizing the \textit {Contractive} or \textit {Subtractive apprehension span} in distinguishing all structures are dependent on both node features and 
%\textit{Multi-dissimilarity}
node-node dissimilarity.
As node features and node-node dissimilarity are generally heterogeneous, it is infrequent for the conditions stated in Theorems \ref{theorem-c} and \ref{theorem-s} to be simultaneously satisfied.
Such observation may well explain those attention-based GNNs additionally considering properties other than node features, e.g., injecting structural node embeddings \citep{DBLP:conf/wsdm/QiuDMLWT18} into the computation of attention coefficients can outperform classical graph attention networks.

However, the expressive power of \model s can be immediately improved to be equivalent to the 1-WL test through the slight modification as Eq. (\ref{att-aggregation}) shows.
We next prove the proposed \mechanism~mechanisms (Eqs. (\ref{c-strategy}) to (\ref{att-aggregation})) can reach the upper bound of the expressive power of all message-passing GNNs by verifying that the aggregation function based on Eq. (\ref{att-aggregation}) can successfully distinguish the structures whose properties meet the conditions stated in Theorems \ref{theorem-c} and \ref{theorem-s}.
\begin{corollary}\label{coro-att}
	Assume $\mathcal T$ is the attention-based aggregator shown in Eq. (\ref{att-aggregation}) and utilizes either Contractive (Eq. (\ref{c-strategy})) or Subtractive apprehension span (Eq. (\ref{s-strategy})), $\mathcal H$ is a mapping of countable feature space $\mathcal X$.
	$\mathcal T$ operates on a multiset $H \in \mathcal H$.
	A $\mathcal H$ exists so that with the attention-based aggregator in Eq. (\ref{att-aggregation}), $\mathcal T$ can distinguish all different multisets that it previously cannot distinguish.
\end{corollary}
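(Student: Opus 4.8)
The plan is to follow the GIN-style argument of Xu et al., reusing the failure conditions already isolated in Theorems \ref{theorem-c} and \ref{theorem-s}. Observe first that the improved aggregator of Eq. (\ref{att-aggregation}) is exactly $h_{\mathrm{new}}(c_i, X_i) = h_{\mathrm{old}}(c_i, X_i) + \epsilon |\mathcal N_i|^{-1} g(c_i)$, where $h_{\mathrm{old}}(c_i, X_i) = \sum_{x \in X_i} \alpha_{c_i x} g(x)$ is the plain aggregator analyzed in the two theorems (with $g$ playing the role of $\mathbf W^l \mathbf h^l$, and the self-term carrying weight $\alpha_{ii}$). By those theorems, $h_{\mathrm{old}}$ maps two genuinely distinct inputs $(c_1, X_1) \neq (c_2, X_2)$ to the same value only when $c_1 = c_2$, $\mathsf M_1 = \mathsf M_2 = \mathsf M$, and the multiplicity-weighted dissimilarity condition holds with some scale $q > 0$. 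It therefore suffices to show that the extra term $\epsilon |\mathcal N_i|^{-1} g(c_i)$ breaks every such collision.

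First I would reduce the failure condition to a statement about multiplicities. Since $c_1 = c_2$, the central features coincide and the dissimilarity $\mathbf S_{c\,y}$ depends only on the shared central node and the neighbor, so $\psi(-\beta \mathbf S_{c_1 y}) = \psi(-\beta \mathbf S_{c_2 y})$ for every $y = x$ in the common support $\mathsf M$. Because $\psi$ maps into $\mathbb R^+$, these factors may be divided out of the proportionality condition of Theorem \ref{theorem-c}, leaving $\mu_2(x) = q\,\mu_1(x)$ for all $x \in \mathsf M$, where $\mu_i$ denotes multiplicity. Genuine distinctness of the two multisets then forces $q \neq 1$, and summing over $x$ yields $|\mathcal N_2| = q\,|\mathcal N_1| \neq |\mathcal N_1|$.

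Next I compare the improvement terms. Writing $\mathbf z = g(c_1) = g(c_2)$ (equal since $c_1 = c_2$), the two inputs receive additive contributions $\epsilon\,\mathbf z / |\mathcal N_1|$ and $\epsilon\,\mathbf z / |\mathcal N_2|$. Choosing the mapping $\mathcal H$ (equivalently $\mathbf W^l$ and $g$) so that $\mathbf z \neq \mathbf 0$, and using $\epsilon \in (0,1)$, these differ precisely because $|\mathcal N_1| \neq |\mathcal N_2|$. As $h_{\mathrm{old}}(c_1, X_1) = h_{\mathrm{old}}(c_2, X_2)$ on a collision while the improvement terms are unequal, we conclude $h_{\mathrm{new}}(c_1, X_1) \neq h_{\mathrm{new}}(c_2, X_2)$, so the previously indistinguishable pair is separated. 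The Subtractive case is structurally identical: by Theorem \ref{theorem-s} its collisions again require $c_1 = c_2$ and $\mathsf M_1 = \mathsf M_2$, the bracketed dissimilarity factors cancel on the shared support, and the same $|\mathcal N_1| \neq |\mathcal N_2|$ argument applies verbatim.

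The step I expect to be the main obstacle is not the separation itself but guaranteeing that the extra term does not introduce new collisions among pairs that $h_{\mathrm{old}}$ already distinguished. To settle this cleanly I would invoke countability of the feature space $\mathcal X$, as in Xu et al.: the set of $\epsilon$ for which some accidental cancellation occurs is at most countable, so a valid $\epsilon \in (0,1)$ — in particular an irrational one keeping the two additive components on incommensurable scales — preserves injectivity. A secondary subtlety is arranging $\mathbf z \neq \mathbf 0$ together with a suitable $\mathbf W^l$; this is exactly where the existence quantifier over $\mathcal H$ in the statement does the work, since only one such mapping need exist.
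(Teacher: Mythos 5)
Your proposal follows essentially the same route as the paper's proof: take a pair satisfying the collision conditions of Theorem~\ref{theorem-c} (resp.\ Theorem~\ref{theorem-s}), observe that Eq.~(\ref{att-aggregation}) is the plain aggregator plus $\epsilon\,|\mathcal N_i|^{-1}g(c_i)$, and conclude that on such a pair the two outputs differ by $\epsilon\bigl(\tfrac{1}{|\mathcal N_1|}-\tfrac{1}{|\mathcal N_2|}\bigr)g(c)\neq 0$ (your form of the correction term, without the spurious $\alpha_{cc}$ factor the paper inserts, is in fact the faithful reading of Eq.~(\ref{att-aggregation})). The one place you diverge is the step the paper skips: the paper simply asserts $|X_1|\neq|X_2|$, whereas you derive it by cancelling $\psi(-\beta\mathbf S_{c_1y})=\psi(-\beta\mathbf S_{c_2y})$ across the two multisets and reducing the proportionality condition to $\mu_2(x)=q\,\mu_1(x)$, whence $q\neq 1$ and $|\mathcal N_2|=q\,|\mathcal N_1|$. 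Be aware that this cancellation presupposes that the dissimilarity is a function of the feature pair alone. In the paper's actual model $\mathbf S_{ij}$ contains a structural term $\Vert\mathbf p_i-\mathbf p_j\Vert^2$ (Eq.~(\ref{bi-conflict})), and the paper's own proof of Theorem~\ref{theorem-c} explicitly exploits that $\mathbf S$ can take values independent of the feature correlations; so two copies of the same feature $x$ may carry different dissimilarities, the sums $\sum_{y=x}\psi(-\beta\mathbf S_{cy})$ need not be multiples of $\mu_i(x)$, and one can then satisfy the Theorem~\ref{theorem-c} conditions with $|X_1|=|X_2|$ (e.g.\ $X_1=\lbrace a,a,b\rbrace$, $X_2=\lbrace a,b,b\rbrace$, $q=1$, with $\psi$-values $\tfrac12,\tfrac12,2$ on $X_1$ and $1,1,1$ on $X_2$), in which case the $\epsilon$-term separates nothing. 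Your reduction is therefore valid only for feature-determined $\mathbf S$ (the $r_p=0$ case); outside that regime your proof has the same hole as the paper's, the difference being that the paper hides the hole inside an unjustified assertion while you at least surface the assumption needed to close it.

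Your final concern --- choosing $\epsilon$ via a countability argument so that the added term creates no \emph{new} collisions --- goes beyond what the corollary claims. The statement only asks that the improved aggregator distinguish the multisets it \emph{previously could not} distinguish; it does not claim injectivity is preserved on pairs already separated, and the paper's proof accordingly omits any such step. Your safeguard is in the spirit of the GIN argument of Xu et al.\ and would be needed for the stronger "reaches the 1-WL upper bound" claim made in the surrounding text, but it is not required for the statement as written.
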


\begin{proof}
Corollary \ref{coro-att} can be proved by following the procedure presented in \cite{DBLP:conf/iclr/XuHLJ19,he2021learning}.
According to Theorem \ref{theorem-c}, we assume $X_1 = (\mathsf{M}, \mu_1)$, $X_2 = (\mathsf{M}, \mu_2)$, $c \in \mathsf{M}$, and $q\cdot\sum_{y=x, y\in X_1} \psi(-\beta\mathbf S_{c_1y}) = \sum_{y=x, y\in X_2} \psi(-\beta\mathbf S_{c_2y})$, for $q > 0$.
When $\mathcal T$ uses the attention scores solely according to Eq. (\ref{c-strategy}) to aggregate node features, we have $\sum_{x\in X_1} \alpha_{cx} g(x) = \sum_{x\in X_2} \alpha_{cx} g(x)$.
This means $\mathcal T$ fails to discriminate the structures satisfying the conditions stated in Theorem \ref{theorem-c}.
When $\mathcal T$ uses Eq. (\ref{att-aggregation}) where the attention coefficients are obtained by the \textit{Contractive apprehension span} (Eq. (\ref{c-strategy})) to aggregate node features, we have $\sum_{x\in X_1} \alpha_{cx} g(x) - \sum_{x\in X_2} \alpha_{cx} g(x) = \epsilon (\frac{1}{|X_1|}-\frac{1}{|X_2|})\alpha_{cc} g(c)$, where $|X_1| = |\mathcal N_1|$, and $|X_2| = |\mathcal N_2|$.
Since $|X_1| \ne |X_2|$, $\sum_{x\in X_1} \alpha_{cx} g(x) - \sum_{x\in X_2} \alpha_{cx} g(x) \ne 0$, which means $\mathcal T$ based on Eqs. (\ref{c-strategy}) and (\ref{att-aggregation}) is able to discriminate all the structures that $\mathcal T$ solely based on Eq. (\ref{c-strategy}) fails to distinguish.
Following the similar procedure, when the \mechanism~layer (Eq. (\ref{att-aggregation})) utilizes the \textit{Subtractive apprehension span} (Eq. (\ref{s-strategy})), we are able to prove that the corresponding aggregation function also can distinguish those distinct structures that the aggregation function only using \textit{Subtractive apprehension span} fails to discriminate.
\end{proof}

\paragraph{Remarks} Based on the conducted theoretical analysis, 
%Based on the above theoretical analysis, 
the proposed \model s are the most powerful message-passing GNNs and are consequently more powerful than popular attention-based GNNs, e.g., GATs \citep{DBLP:conf/iclr/VelickovicCCRLB18} and HardGAT \citep{DBLP:conf/kdd/GaoJ19}.
In this paper, we mainly verify that the proposed Graph \mechanismlowercase~networks are the most powerful message-passing GNNs under the condition that the feature space is countable \cite{he2021learning,DBLP:conf/iclr/XuHLJ19,zhang2020improving}.
Recent studies have shown that a simplex operator for feature aggregations in some GNN layer is injective, i.e., the 1-WL test equivalent in the countable feature space \cite{corso2020principal}.
But such injectivity might not hold when the simplex operator operates in the uncountable feature space.
To ensure injectivity when a GNN deals with uncountable features, diverse forms of operators, e.g., mean, max, and min operators are required to collaboratively aggregate neighbor features.
Thus, the expressive power of the proposed \model s can be equivalent to the 1-WL test in uncountable feature space by appropriately integrating with other effective operators for feature aggregation.
\begin{table}[htbp]
    \caption{Dataset statistics}
    \label{tab:dataset}
    \small
    \centering
    \resizebox{\linewidth}{!}{
    \begin{tabular}{lcrrrr}
        \toprule
        \textbf{Dataset} & \textbf{Type} & \textbf{Nodes} & \textbf{Edges} & \textbf{Features} & \textbf{Classes} \\
        \midrule
        Cora     & Citation &  2,708 &   5,429 &  1,433 & 7  \\
        Cite & Citation &  3,327 &   4,552 &  3,703 & 6  \\
        Pubmed   & Citation & 19,717 &  44,324 &    500 & 3  \\
        Wiki     & Web      &  2,405 &  17,981 &  4,973 & 17 \\
        Uai      & Web      &  3,363 &  33,300 &  4,971 & 19 \\
        CoauthorCS& Collaboration & 18,333 & 327,576 &  6,805 & 15 \\
        \bottomrule
    \end{tabular}}
\end{table}

\section{Experiment and analysis}\label{experiment}
In this section, we evaluate the effectiveness of the proposed \mechanism~by comparing \model s with state-of-the-art approaches for two semi-supervised learning tasks on well-established benchmarking datasets.
To understand the performance of SATs, we also perform ablation studies and visualize the distribution of learned attention coefficients. We also analyze the effect of $\beta$ and the space consumption of SATs.

\begin{table*}[ht]
    \caption{Performance comparison on semi-supervised node classification. The results in bold show that SAT outperforms all the baselines and the best baselines are underlined.}
    \label{tab:clf}
    \centering
    \resizebox{0.75\textwidth}{!}{
    \begin{tabular}{lcccccc}
        \toprule
        \textbf{Models} & \textbf{Cora} & \textbf{Cite} & \textbf{Pubmed} & \textbf{Wiki} & \textbf{Uai} & \textbf{CoauthorCS} \\
        \midrule
        MoNet & 80.93 $\pm$ 0.41 & 67.54 $\pm$ 0.69 & 80.06 $\pm$ 0.55 & 68.36 $\pm$ 0.27 & 53.58 $\pm$ 0.26 & 87.86 $\pm$ 0.35  \\
        GCN & 81.79 $\pm$ 0.21 & 71.83 $\pm$ 0.72 & 79.80 $\pm$ 0.31 & 66.99 $\pm$ 0.15 & \underline{60.72} $\pm$ 0.23 & 88.50 $\pm$ 1.06  \\
        JKNet & 78.70 $\pm$ 0.45 & 66.30 $\pm$ 0.16 & 79.73 $\pm$ 0.04 & 63.86 $\pm$ 0.19 & 54.08 $\pm$ 0.84 & 87.66 $\pm$ 1.02  \\
        APPNP & 82.50 $\pm$ 0.42 & 72.04 $\pm$ 0.48 & \underline{82.76} $\pm$ 0.37 & 66.94 $\pm$ 0.20 & 52.74 $\pm$ 0.34 & 88.46 $\pm$ 0.21 \\
        ARMA & 80.47 $\pm$ 0.71 & 69.45 $\pm$ 0.57 & 76.63 $\pm$ 0.56 & 66.94 $\pm$ 0.17 & 54.56 $\pm$ 0.15 & 82.18 $\pm$ 0.37  \\
        %GIN & 81.58 $\pm$ 0.62 & 66.90 $\pm$ 0.16 & 80.76 $\pm$ 0.33 & 73.34 $\pm$ 0.21 & 57.26 $\pm$ 0.16 & 82.94 $\pm$ 0.21  \\\midrule
        GIN & 81.27 $\pm$ 0.57 & 66.30 $\pm$ 0.59 & 80.37 $\pm$ 0.69 & \underline{71.94} $\pm$ 0.27 & 57.26 $\pm$ 0.16 & 82.94 $\pm$ 0.21  \\
        \midrule
        GAT & 83.72 $\pm$ 0.78 & 70.48 $\pm$ 0.39 & 81.17 $\pm$ 0.49 & 66.12 $\pm$ 0.14 & 54.02 $\pm$ 0.09 & 87.44 $\pm$ 0.55  \\
        GATv2 & 84.06 $\pm$ 0.83 & 71.44 $\pm$ 0.45 & 81.94 $\pm$ 0.29 & 66.82 $\pm$ 0.41 & 56.02 $\pm$ 0.21 & 87.84 $\pm$ 0.11 \\
        CAT & \underline{84.38} $\pm$ 0.25 & 72.08 $\pm$ 0.70 & 82.60 $\pm$ 0.27 & 71.73 $\pm$ 0.38 & 54.48 $\pm$ 0.52 & \underline{90.71} $\pm$ 0.21  \\
        \midrule
        %GAT-$k$-Lap & 84.10 $\pm$ 0.24 & 71.18 $\pm$ 0.52 & 82.56 $\pm$ 0.30 & 64.74 $\pm$ 0.19 & 53.68 $\pm$ 0.17 & 88.84 $\pm$ 0.10 \\
        HardGAT & 79.44 $\pm$ 0.52 & \underline{72.14} $\pm$ 0.44 & 81.00 $\pm$ 0.29 & 60.22 $\pm$ 0.21 & 46.82 $\pm$ 0.15 & 86.30 $\pm$ 0.62  \\
        GraphSage & 81.36 $\pm$ 0.44 & 70.50 $\pm$ 0.66 & 79.28 $\pm$ 0.26 & 68.04 $\pm$ 0.21 & 57.44 $\pm$ 0.19 & 89.96 $\pm$ 0.97  \\
        Neural Sparse & 83.33 $\pm$ 0.18 & 71.50 $\pm$ 0.22 & 81.07 $\pm$ 0.21 & 68.77 $\pm$ 0.11 & 46.40 $\pm$ 0.22 & 88.07 $\pm$ 0.74 \\
        \midrule
        \model-C & \textbf{85.02} $\pm$ 0.35 & \textbf{72.40} $\pm$ 0.28 & \textbf{83.16} $\pm$ 0.10 & \textbf{73.96} $\pm$ 0.35 & \textbf{61.26} $\pm$ 0.71 & \textbf{91.34} $\pm$ 0.26  \\
        \model-S & \textbf{84.66} $\pm$ 0.23 & \textbf{72.66} $\pm$ 0.23 & \textbf{83.22} $\pm$ 0.15 & \textbf{72.96} $\pm$ 0.38 & 58.88 $\pm$ 0.66 & 90.54 $\pm$ 0.39  \\
        \bottomrule
    \end{tabular}}
\end{table*}

\begin{table*}[ht]
    \caption{Performance comparison on semi-supervised node clustering. The results in bold show that SAT outperforms all the baselines and the best baselines are underlined.}
    \label{tab:cls}
    \centering
    \resizebox{0.75\textwidth}{!}{
    \begin{tabular}{lcccccc}
        \toprule
        \textbf{Models} & \textbf{Cora} & \textbf{Cite} & \textbf{Pubmed} & \textbf{Wiki} & \textbf{Uai} & \textbf{CoauthorCS} \\
        \midrule
        MoNet & 79.06 $\pm$ 0.61 & 63.35 $\pm$ 0.29 & 79.78 $\pm$ 0.40 & 71.68 $\pm$ 0.14 & 55.21 $\pm$ 0.31 & 87.52 $\pm$ 0.92 \\
        GCN & 74.61 $\pm$ 0.24 & 63.61 $\pm$ 0.80 & 77.56 $\pm$ 0.56 & 72.78 $\pm$ 0.51 & 60.14 $\pm$ 0.18 & 88.72 $\pm$ 0.73 \\
        JKNet & 76.94 $\pm$ 0.48 & 64.33 $\pm$ 0.02 & 79.53 $\pm$ 0.47 & 67.90 $\pm$ 0.17 & 57.15 $\pm$ 0.09 & 87.89 $\pm$ 0.48  \\
        APPNP & 80.44 $\pm$ 0.94 & 70.33 $\pm$ 0.62 & \underline{82.53} $\pm$ 0.45 & 69.71 $\pm$ 0.19 & 53.63 $\pm$ 0.40 & 88.68 $\pm$ 0.39  \\
        ARMA & 77.99 $\pm$ 0.91 & 68.07 $\pm$ 0.89 & 78.01 $\pm$ 0.98 & 71.28 $\pm$ 0.12 & 58.44 $\pm$ 0.13 & 82.43 $\pm$ 0.33 \\
        GIN & 79.59 $\pm$ 0.36 & 66.04 $\pm$ 0.83 & 78.39 $\pm$ 0.88 & \underline{74.18} $\pm$ 0.15 & \underline{60.39} $\pm$ 0.11 & 85.34 $\pm$ 0.29  \\\midrule
        GAT & \underline{81.46} $\pm$ 0.26 & 68.49 $\pm$ 0.26 & 81.30 $\pm$ 0.17 & 70.52 $\pm$ 0.13 & 56.45 $\pm$ 0.64 & 88.00 $\pm$ 0.36 \\
        GATv2 & 80.79 $\pm$ 0.54 & 70.03 $\pm$ 0.36 & 81.43 $\pm$ 0.17 & 69.10 $\pm$ 0.17 & 57.47 $\pm$ 0.22 & 88.13 $\pm$ 0.56  \\
        CAT & 81.13 $\pm$ 0.24 & \underline{70.35} $\pm$ 0.33 & 82.07 $\pm$ 0.16 & 73.29 $\pm$ 0.37 & 56.11 $\pm$ 0.41 & \underline{90.48} $\pm$ 0.22  \\
        \midrule
        HardGAT & 76.53 $\pm$ 0.36 & 69.68 $\pm$ 0.44 & 79.89 $\pm$ 0.18 & 63.19 $\pm$ 0.18 & 47.65 $\pm$ 0.15 & 87.32 $\pm$ 0.55  \\
        GraphSage & 78.50 $\pm$ 0.67 & 68.92 $\pm$ 0.42 & 79.74 $\pm$ 0.49 & 70.72 $\pm$ 0.70 & 59.77 $\pm$ 0.11 & 89.97 $\pm$ 0.43 \\
        Neural Sparse & 80.69 $\pm$ 0.24 & 69.97 $\pm$ 0.19 & 80.79 $\pm$ 0.04 & 71.19 $\pm$ 0.35 & 50.98 $\pm$ 0.20 & 88.38 $\pm$ 0.44  \\
        \midrule
        \model-C & \textbf{81.85} $\pm$ 0.27 & \textbf{70.82} $\pm$ 0.19 & 82.28 $\pm$ 0.09 & \textbf{75.17} $\pm$ 0.31 & \textbf{61.48} $\pm$ 0.61 & \textbf{90.84} $\pm$ 0.21  \\
        \model-S & \textbf{81.47} $\pm$ 0.21 & \textbf{70.94} $\pm$ 0.09 & 82.22 $\pm$ 0.07 & 74.05 $\pm$ 0.38 & 59.31 $\pm$ 0.55 & 90.29 $\pm$ 0.09  \\
        \bottomrule
    \end{tabular}}
\end{table*}

\subsection{Experimental set-up}

\paragraph{Baselines}
\model s are compared with twelve strong baselines, including MoNet \citep{DBLP:conf/cvpr/MontiBMRSB17}, GCN \citep{DBLP:conf/iclr/KipfW17}, GraphSage \citep{DBLP:conf/nips/HamiltonYL17}, JKNet \citep{DBLP:conf/icml/XuLTSKJ18}, APPNP \citep{DBLP:conf/iclr/KlicperaBG19}, ARMA \citep{bianchi2021graph}, GIN \citep{DBLP:conf/iclr/XuHLJ19}, Neural Sparse \cite{DBLP:conf/icml/ZhengZCSNYC020}, GAT \citep{DBLP:conf/iclr/VelickovicCCRLB18}, GATv2 \citep{brody2022how}, CAT \citep{he2021learning}, 
%GAT-$k$-Lap \citep{DBLP:conf/wsdm/QiuDMLWT18}, 
and HardGAT \citep{DBLP:conf/kdd/GaoJ19}.
MoNet, GCN, JKnet, APPNP, and ARMA are five representative GNNs that leverage graph convolutional operators to learn representations.
GIN and CAT are two state-of-the-art GNNs whose expressive power is equivalent to the 1-WL test.
GAT and GATv2 are two powerful attention-based GNNs that learn representations by attending to all neighbors of each node in the graph.
GraphSage, Neural Sparse, and HardGAT are three state-of-the-art GNNs that perform graph learning tasks by aggregating the features from sampled neighbors.
By comparing with diverse types of GNNs which consider different scopes of neighbors for feature aggregation, the effectiveness of the proposed \model s can be better validated.

\paragraph{Datasets} 
Six widely used datasets, including Cora, Cite, Pubmed \citep{DBLP:conf/icml/LuG03, DBLP:journals/aim/SenNBGGE08}, Wiki \citep{DBLP:conf/icml/LuG03}, Uai \citep{DBLP:conf/icml/LuG03}, and CoauthorCS \citep{DBLP:journals/corr/abs-1811-05868} are used for evaluation.
Cora, Cite, and Pubmed are citation networks widely used for the evaluation of GNNs. However, recent studies show that they may be insufficient to evaluate the performance of GNNs due to their limited data size \citep{DBLP:conf/nips/HuFZDRLCL20, DBLP:journals/corr/abs-1811-05868}.
By following previous work, we additionally include Wiki, Uai, and CoauthorCS in our experiments. The data statistics are summarized in Table \ref{tab:dataset}, and more details can be found in Appendix \ref{app:data_description}.

\paragraph{Evaluation and experimental settings} Following previous work \citep{DBLP:conf/iclr/VelickovicCCRLB18,he2021learning,DBLP:conf/nips/KlicperaWG19,DBLP:conf/iclr/KipfW17}, we mainly consider two learning tasks to test the effectiveness of different approaches, i.e., semi-supervised node classification and semi-supervised node clustering.
For the training and testing paradigms, we closely follow the established settings reported in the previous studies \citep{DBLP:conf/iclr/KipfW17,DBLP:conf/iclr/VelickovicCCRLB18,DBLP:conf/iclr/KlicperaBG19,yang2016revisiting,he2021learning}.
The classification and clustering performances of all approaches are evaluated by \textit{Accuracy}.
In the training phase, all the GNNs are implemented with the two-layer network structure, i.e., one hidden layer followed by the output layer.
All approaches are run 10 times on each testing dataset to obtain the average performance.
We report the detailed experimental settings in Appendix \ref{detail-setting}.

\subsection{Results of semi-supervised learning}
Semi-supervised node classification and clustering are closely related to several important applications from real-world graph data, such as social community detection and document classification.
In our experiments, we use the proposed SATs to perform these two learning tasks in the aforementioned real graph datasets and compare their performance with that of state-of-the-art GNNs.
The corresponding results are summarized in Tables \ref{tab:clf} and \ref{tab:cls}.

The performance comparisons of semi-supervised node classification are presented in Table \ref{tab:clf}.
%, which shows that our \model s perform on par or outperform the baselines.
As the table shows, \model s consistently outperform all the baselines on all the datasets. Specifically, \model~archives 0.76\%, 0.72\%, 0.56\%, 2.81\%, 0.89\%, and 0.69\% improvement over the best baselines (which are underlined in Table \ref{tab:clf}) in terms of \textit{Accuracy}.
It is worth noting that \model~achieves significant performance gain over all attention-based GNNs. Specifically, \model~archives 0.76\%, 0.72\%, 0.75\%, 3.11\%, 9.35\% and 0.69\% performance gain over the best attention-based GNNs.
The better performance of SATs could be attributed to its ability of identifying irrelevant neighbors and only attending to those highly relevant ones.
This is why the performance improvement of \model s over attention-based GNNs is much more significant on Wiki, Uai, and CoauthorCS, where the density of edges is higher and more irrelevant neighbors exist.
We will visualize the distribution of attention scores learned by \model s in Section \ref{sec:attention_analysis} to demonstrate their ability to identify and ignore irrelevant neighbors.
%We will also visualize the distribution of attention scores in Section \ref{sec:attention_analysis}.

% Then we consider a more challenging task, semi-supervised clustering, to validate the effectiveness of \model s.
The results of semi-supervised clustering are summarized in Table \ref{tab:cls}. \model~archives the best scores on five datasets, and it outperforms all the attention-based methods on all six datasets. Specifically, \model~archives 0.48\%, 0.84\%, 0.26\%, 2.57\%, 6.98\%, 0.40\% performance gain over the best attention-based GNN. Similar to the results on the first task, the \textit{Accuracy} increases reported by \model~over attention-based GNNs are more significant on dense graphs, including Wiki, Uai, and CoauthorCS.

\subsection {Ablation studies} \label{sec:ablation_study}

To further investigate the effectiveness of our model, we conduct ablation studies on the two types of dissimilarity (Eq. (\ref{bi-conflict})) of \model.
% As diverse forms of node-node dissimilarity (Eq. (\ref{bi-conflict})) are used to compute novel attention coefficients (Eq. (\ref{c-strategy}) or (\ref{s-strategy})), 
We consider vanilla GAT, \model~with dissimilarity only regarding node features (C/S-F) (i.e. $r_f = 1, r_p=0$ in Eq. (\ref{bi-conflict})), \model~with the dissimilarity of graph structure only (C/S-P) (i.e. $r_f=0, r_p=1$ in Eq. (\ref{bi-conflict})), and the complete \model~model. 
The results of semi-supervised node classification and clustering tasks are summarized in Tables \ref{tab:ablation_maintext} and \ref{tab:ablation_cls}.
%For brevity, we only list the results of \model~in the semi-supervised node classification task in Table \ref{tab:ablation_maintext}, while we list ablation results in semi-supervised clustering tasks in Appendix \ref{app:ablation}.
From the tables, we observe that either using node features or node structural properties to compute node-node dissimilarity can improve the performances of graph attention, 
which demonstrates that ignoring irrelevant neighbors in graph attention indeed improves the performance.
Meanwhile, considering both dissimilarities 
% to compute \mecha~coefficients makes \model s 
produces better and more stable results.

\begin{table}[ht]
    \caption{Ablation study in semi-supervised node classification}
    \label{tab:ablation_maintext}
    \centering
    \resizebox{\linewidth}{!}{
    \begin{tabular}{cc|l|cccccc}
        \toprule
        \multicolumn{9}{c}{\bf \model~ (\textit{Contractive apprehension span})} \\
        \midrule
        \multicolumn{2}{c|}{\bf Dissimilarity} & 
        \multirow{2}*{\bf Models} & \multirow{2}*{\bf Cora} & \multirow{2}*{\bf Cite} & \multirow{2}*{\bf Pubmed} & \multirow{2}*{\bf Wiki} & \multirow{2}*{\bf Uai} & \multirow{2}*{\bf CoauthorCS} \\ {\bf node feature} & {\bf node structure} & & & & & & & \\
        \midrule
         &  & GAT  &  83.84  &  70.36  &  81.50  &  66.12  &  54.02  &  87.44 \\
        $\surd$ &  & C-F  &  83.87  &  72.00  &  82.17  &  71.70  &  60.17  &  90.77 \\
         & $\surd$ & C-P  &  84.37  &  72.60  &  82.70  &  73.50  &  59.13  &  89.93 \\
         $\surd$ & $\surd$ & \model-C  &  85.02  &  72.40  &  83.16  &  73.96  &  61.26  &  91.34 \\
        \midrule
        \multicolumn{9}{c}{\bf \model~ (\textit{Subtractive apprehension span})} \\
        \midrule
        \multicolumn{2}{c|}{\bf Dissimilarity} & 
        \multirow{2}*{\bf Models} & \multirow{2}*{\bf Cora} & \multirow{2}*{\bf Cite} & \multirow{2}*{\bf Pubmed} & \multirow{2}*{\bf Wiki} & \multirow{2}*{\bf Uai} & \multirow{2}*{\bf CoauthorCS} \\ {\bf node feature} & {\bf node structure} & & & & & & & \\
        \midrule
        &  &  GAT  &  83.84  &  70.36  &  81.50  &  66.12  &  54.02  &  87.44  \\
        $\surd$ &  & S-F  &  84.30  &  72.40  &  82.07  &  72.70  &  59.13  &  90.20  \\
        & $\surd$ & S-P  &  84.13  &  72.60  &  82.50  &  73.17  &  56.97  &  90.17  \\
        $\surd$ & $\surd$ & \model-S  &  84.66  &  72.66  &  83.22  &  72.96  &  58.88  &  90.54  \\
        \bottomrule
    \end{tabular}}
\end{table}

\begin{table}[htbp]
    \vspace{-2px}
    \caption{Ablation study in semi-supervised node clustering}
    \vspace{-4px}
    \label{tab:ablation_cls}
    \centering
    \resizebox{\linewidth}{!}{
    \begin{tabular}{cc|l|cccccc}
        \toprule
        \multicolumn{9}{c}{\bf \model~ (\textit{Contractive apprehension span})} \\
        \midrule
        \multicolumn{2}{c|}{\bf Dissimilarity} & 
        \multirow{2}*{\bf Models} & \multirow{2}*{\bf Cora} & \multirow{2}*{\bf Cite} & \multirow{2}*{\bf Pubmed} & \multirow{2}*{\bf Wiki} & \multirow{2}*{\bf Uai} & \multirow{2}*{\bf CoauthorCS} \\ {\bf node feature} & {\bf node structure} & & & & & & & \\
        \midrule
        &  &  GAT  &  81.39   &  69.20   &  80.88   &  70.52   &  56.45   &  88.00   \\
        % \midrule
        $\surd$ &  & C-F  &  80.07   &  70.61   &  82.01   &  72.69   &  59.95   &  90.48   \\
        & $\surd$ & C-P  &  81.18   &  70.80   &  82.25   &  74.39   &  59.37   &  89.78   \\
        $\surd$ & $\surd$ & \model-C  &  81.85   &  70.82   &  82.28   &  75.17   &  61.48   &  90.84   \\
        \midrule
        \multicolumn{9}{c}{\bf \model~ (\textit{Subtractive apprehension span})} \\
        \midrule
        \multicolumn{2}{c|}{\bf Dissimilarity} & 
        \multirow{2}*{\bf Models} & \multirow{2}*{\bf Cora} & \multirow{2}*{\bf Cite} & \multirow{2}*{\bf Pubmed} & \multirow{2}*{\bf Wiki} & \multirow{2}*{\bf Uai} & \multirow{2}*{\bf CoauthorCS} \\ {\bf node feature} & {\bf node structure} & & & & & & & \\
        \midrule
        &  &  GAT  &  81.39  &  69.20  &  80.88  &  70.52  &  56.45  &  88.00  \\
        % \midrule
        $\surd$ &  & S-F  &  81.08  &  70.99  &  81.98  &  73.24  &  59.36  &  90.14  \\
        & $\surd$ & S-P  &  81.21  &  70.73  &  82.23  &  74.01  &  56.72  &  89.98  \\
        $\surd$ & $\surd$ & \model-S  &  81.47  &  70.94  &  82.22  &  74.05  &  59.31  &  90.29  \\
        \bottomrule
    \end{tabular}}
\end{table}

\subsection{Visualization of attention scores} \label{sec:attention_analysis}
% To demonstrate how \mecha~influences the representation learning
To demonstrate the ability of SA to ignore irrelevant neighbors, we compare the attention scores obtained by the output layer of GAT, CAT, and \model s. 
Here we show the results on Uai and the results on the other datasets are given in Appendix \ref{app:attn_coef}.
In Fig. \ref{fig:attn_dist}, we conduct a case study on a node to show how \mecha~mechanisms influence the values of attention coefficients of neighbors, and then we show the histogram of all attention scores. 
% It is observed from the case study
We observe that some attention scores obtained by \model~are close to zero, showcasing that \model~indeed can learn the scope of attention to exclude the irrelevant neighbors from the feature aggregation.
Such observation generalizes to the whole graph, as the histogram of attention scores (Fig. \ref{fig:attn_dist}) shows that more attention scores acquired by \model~are close to zero compared with GAT and SAT.
We also conduct the case study on CAT \citep{he2021learning}, which utilizes both node features and graph structure to compute attention coefficients for graph analysis, and draw the distribution of its attention scores. We do not observe a significant difference in the histogram of attention scores between CAT and GAT, showing that CAT is not able to ignore irrelevant neighbors.
% prevalent attention-based GNNs cannot learn to adjust the scope of neighborhood for feature aggregation.
%which shows that CAT does not take effect by ignoring unimportant neighbors.

\begin{figure}[ht]
    \centering
    \subfigure{
        \includegraphics[width=0.65\linewidth]{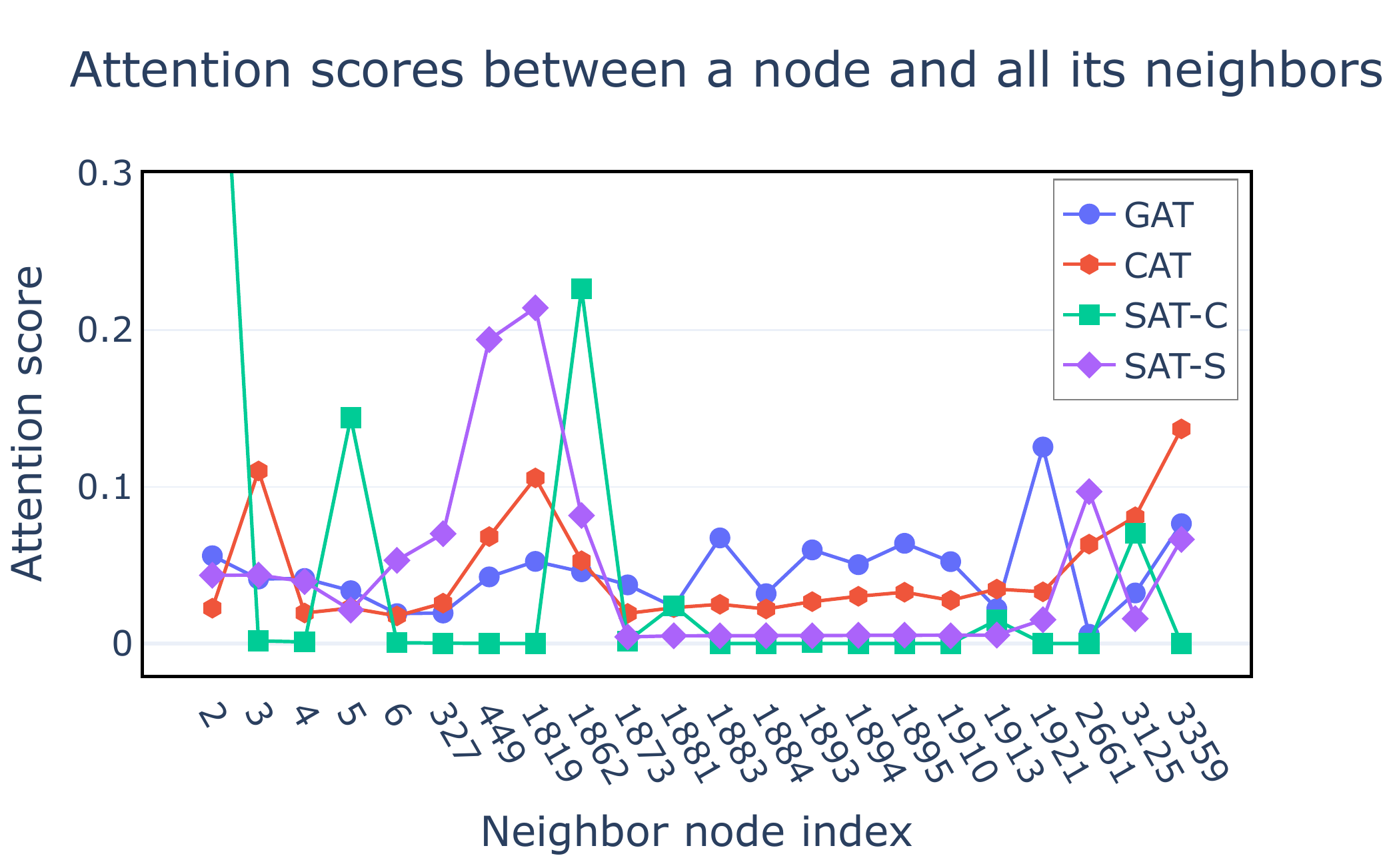}  \label{fig:attn_dist_case}
    }
    \subfigure{
        \includegraphics[width=0.65\linewidth]{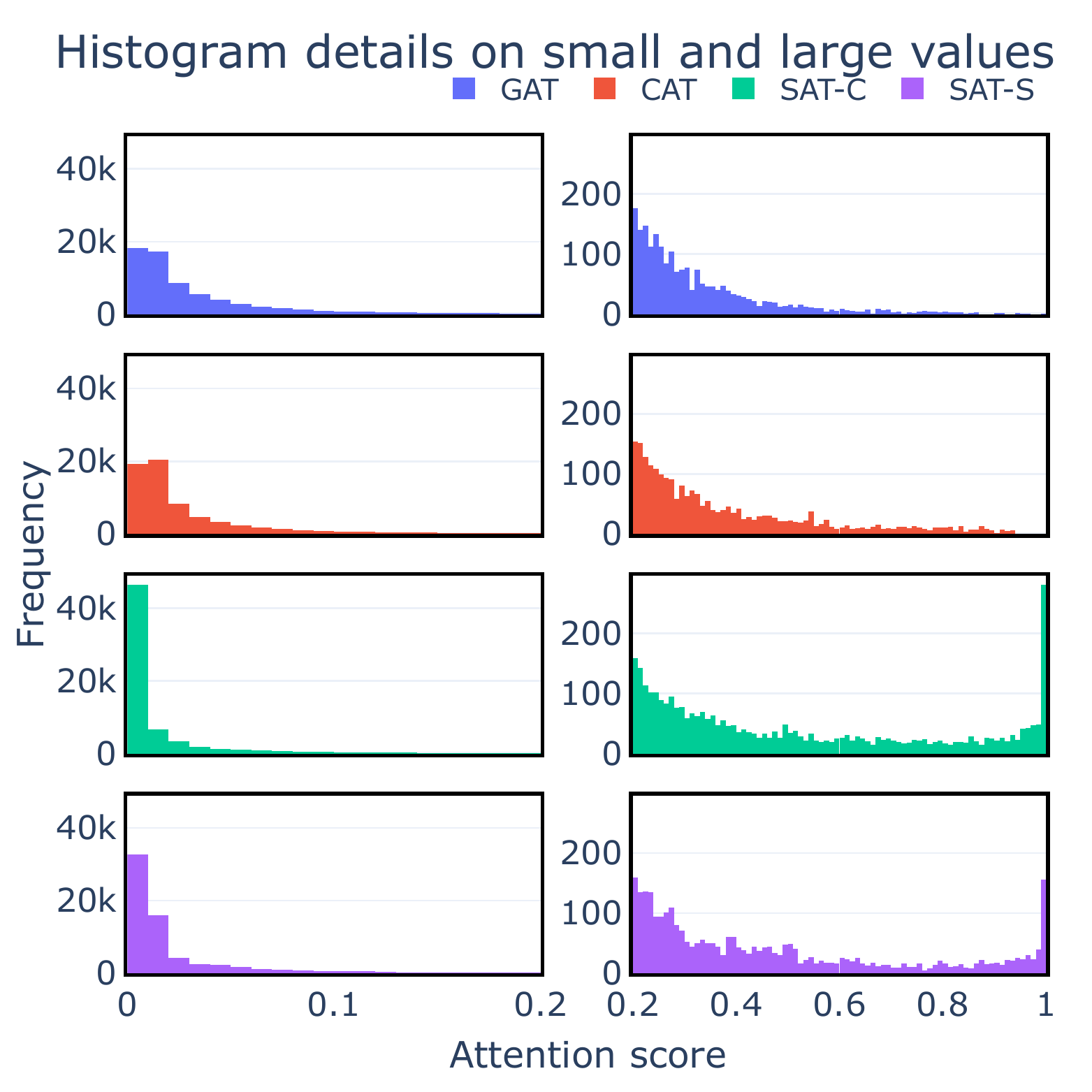}
    }
    % \subfigure{
    %     \includegraphics[width=0.35\linewidth]{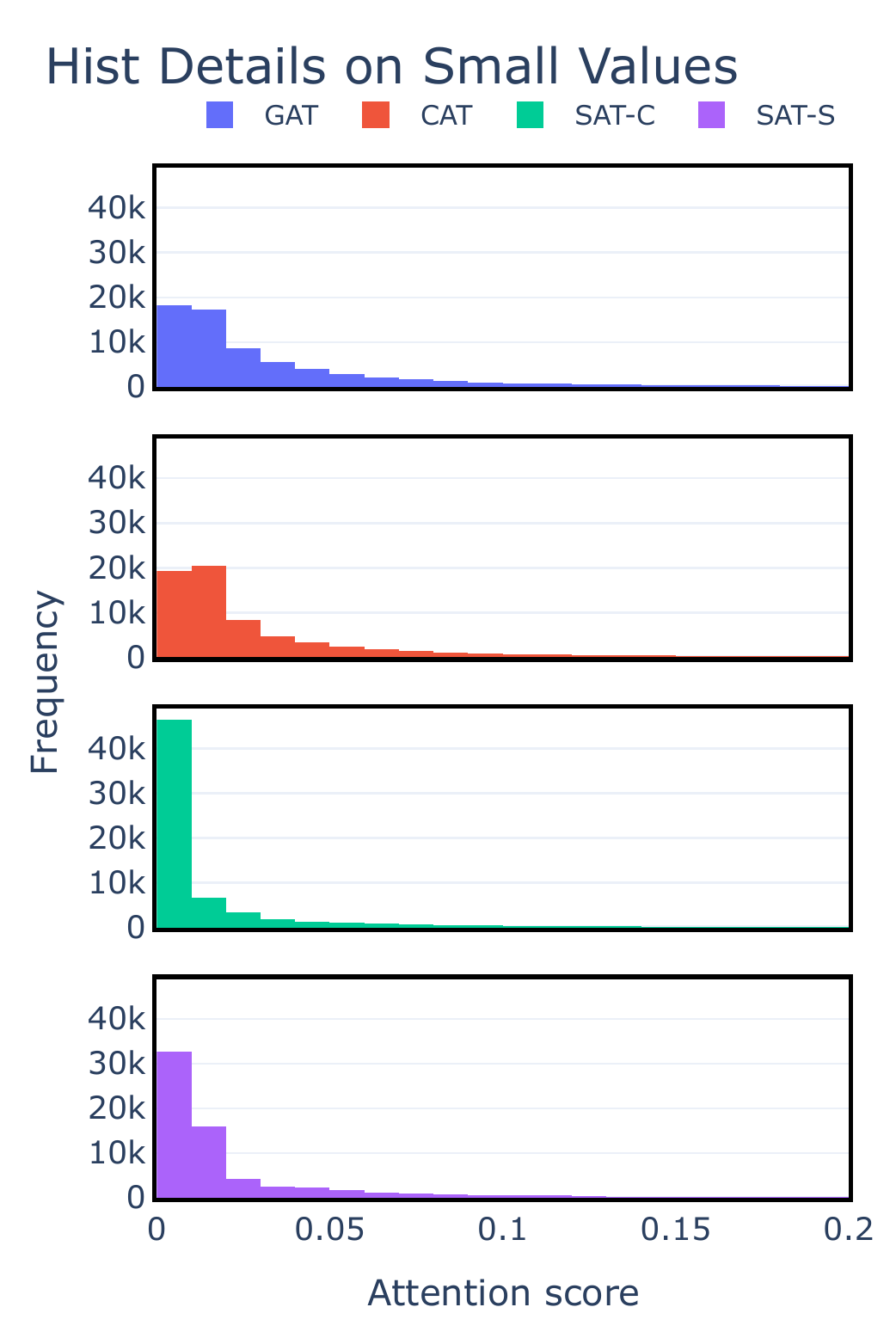} \label{fig:attn_dist_hist}
    % }
    % \subfigure{
    %     \includegraphics[width=0.35\linewidth]{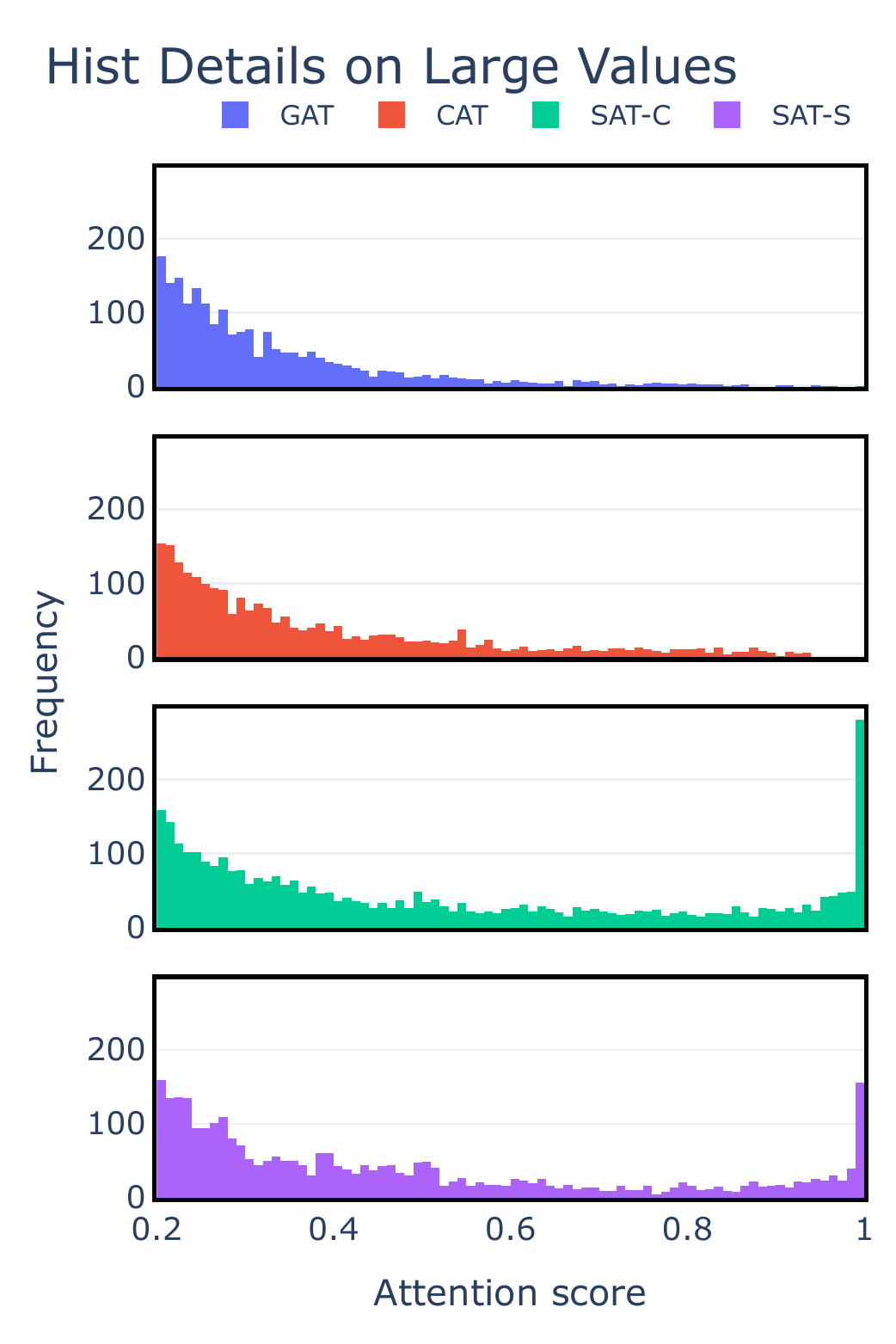}
    % }
    \caption{Attention scores from GAT, CAT and \model s on Uai}
    \label{fig:attn_dist}
\end{figure}

\subsection{The effect of $\beta$}
\model s use $\beta$ to control the influence from the node-node dissimilarity 
% brought by the node-node dissimilarity 
when computing attention coefficients.
The effect of the node-node dissimilarity becomes larger when $\beta$ goes high.
Potentially, \model s can learn smaller 
% apprehension spans
scopes of attention, which exclude more neighbors, or compute very low attention coefficients for more neighbors when $\beta$ is set to high values.
%Potentially, \model s might ignore more neighbors, or compute very low attention coefficients for more neighbors when $\beta$ is set to high values.
To show this, in Table \ref{neighbors}, we list the number of edges with small attention scores (<0.05) when $\beta$ ranges in [0.1, 0.5, 0.75, 1.0] on all testing datasets. 
As seen in the table, more neighbors are assigned with very small attention scores as $\beta$ goes higher.

% We further draw the performance of our model with different settings of $\beta$ in Fig. \ref{fig:sensitivity}. 
We show the \textit {Accuracy} of our models with varying values of $\beta$ in the range of $[0.01, 1.0]$ in Fig. \ref{fig:sensitivity}.
Based on the results in Table \ref{neighbors} and Fig. \ref{fig:sensitivity}, one possible way to find a better setting of $\beta$ is to configure it according to the density of the graph. Generally, SATs can perform better when $\beta$ is set to a higher value in dense graphs, such as Wiki, Uai, and CoauthorCS. In these graphs, SATs can identify a large number of dissimilar neighbors as irrelevant to the feature aggregation, so that they can learn representations by concentrating on the features of a few relevant nodes. However, in sparse graphs such as Cora, Cite, and Pubmed, SATs can perform better when a relatively small $\beta$ is used. In these graphs, the number of neighbors connecting each central node is small. Thus, SATs do not need to identify many neighbors as irrelevant ones.

\begin{table}[htbp]
\caption{The number of neighbors with very low attention coefficients ($\leq 0.05$) and proportions}
\small
\label{neighbors}
\resizebox{\linewidth}{!}{
\begin{tabular}{|c|c|c|c|c|c|c}
\cline{1-6}
\bf Dataset    & \begin{tabular}[c]{@{}c@{}}\bf C: Contractive\\ \bf S: Subtractive\end{tabular} & $\beta = 0.1$ & $\beta = 0.5$  & $\beta = 0.75$ & $\beta = 1$   \\ \cline{1-6}
\bf Cora       & \bf C                                                                       & 2273 (0.171)  & 2672 (0.201)   & 2726 (0.206)   & 2800 (0.211)  \\ \cline{2-6}
           & \bf S                                                                       & 2147 (0.162)  & 2164 (0.164)   & 2214 (0.167)   & 2415 (0.182)  \\ \cline{1-6}
\bf Cite       & \bf C                                                                       & 879 (0.071)   & 1018 (0.082)   & 1081 (0.087)   & 1120 (0.090)  \\ \cline{2-6}
           & \bf S                                                                       & 878 (0.071)   & 897 (0.072)    & 902 (0.073)    & 1047 (0.084)  \\ \cline{1-6}
\bf Pubmed     & \bf C                                                                       & 32416 (0.299) & 34078 (0.314)  & 34590 (0.319)  & 35079 (0.324) \\ \cline{2-6}
           & \bf S                                                                       & 32185 (0.297) & 32268 (0.298)  & 32822 (0.303)  & 34809 (0.321) \\ \cline{1-6}
\bf Wiki       & \bf C                                                                       & 15870 (0.619) & 16153 (0.6310) & 16190 (0.632)  & 16663 (0.651) \\ \cline{2-6}
           & \bf S                                                                       & 15944 (0.622) & 15997 (0.625)  & 16006 (0.625)  & 16308 (0.637) \\ \cline{1-6}
\bf Uai        & \bf C                                                                       & 58699 (0.839) & 58733 (0.839)  & 59726 (0.853)  & 62155 (0.888) \\ \cline{2-6}
           & \bf S                                                                       & 62705 (0.896) & 62741 (0.896)  & 62834 (0.898)  & 63688 (0.910) \\ \cline{1-6}
\bf CoauthorCS & \bf C                                                                       & 79384 (0.436) & 79838 (0.438)  & 79898 (0.439)  & 87610 (0.481) \\ \cline{2-6}
           & \bf S                                                                       & 78909 (0.433) & 78992 (0.434)  & 79857 (0.438)  & 80952 (0.444) \\ \cline{1-6}
\end{tabular}}
\end{table}

\begin{figure}[htbp]
    \centering
    \includegraphics[width=0.32\linewidth]{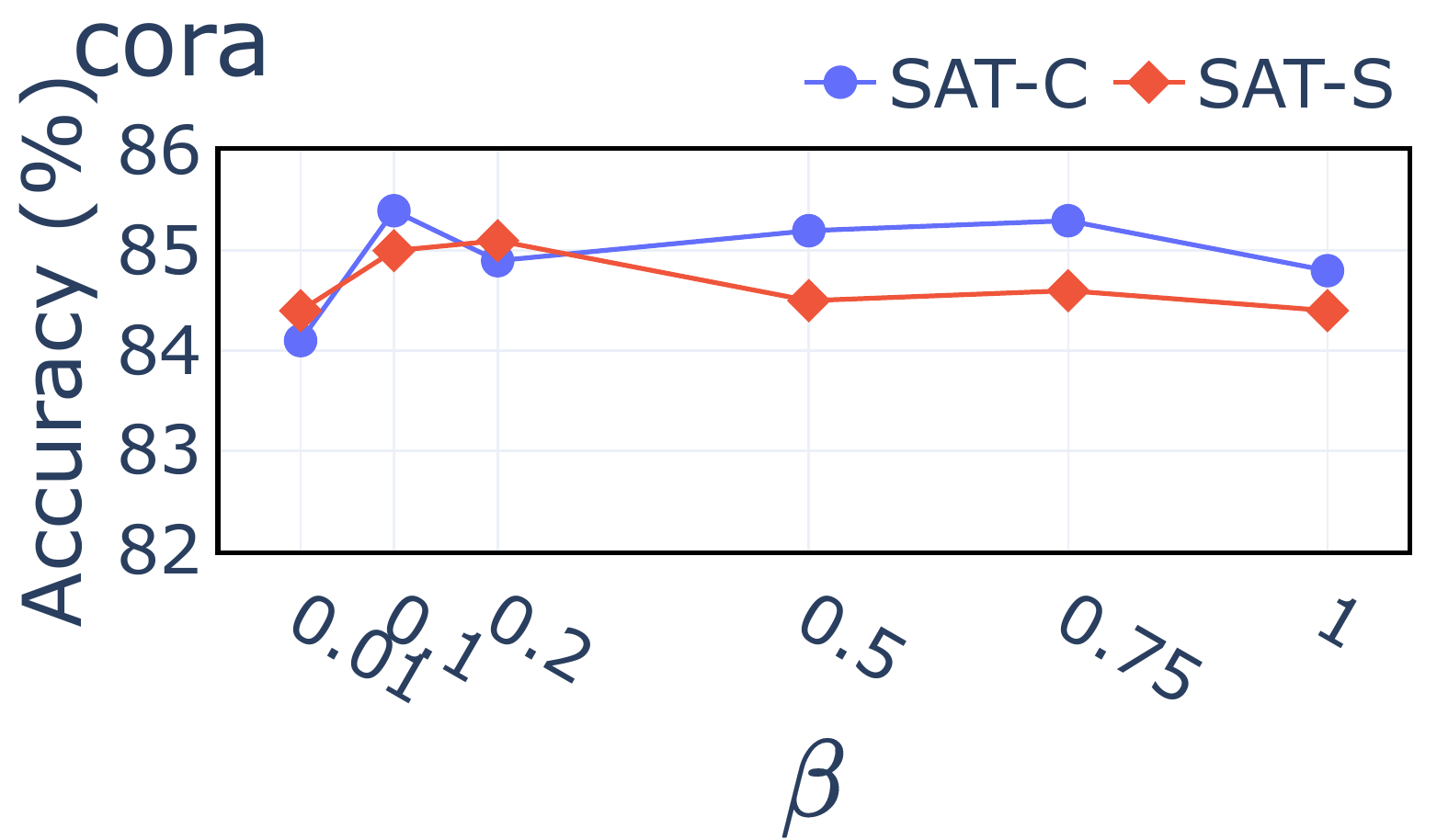} 
    \includegraphics[width=0.32\linewidth]{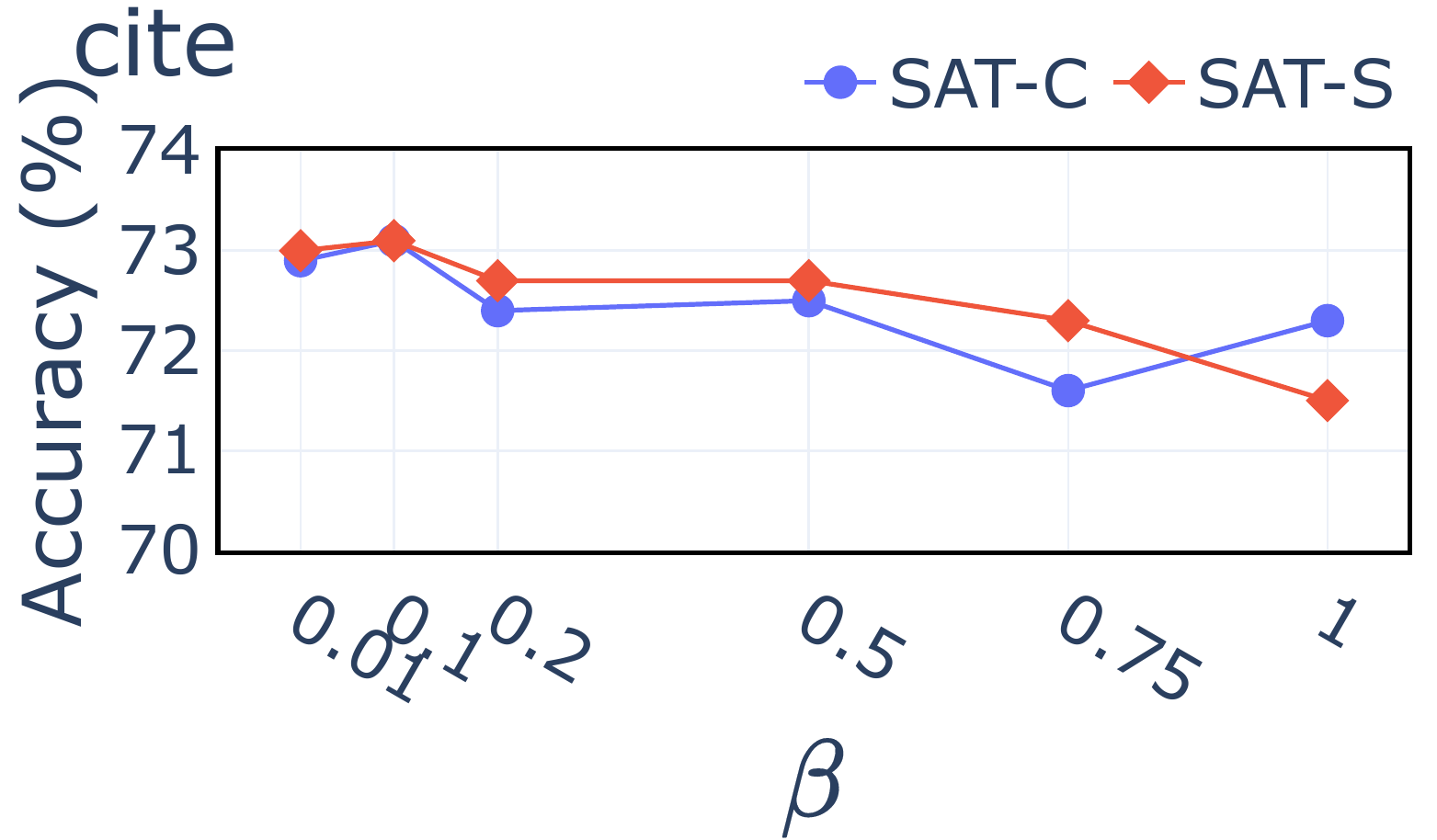} 
    \includegraphics[width=0.32\linewidth]{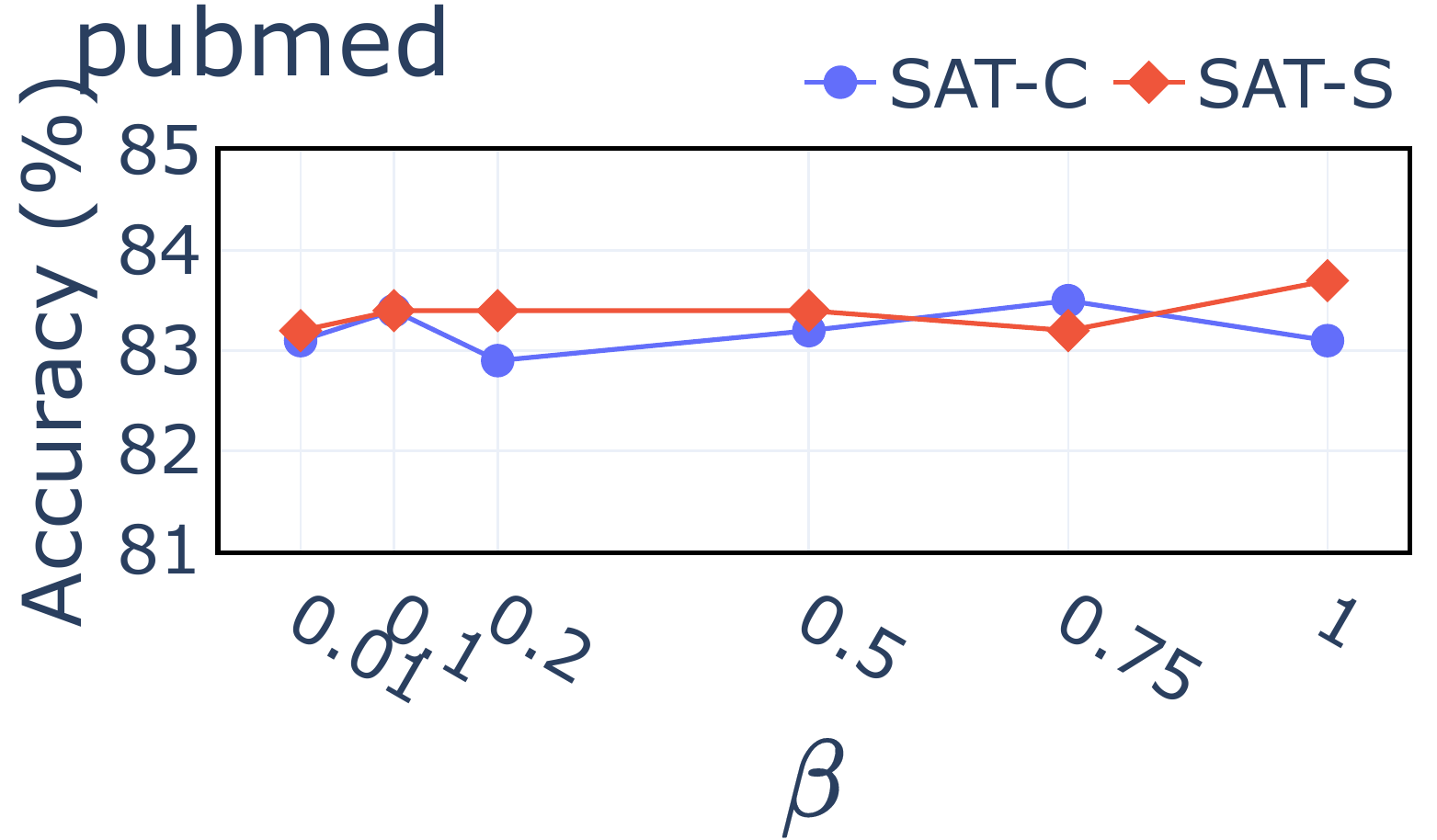}
    \includegraphics[width=0.32\linewidth]{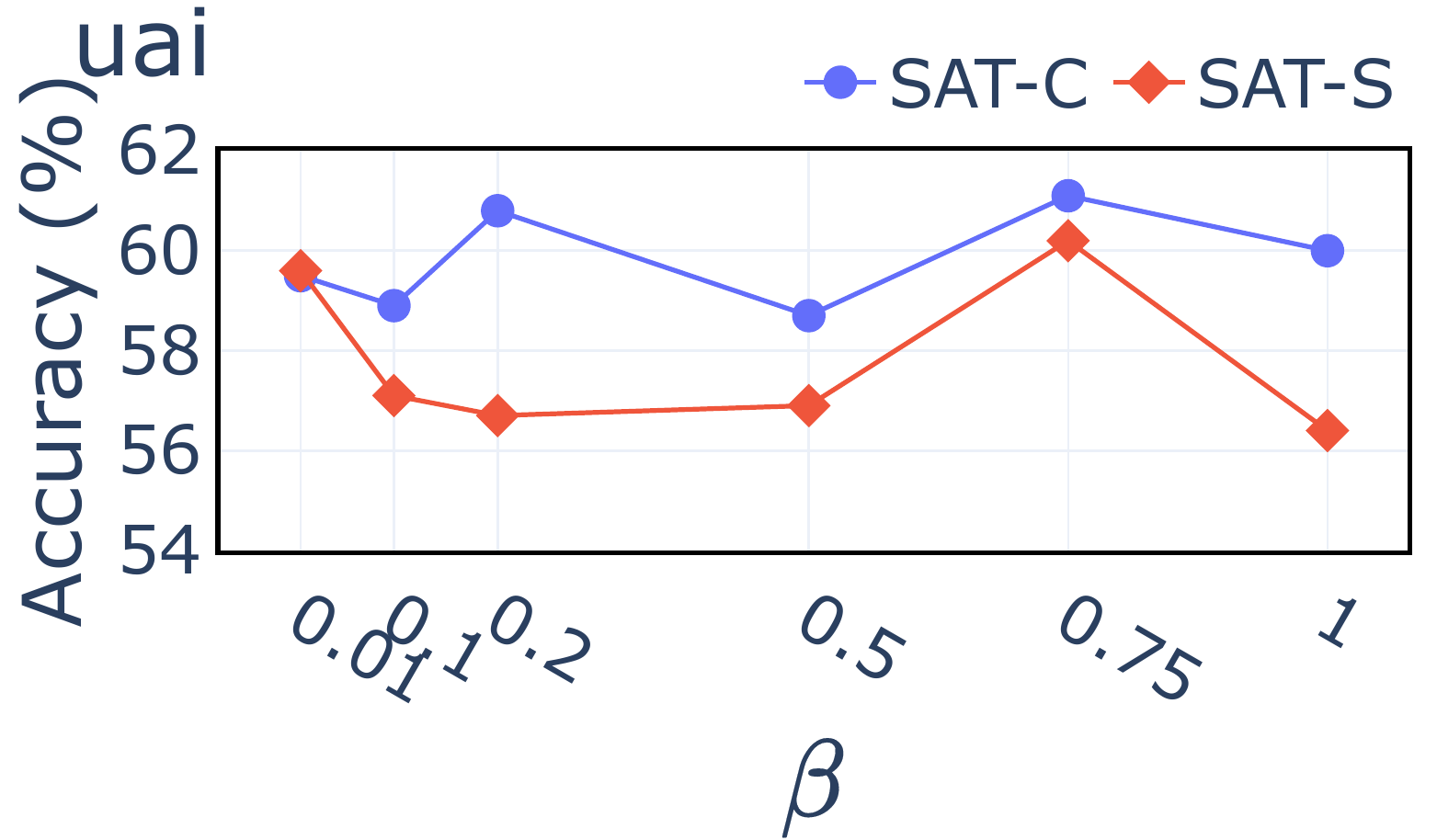} 
    \includegraphics[width=0.32\linewidth]{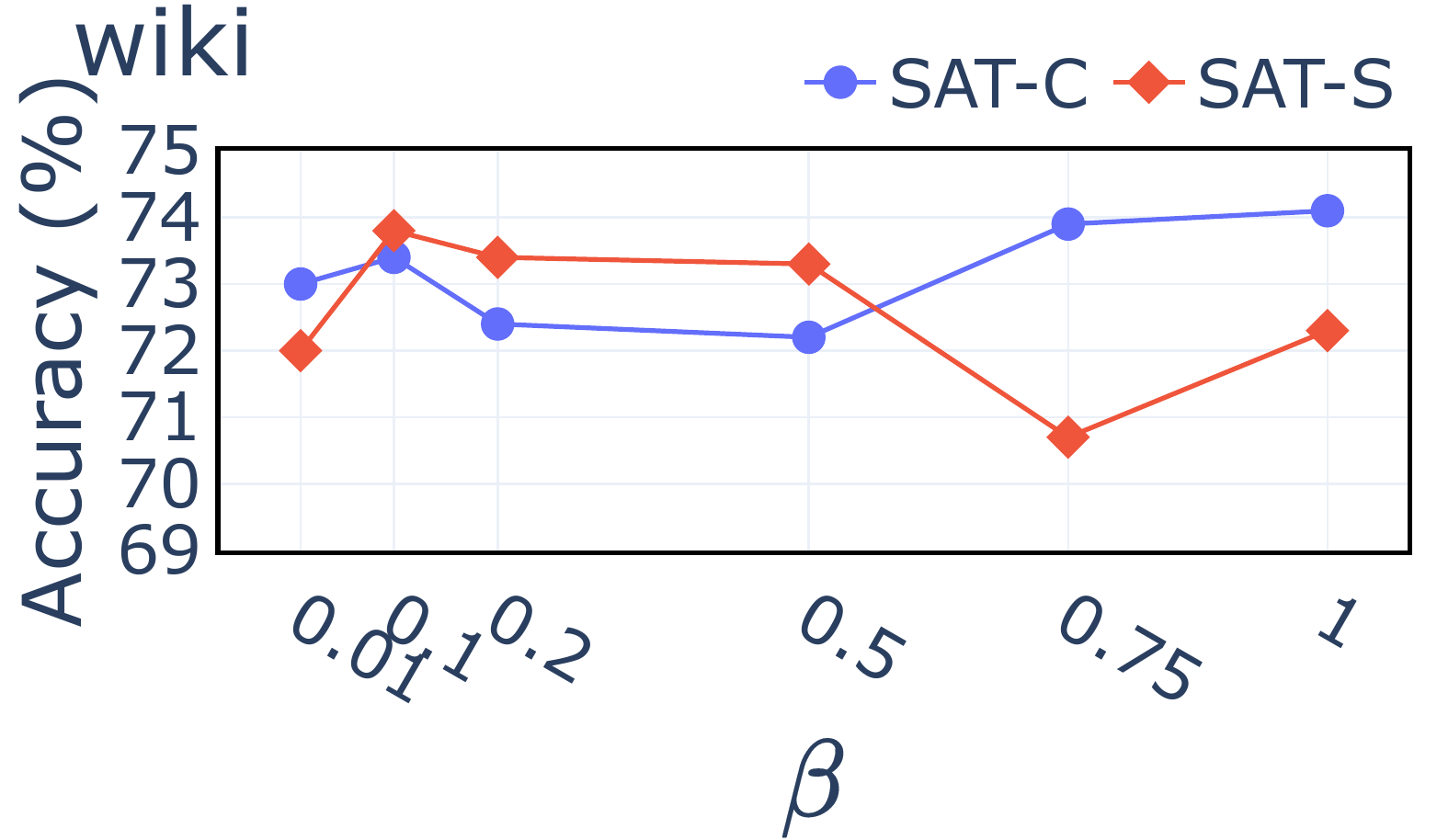} 
    \includegraphics[width=0.32\linewidth]{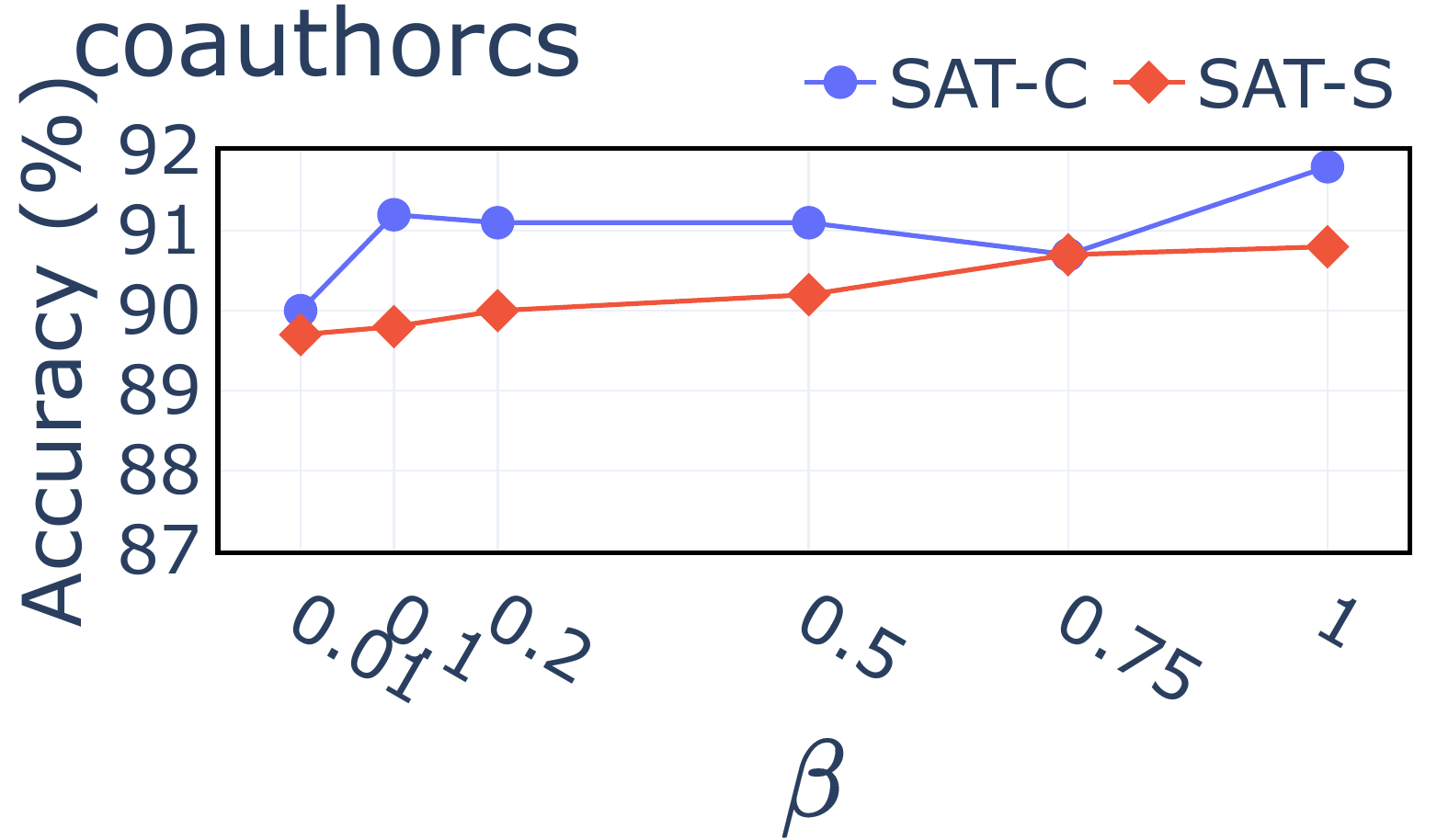} 
    \caption{Sensitivity test for $\beta$. 
    %The x-axis is $\beta$, while the y-axis is \textit{Accuracy}. 
    }
    \label{fig:sensitivity}
\end{figure}

\subsection{Comparisons on model parameters and space consumption between GAT and \model s}  \label{app:parameter_com}
To show the difference in architecture between GAT and \model s, we compare the number of parameters and memory usage between them.
The results have been summarized in Table \ref{complexity}.
SATs uses slightly more memory since more parameters, which are mainly due to the learning of latent spaces (i.e., $\mathbf p$), are used to perform the task of representation learning.
% As the table shows, more parameters, which are mainly due to the learning of latent spaces (i.e., $\mathbf p$) are used by variants of \model s to perform the task of representation learning.
Thus, \model s use slightly more memory in the training stage.
% Given the significant improvement made by \model s, such slightly more memory consumption is acceptable. 

\begin{table}[]
	\small
	\centering
	\caption{Parameter comparison between GAT and \model.}
	\label{complexity}
	\resizebox{\linewidth}{!}{
	\begin{tabular}{c|c|cccccc}
		\hline
		&                   & \textbf{Cora} & \textbf{Cite} & \textbf{Pubmed} & \textbf{Wiki} & \textbf{Uai}& \textbf{CoauthorCS}\\\hline
		\multirow{2}{*}{\textbf{GAT}}      & \# Parameters     & 92430         & 237644        & 32454           & 2682658             & 2682150 &  3620126           \\
		& Space consumption & 1.1GB         & 1.2GB         & 1.2GB           & 1.2GB               & 1.3GB   & 1.8GB           \\\hline
		\multirow{2}{*}{\textbf{\model-C}} & \# Parameters     & 111285        & 257505        & 91504           & 2723058             & 2745562  &  3894636          \\
		& Space consumption & 1.2GB         & 1.2GB         & 1.4GB           & 1.4GB               & 1.5GB  &2.5GB           \\\hline
		\multirow{2}{*}{\textbf{\model-S}} & \# Parameters     & 111285        & 257505        & 91504           & 2723058             & 2745562 &  3894636           \\
		& Space consumption & 1.2GB         & 1.2GB         & 1.4GB           & 1.4GB               & 1.6GB  &2.7GB            \\\hline
	\end{tabular}}
\end{table}

\section{Conclusion}
In this paper, we have proposed \mechanism~(\mecha), which generalizes a class of novel attention mechanisms for GNNs.
Motivated by the analogy between the apprehension span of human cognition and the scope of attention for feature aggregation, 
\mecha~leverages diverse forms of node-node dissimilarity to adapt the node-wise scope of attention, which can flexibly exclude those irrelevant neighbors from the feature aggregation stage.
% \mecha~therefore enables GNNs to learn representations by favoring the features of highly relevant nodes and ignoring those \warningtext{deemed} as irrelevant.
\mecha~therefore enables GNNs to learn representations by favoring the features of highly relevant neighbors and ignoring irrelevant neighbors.
Given different \mecha~mechanisms, we build Graph \mechanismlowercase~networks (\model s) to learn representations for various tasks arising from real-world graph data.
\model s have been tested on widely used benchmarking datasets and compared to several strong baselines.
The obtained notable results can validate the effectiveness of the proposed \mechanism.
In the future, the proposed \mecha~will be further improved by exploring more forms of node-node dissimilarity that can be used for computing \mecha~coefficients and developing \mecha~mechanisms that are compatible with multi-view graphs.

%%
%% The next two lines define the bibliography style to be used, and
%% the bibliography file.
%\newpage

\bibliographystyle{ACM-Reference-Format}
\bibliography{references}

%%% -*-BibTeX-*-
%%% Do NOT edit. File created by BibTeX with style
%%% ACM-Reference-Format-Journals [18-Jan-2012].

\begin{thebibliography}{61}

%%% ====================================================================
%%% NOTE TO THE USER: you can override these defaults by providing
%%% customized versions of any of these macros before the \bibliography
%%% command.  Each of them MUST provide its own final punctuation,
%%% except for \shownote{}, \showDOI{}, and \showURL{}.  The latter two
%%% do not use final punctuation, in order to avoid confusing it with
%%% the Web address.
%%%
%%% To suppress output of a particular field, define its macro to expand
%%% to an empty string, or better, \unskip, like this:
%%%
%%% \newcommand{\showDOI}[1]{\unskip}   % LaTeX syntax
%%%
%%% \def \showDOI #1{\unskip}           % plain TeX syntax
%%%
%%% ====================================================================

\ifx \showCODEN    \undefined \def \showCODEN     #1{\unskip}     \fi
\ifx \showDOI      \undefined \def \showDOI       #1{#1}\fi
\ifx \showISBNx    \undefined \def \showISBNx     #1{\unskip}     \fi
\ifx \showISBNxiii \undefined \def \showISBNxiii  #1{\unskip}     \fi
\ifx \showISSN     \undefined \def \showISSN      #1{\unskip}     \fi
\ifx \showLCCN     \undefined \def \showLCCN      #1{\unskip}     \fi
\ifx \shownote     \undefined \def \shownote      #1{#1}          \fi
\ifx \showarticletitle \undefined \def \showarticletitle #1{#1}   \fi
\ifx \showURL      \undefined \def \showURL       {\relax}        \fi
% The following commands are used for tagged output and should be
% invisible to TeX
\providecommand\bibfield[2]{#2}
\providecommand\bibinfo[2]{#2}
\providecommand\natexlab[1]{#1}
\providecommand\showeprint[2][]{arXiv:#2}

\bibitem[\protect\citeauthoryear{Abu-El-Haija, Perozzi, Kapoor, Alipourfard,
  Lerman, Harutyunyan, Ver~Steeg, and Galstyan}{Abu-El-Haija
  et~al\mbox{.}}{2019}]%
        {abu2019mixhop}
\bibfield{author}{\bibinfo{person}{Sami Abu-El-Haija}, \bibinfo{person}{Bryan
  Perozzi}, \bibinfo{person}{Amol Kapoor}, \bibinfo{person}{Nazanin
  Alipourfard}, \bibinfo{person}{Kristina Lerman}, \bibinfo{person}{Hrayr
  Harutyunyan}, \bibinfo{person}{Greg Ver~Steeg}, {and} \bibinfo{person}{Aram
  Galstyan}.} \bibinfo{year}{2019}\natexlab{}.
\newblock \showarticletitle{Mixhop: Higher-order graph convolutional
  architectures via sparsified neighborhood mixing}. In
  \bibinfo{booktitle}{\emph{ICML}}. PMLR, \bibinfo{pages}{21--29}.
\newblock


\bibitem[\protect\citeauthoryear{Bianchi, Grattarola, Livi, and Alippi}{Bianchi
  et~al\mbox{.}}{2022}]%
        {bianchi2021graph}
\bibfield{author}{\bibinfo{person}{Filippo~Maria Bianchi},
  \bibinfo{person}{Daniele Grattarola}, \bibinfo{person}{Lorenzo Livi}, {and}
  \bibinfo{person}{Cesare Alippi}.} \bibinfo{year}{2022}\natexlab{}.
\newblock \showarticletitle{Graph neural networks with convolutional arma
  filters}.
\newblock \bibinfo{journal}{\emph{IEEE TPAMI}}  \bibinfo{volume}{44}
  (\bibinfo{year}{2022}), \bibinfo{pages}{3496--3507}.
\newblock


\bibitem[\protect\citeauthoryear{Brody, Alon, and Yahav}{Brody
  et~al\mbox{.}}{2021}]%
        {brody2022how}
\bibfield{author}{\bibinfo{person}{Shaked Brody}, \bibinfo{person}{Uri Alon},
  {and} \bibinfo{person}{Eran Yahav}.} \bibinfo{year}{2021}\natexlab{}.
\newblock \showarticletitle{How Attentive are Graph Attention Networks?}. In
  \bibinfo{booktitle}{\emph{ICLR}}.
\newblock


\bibitem[\protect\citeauthoryear{Chen, Zhu, and Song}{Chen
  et~al\mbox{.}}{2018}]%
        {DBLP:conf/icml/ChenZS18}
\bibfield{author}{\bibinfo{person}{Jianfei Chen}, \bibinfo{person}{Jun Zhu},
  {and} \bibinfo{person}{Le Song}.} \bibinfo{year}{2018}\natexlab{}.
\newblock \showarticletitle{Stochastic Training of Graph Convolutional Networks
  with Variance Reduction}. In \bibinfo{booktitle}{\emph{{ICML}}},
  Vol.~\bibinfo{volume}{80}. \bibinfo{pages}{941--949}.
\newblock


\bibitem[\protect\citeauthoryear{Cheng, Dong, and Lapata}{Cheng
  et~al\mbox{.}}{2016}]%
        {cheng2016long}
\bibfield{author}{\bibinfo{person}{Jianpeng Cheng}, \bibinfo{person}{Li Dong},
  {and} \bibinfo{person}{Mirella Lapata}.} \bibinfo{year}{2016}\natexlab{}.
\newblock \showarticletitle{Long Short-Term Memory-Networks for Machine
  Reading}. In \bibinfo{booktitle}{\emph{EMNLP}}. \bibinfo{pages}{551--561}.
\newblock


\bibitem[\protect\citeauthoryear{Chi and Klahr}{Chi and Klahr}{1975}]%
        {chi1975span}
\bibfield{author}{\bibinfo{person}{Michelene~TH Chi} {and}
  \bibinfo{person}{David Klahr}.} \bibinfo{year}{1975}\natexlab{}.
\newblock \showarticletitle{Span and rate of apprehension in children and
  adults}.
\newblock \bibinfo{journal}{\emph{Journal of experimental child psychology}}
  \bibinfo{volume}{19}, \bibinfo{number}{3} (\bibinfo{year}{1975}),
  \bibinfo{pages}{434--439}.
\newblock


\bibitem[\protect\citeauthoryear{Corso, Cavalleri, Beaini, Li{\`o}, and
  Veli{\v{c}}kovi{\'c}}{Corso et~al\mbox{.}}{2020}]%
        {corso2020principal}
\bibfield{author}{\bibinfo{person}{Gabriele Corso}, \bibinfo{person}{Luca
  Cavalleri}, \bibinfo{person}{Dominique Beaini}, \bibinfo{person}{Pietro
  Li{\`o}}, {and} \bibinfo{person}{Petar Veli{\v{c}}kovi{\'c}}.}
  \bibinfo{year}{2020}\natexlab{}.
\newblock \showarticletitle{Principal neighbourhood aggregation for graph
  nets}. In \bibinfo{booktitle}{\emph{NeurIPS}}, Vol.~\bibinfo{volume}{33}.
  \bibinfo{pages}{13260--13271}.
\newblock


\bibitem[\protect\citeauthoryear{Dwivedi and Bresson}{Dwivedi and
  Bresson}{2020}]%
        {dwivedi2020generalization}
\bibfield{author}{\bibinfo{person}{Vijay~Prakash Dwivedi} {and}
  \bibinfo{person}{Xavier Bresson}.} \bibinfo{year}{2020}\natexlab{}.
\newblock \showarticletitle{A generalization of transformer networks to
  graphs}.
\newblock \bibinfo{journal}{\emph{arXiv preprint arXiv:2012.09699}}
  (\bibinfo{year}{2020}).
\newblock


\bibitem[\protect\citeauthoryear{Fu, Weber, Yang, Kerzel, Nan, Barros, Wu, Liu,
  and Wermter}{Fu et~al\mbox{.}}{2020}]%
        {fu2020can}
\bibfield{author}{\bibinfo{person}{Di Fu}, \bibinfo{person}{Cornelius Weber},
  \bibinfo{person}{Guochun Yang}, \bibinfo{person}{Matthias Kerzel},
  \bibinfo{person}{Weizhi Nan}, \bibinfo{person}{Pablo Barros},
  \bibinfo{person}{Haiyan Wu}, \bibinfo{person}{Xun Liu}, {and}
  \bibinfo{person}{Stefan Wermter}.} \bibinfo{year}{2020}\natexlab{}.
\newblock \showarticletitle{What can computational models learn from human
  selective attention? A review from an audiovisual unimodal and crossmodal
  perspective}.
\newblock \bibinfo{journal}{\emph{Frontiers in Integrative Neuroscience}}
  \bibinfo{volume}{14} (\bibinfo{year}{2020}), \bibinfo{pages}{10}.
\newblock


\bibitem[\protect\citeauthoryear{Gao and Ji}{Gao and Ji}{2019}]%
        {DBLP:conf/kdd/GaoJ19}
\bibfield{author}{\bibinfo{person}{Hongyang Gao} {and}
  \bibinfo{person}{Shuiwang Ji}.} \bibinfo{year}{2019}\natexlab{}.
\newblock \showarticletitle{Graph Representation Learning via Hard and
  Channel-Wise Attention Networks}. In \bibinfo{booktitle}{\emph{{KDD}}}.
  \bibinfo{pages}{741--749}.
\newblock


\bibitem[\protect\citeauthoryear{Glorot and Bengio}{Glorot and Bengio}{2010}]%
        {DBLP:journals/jmlr/GlorotB10}
\bibfield{author}{\bibinfo{person}{Xavier Glorot} {and} \bibinfo{person}{Yoshua
  Bengio}.} \bibinfo{year}{2010}\natexlab{}.
\newblock \showarticletitle{Understanding the difficulty of training deep
  feedforward neural networks}. In \bibinfo{booktitle}{\emph{{AISTATS}}},
  Vol.~\bibinfo{volume}{9}. \bibinfo{pages}{249--256}.
\newblock


\bibitem[\protect\citeauthoryear{Gulcehre, Denil, Malinowski, Razavi, Pascanu,
  Hermann, Battaglia, Bapst, Raposo, Santoro, et~al\mbox{.}}{Gulcehre
  et~al\mbox{.}}{2019}]%
        {gulcehre2018hyperbolic}
\bibfield{author}{\bibinfo{person}{Caglar Gulcehre}, \bibinfo{person}{Misha
  Denil}, \bibinfo{person}{Mateusz Malinowski}, \bibinfo{person}{Ali Razavi},
  \bibinfo{person}{Razvan Pascanu}, \bibinfo{person}{Karl~Moritz Hermann},
  \bibinfo{person}{Peter Battaglia}, \bibinfo{person}{Victor Bapst},
  \bibinfo{person}{David Raposo}, \bibinfo{person}{Adam Santoro},
  {et~al\mbox{.}}} \bibinfo{year}{2019}\natexlab{}.
\newblock \showarticletitle{Hyperbolic Attention Networks}. In
  \bibinfo{booktitle}{\emph{ICLR}}.
\newblock


\bibitem[\protect\citeauthoryear{Hamilton, Ying, and Leskovec}{Hamilton
  et~al\mbox{.}}{2017}]%
        {DBLP:conf/nips/HamiltonYL17}
\bibfield{author}{\bibinfo{person}{William~L. Hamilton},
  \bibinfo{person}{Zhitao Ying}, {and} \bibinfo{person}{Jure Leskovec}.}
  \bibinfo{year}{2017}\natexlab{}.
\newblock \showarticletitle{Inductive Representation Learning on Large Graphs}.
  In \bibinfo{booktitle}{\emph{NeurIPS}}. \bibinfo{pages}{1024--1034}.
\newblock


\bibitem[\protect\citeauthoryear{Han, Chen, Hai, Poria, and Bing}{Han
  et~al\mbox{.}}{2022}]%
        {han2022sancl}
\bibfield{author}{\bibinfo{person}{Wei Han}, \bibinfo{person}{Hui Chen},
  \bibinfo{person}{Zhen Hai}, \bibinfo{person}{Soujanya Poria}, {and}
  \bibinfo{person}{Lidong Bing}.} \bibinfo{year}{2022}\natexlab{}.
\newblock \showarticletitle{SANCL: Multimodal Review Helpfulness Prediction
  with Selective Attention and Natural Contrastive Learning}. In
  \bibinfo{booktitle}{\emph{COLING}}.
\newblock


\bibitem[\protect\citeauthoryear{He, Ong, and Bai}{He et~al\mbox{.}}{2021}]%
        {he2021learning}
\bibfield{author}{\bibinfo{person}{Tiantian He}, \bibinfo{person}{Yew-Soon
  Ong}, {and} \bibinfo{person}{Lu Bai}.} \bibinfo{year}{2021}\natexlab{}.
\newblock \showarticletitle{Learning Conjoint Attentions for Graph Neural
  Nets}. In \bibinfo{booktitle}{\emph{NeurIPS}}, Vol.~\bibinfo{volume}{34}.
  \bibinfo{pages}{2641--2653}.
\newblock


\bibitem[\protect\citeauthoryear{Hou, Zhang, Cheng, Ma, Ma, Chen, and Yang}{Hou
  et~al\mbox{.}}{2020}]%
        {DBLP:conf/iclr/HouZCMMCY20}
\bibfield{author}{\bibinfo{person}{Yifan Hou}, \bibinfo{person}{Jian Zhang},
  \bibinfo{person}{James Cheng}, \bibinfo{person}{Kaili Ma},
  \bibinfo{person}{Richard T.~B. Ma}, \bibinfo{person}{Hongzhi Chen}, {and}
  \bibinfo{person}{Ming{-}Chang Yang}.} \bibinfo{year}{2020}\natexlab{}.
\newblock \showarticletitle{Measuring and Improving the Use of Graph
  Information in Graph Neural Networks}. In \bibinfo{booktitle}{\emph{{ICLR}}}.
\newblock


\bibitem[\protect\citeauthoryear{Hu, Fey, Zitnik, Dong, Ren, Liu, Catasta, and
  Leskovec}{Hu et~al\mbox{.}}{2020}]%
        {DBLP:conf/nips/HuFZDRLCL20}
\bibfield{author}{\bibinfo{person}{Weihua Hu}, \bibinfo{person}{Matthias Fey},
  \bibinfo{person}{Marinka Zitnik}, \bibinfo{person}{Yuxiao Dong},
  \bibinfo{person}{Hongyu Ren}, \bibinfo{person}{Bowen Liu},
  \bibinfo{person}{Michele Catasta}, {and} \bibinfo{person}{Jure Leskovec}.}
  \bibinfo{year}{2020}\natexlab{}.
\newblock \showarticletitle{Open Graph Benchmark: Datasets for Machine Learning
  on Graphs}. In \bibinfo{booktitle}{\emph{NeurIPS}}.
\newblock


\bibitem[\protect\citeauthoryear{Huang, Li, Yang, Gu, Zhu, Seo, Meng, Harada,
  and Sato}{Huang et~al\mbox{.}}{2021}]%
        {huang2021leveraging}
\bibfield{author}{\bibinfo{person}{Yifei Huang}, \bibinfo{person}{Xiaoxiao Li},
  \bibinfo{person}{Lijin Yang}, \bibinfo{person}{Lin Gu},
  \bibinfo{person}{Yingying Zhu}, \bibinfo{person}{Hirofumi Seo},
  \bibinfo{person}{Qiuming Meng}, \bibinfo{person}{Tatsuya Harada}, {and}
  \bibinfo{person}{Yoichi Sato}.} \bibinfo{year}{2021}\natexlab{}.
\newblock \showarticletitle{Leveraging Human Selective Attention for Medical
  Image Analysis with Limited Training Data}. In
  \bibinfo{booktitle}{\emph{{BMVC}}}.
\newblock


\bibitem[\protect\citeauthoryear{Hussain, Zaki, and Subramanian}{Hussain
  et~al\mbox{.}}{2022}]%
        {hussain2021edge}
\bibfield{author}{\bibinfo{person}{Md~Shamim Hussain},
  \bibinfo{person}{Mohammed~J Zaki}, {and} \bibinfo{person}{Dharmashankar
  Subramanian}.} \bibinfo{year}{2022}\natexlab{}.
\newblock \showarticletitle{Edge-augmented graph transformers: Global
  self-attention is enough for graphs}. In \bibinfo{booktitle}{\emph{KDD}}.
\newblock


\bibitem[\protect\citeauthoryear{Johnston and Dark}{Johnston and Dark}{1986}]%
        {johnston1986selective}
\bibfield{author}{\bibinfo{person}{William~A Johnston} {and}
  \bibinfo{person}{Veronica~J Dark}.} \bibinfo{year}{1986}\natexlab{}.
\newblock \showarticletitle{Selective attention}.
\newblock \bibinfo{journal}{\emph{Annual review of psychology}}
  \bibinfo{volume}{37}, \bibinfo{number}{1} (\bibinfo{year}{1986}),
  \bibinfo{pages}{43--75}.
\newblock


\bibitem[\protect\citeauthoryear{Kingma and Ba}{Kingma and Ba}{2014}]%
        {kingma2014adam}
\bibfield{author}{\bibinfo{person}{Diederik~P Kingma} {and}
  \bibinfo{person}{Jimmy Ba}.} \bibinfo{year}{2014}\natexlab{}.
\newblock \showarticletitle{Adam: A method for stochastic optimization}.
\newblock \bibinfo{journal}{\emph{arXiv preprint arXiv:1412.6980}}
  (\bibinfo{year}{2014}).
\newblock


\bibitem[\protect\citeauthoryear{Kipf and Welling}{Kipf and Welling}{2017}]%
        {DBLP:conf/iclr/KipfW17}
\bibfield{author}{\bibinfo{person}{Thomas~N. Kipf} {and} \bibinfo{person}{Max
  Welling}.} \bibinfo{year}{2017}\natexlab{}.
\newblock \showarticletitle{Semi-Supervised Classification with Graph
  Convolutional Networks}. In \bibinfo{booktitle}{\emph{ICLR}}.
\newblock


\bibitem[\protect\citeauthoryear{Klicpera, Bojchevski, and
  G{\"{u}}nnemann}{Klicpera et~al\mbox{.}}{2019a}]%
        {DBLP:conf/iclr/KlicperaBG19}
\bibfield{author}{\bibinfo{person}{Johannes Klicpera},
  \bibinfo{person}{Aleksandar Bojchevski}, {and} \bibinfo{person}{Stephan
  G{\"{u}}nnemann}.} \bibinfo{year}{2019}\natexlab{a}.
\newblock \showarticletitle{Predict then Propagate: Graph Neural Networks meet
  Personalized PageRank}. In \bibinfo{booktitle}{\emph{{ICLR}}}.
\newblock


\bibitem[\protect\citeauthoryear{Klicpera, Wei{\ss}enberger, and
  G{\"{u}}nnemann}{Klicpera et~al\mbox{.}}{2019b}]%
        {DBLP:conf/nips/KlicperaWG19}
\bibfield{author}{\bibinfo{person}{Johannes Klicpera}, \bibinfo{person}{Stefan
  Wei{\ss}enberger}, {and} \bibinfo{person}{Stephan G{\"{u}}nnemann}.}
  \bibinfo{year}{2019}\natexlab{b}.
\newblock \showarticletitle{Diffusion Improves Graph Learning}. In
  \bibinfo{booktitle}{\emph{NeurIPS}}. \bibinfo{pages}{13333--13345}.
\newblock


\bibitem[\protect\citeauthoryear{Kreuzer, Beaini, Hamilton, L{\'e}tourneau, and
  Tossou}{Kreuzer et~al\mbox{.}}{2021}]%
        {kreuzer2021rethinking}
\bibfield{author}{\bibinfo{person}{Devin Kreuzer}, \bibinfo{person}{Dominique
  Beaini}, \bibinfo{person}{Will Hamilton}, \bibinfo{person}{Vincent
  L{\'e}tourneau}, {and} \bibinfo{person}{Prudencio Tossou}.}
  \bibinfo{year}{2021}\natexlab{}.
\newblock \showarticletitle{Rethinking graph transformers with spectral
  attention}. In \bibinfo{booktitle}{\emph{NeurIPS}},
  Vol.~\bibinfo{volume}{34}.
\newblock


\bibitem[\protect\citeauthoryear{Li, Wang, Liu, Chen, and Xiong}{Li
  et~al\mbox{.}}{2022}]%
        {DBLP:conf/icml/LiWLCX22}
\bibfield{author}{\bibinfo{person}{Hongkang Li}, \bibinfo{person}{Meng Wang},
  \bibinfo{person}{Sijia Liu}, \bibinfo{person}{Pin{-}Yu Chen}, {and}
  \bibinfo{person}{Jinjun Xiong}.} \bibinfo{year}{2022}\natexlab{}.
\newblock \showarticletitle{Generalization Guarantee of Training Graph
  Convolutional Networks with Graph Topology Sampling}. In
  \bibinfo{booktitle}{\emph{{ICML}}}, Vol.~\bibinfo{volume}{162}.
  \bibinfo{pages}{13014--13051}.
\newblock


\bibitem[\protect\citeauthoryear{Liu, Chen, Li, Zhou, Li, Song, and Qi}{Liu
  et~al\mbox{.}}{2019}]%
        {DBLP:conf/aaai/LiuCLZLSQ19}
\bibfield{author}{\bibinfo{person}{Ziqi Liu}, \bibinfo{person}{Chaochao Chen},
  \bibinfo{person}{Longfei Li}, \bibinfo{person}{Jun Zhou},
  \bibinfo{person}{Xiaolong Li}, \bibinfo{person}{Le Song}, {and}
  \bibinfo{person}{Yuan Qi}.} \bibinfo{year}{2019}\natexlab{}.
\newblock \showarticletitle{GeniePath: Graph Neural Networks with Adaptive
  Receptive Paths}. In \bibinfo{booktitle}{\emph{{AAAI}}}.
  \bibinfo{pages}{4424--4431}.
\newblock


\bibitem[\protect\citeauthoryear{Lu and Getoor}{Lu and Getoor}{2003}]%
        {DBLP:conf/icml/LuG03}
\bibfield{author}{\bibinfo{person}{Qing Lu} {and} \bibinfo{person}{Lise
  Getoor}.} \bibinfo{year}{2003}\natexlab{}.
\newblock \showarticletitle{Link-based Classification}. In
  \bibinfo{booktitle}{\emph{{ICML}}}, \bibfield{editor}{\bibinfo{person}{Tom
  Fawcett} {and} \bibinfo{person}{Nina Mishra}} (Eds.).
  \bibinfo{pages}{496--503}.
\newblock


\bibitem[\protect\citeauthoryear{Ma, Wang, Chen, and Song}{Ma
  et~al\mbox{.}}{2021}]%
        {DBLP:conf/www/MaWCS21}
\bibfield{author}{\bibinfo{person}{Xiaojun Ma}, \bibinfo{person}{Junshan Wang},
  \bibinfo{person}{Hanyue Chen}, {and} \bibinfo{person}{Guojie Song}.}
  \bibinfo{year}{2021}\natexlab{}.
\newblock \showarticletitle{Improving Graph Neural Networks with Structural
  Adaptive Receptive Fields}. In \bibinfo{booktitle}{\emph{{WWW}}}.
  \bibinfo{pages}{2438--2447}.
\newblock


\bibitem[\protect\citeauthoryear{Maruf, Martins, and Haffari}{Maruf
  et~al\mbox{.}}{2019}]%
        {DBLP:conf/naacl/MarufMH19}
\bibfield{author}{\bibinfo{person}{Sameen Maruf}, \bibinfo{person}{Andr{\'{e}}
  F.~T. Martins}, {and} \bibinfo{person}{Gholamreza Haffari}.}
  \bibinfo{year}{2019}\natexlab{}.
\newblock \showarticletitle{Selective Attention for Context-aware Neural
  Machine Translation}. In \bibinfo{booktitle}{\emph{{NAACL-HLT}}}.
  \bibinfo{pages}{3092--3102}.
\newblock


\bibitem[\protect\citeauthoryear{Mialon, Chen, Selosse, and Mairal}{Mialon
  et~al\mbox{.}}{2021}]%
        {mialon2021graphit}
\bibfield{author}{\bibinfo{person}{Gr{\'e}goire Mialon},
  \bibinfo{person}{Dexiong Chen}, \bibinfo{person}{Margot Selosse}, {and}
  \bibinfo{person}{Julien Mairal}.} \bibinfo{year}{2021}\natexlab{}.
\newblock \showarticletitle{Graphit: Encoding graph structure in transformers}.
\newblock \bibinfo{journal}{\emph{arXiv preprint arXiv:2106.05667}}
  (\bibinfo{year}{2021}).
\newblock


\bibitem[\protect\citeauthoryear{Min, Chen, Bian, Xu, Zhao, Huang, Zhao, Huang,
  Ananiadou, and Rong}{Min et~al\mbox{.}}{2022}]%
        {min2022transformer}
\bibfield{author}{\bibinfo{person}{Erxue Min}, \bibinfo{person}{Runfa Chen},
  \bibinfo{person}{Yatao Bian}, \bibinfo{person}{Tingyang Xu},
  \bibinfo{person}{Kangfei Zhao}, \bibinfo{person}{Wenbing Huang},
  \bibinfo{person}{Peilin Zhao}, \bibinfo{person}{Junzhou Huang},
  \bibinfo{person}{Sophia Ananiadou}, {and} \bibinfo{person}{Yu Rong}.}
  \bibinfo{year}{2022}\natexlab{}.
\newblock \showarticletitle{Transformer for Graphs: An Overview from
  Architecture Perspective}.
\newblock \bibinfo{journal}{\emph{arXiv preprint arXiv:2202.08455}}
  (\bibinfo{year}{2022}).
\newblock


\bibitem[\protect\citeauthoryear{Monti, Boscaini, Masci, Rodol{\`{a}}, Svoboda,
  and Bronstein}{Monti et~al\mbox{.}}{2017}]%
        {DBLP:conf/cvpr/MontiBMRSB17}
\bibfield{author}{\bibinfo{person}{Federico Monti}, \bibinfo{person}{Davide
  Boscaini}, \bibinfo{person}{Jonathan Masci}, \bibinfo{person}{Emanuele
  Rodol{\`{a}}}, \bibinfo{person}{Jan Svoboda}, {and}
  \bibinfo{person}{Michael~M. Bronstein}.} \bibinfo{year}{2017}\natexlab{}.
\newblock \showarticletitle{Geometric Deep Learning on Graphs and Manifolds
  Using Mixture Model CNNs}. In \bibinfo{booktitle}{\emph{{CVPR}}}.
  \bibinfo{pages}{5425--5434}.
\newblock


\bibitem[\protect\citeauthoryear{Papadopoulos, Aldecoa, and
  Krioukov}{Papadopoulos et~al\mbox{.}}{2015}]%
        {papadopoulos2015network}
\bibfield{author}{\bibinfo{person}{Fragkiskos Papadopoulos},
  \bibinfo{person}{Rodrigo Aldecoa}, {and} \bibinfo{person}{Dmitri Krioukov}.}
  \bibinfo{year}{2015}\natexlab{}.
\newblock \showarticletitle{Network geometry inference using common neighbors}.
\newblock \bibinfo{journal}{\emph{Physical Review E}} \bibinfo{volume}{92},
  \bibinfo{number}{2} (\bibinfo{year}{2015}), \bibinfo{pages}{022807}.
\newblock


\bibitem[\protect\citeauthoryear{Qiu, Dong, Ma, Li, Wang, and Tang}{Qiu
  et~al\mbox{.}}{2018}]%
        {DBLP:conf/wsdm/QiuDMLWT18}
\bibfield{author}{\bibinfo{person}{Jiezhong Qiu}, \bibinfo{person}{Yuxiao
  Dong}, \bibinfo{person}{Hao Ma}, \bibinfo{person}{Jian Li},
  \bibinfo{person}{Kuansan Wang}, {and} \bibinfo{person}{Jie Tang}.}
  \bibinfo{year}{2018}\natexlab{}.
\newblock \showarticletitle{Network Embedding as Matrix Factorization: Unifying
  DeepWalk, LINE, PTE, and node2vec}. In \bibinfo{booktitle}{\emph{{WSDM}}}.
  \bibinfo{pages}{459--467}.
\newblock


\bibitem[\protect\citeauthoryear{Rong, Huang, Xu, and Huang}{Rong
  et~al\mbox{.}}{2020}]%
        {Rong2020DropEdge}
\bibfield{author}{\bibinfo{person}{Yu Rong}, \bibinfo{person}{Wenbing Huang},
  \bibinfo{person}{Tingyang Xu}, {and} \bibinfo{person}{Junzhou Huang}.}
  \bibinfo{year}{2020}\natexlab{}.
\newblock \showarticletitle{DropEdge: Towards Deep Graph Convolutional Networks
  on Node Classification}. In \bibinfo{booktitle}{\emph{ICLR}}.
\newblock


\bibitem[\protect\citeauthoryear{Sarkar, Chakrabarti, and Moore}{Sarkar
  et~al\mbox{.}}{2010}]%
        {DBLP:conf/colt/SarkarCM10}
\bibfield{author}{\bibinfo{person}{Purnamrita Sarkar},
  \bibinfo{person}{Deepayan Chakrabarti}, {and} \bibinfo{person}{Andrew~W.
  Moore}.} \bibinfo{year}{2010}\natexlab{}.
\newblock \showarticletitle{Theoretical Justification of Popular Link
  Prediction Heuristics}. In \bibinfo{booktitle}{\emph{{COLT}}}.
  \bibinfo{pages}{295--307}.
\newblock


\bibitem[\protect\citeauthoryear{Sen, Namata, Bilgic, Getoor, Gallagher, and
  Eliassi{-}Rad}{Sen et~al\mbox{.}}{2008}]%
        {DBLP:journals/aim/SenNBGGE08}
\bibfield{author}{\bibinfo{person}{Prithviraj Sen}, \bibinfo{person}{Galileo
  Namata}, \bibinfo{person}{Mustafa Bilgic}, \bibinfo{person}{Lise Getoor},
  \bibinfo{person}{Brian Gallagher}, {and} \bibinfo{person}{Tina
  Eliassi{-}Rad}.} \bibinfo{year}{2008}\natexlab{}.
\newblock \showarticletitle{Collective Classification in Network Data}.
\newblock \bibinfo{journal}{\emph{{AI} Mag.}} \bibinfo{volume}{29},
  \bibinfo{number}{3} (\bibinfo{year}{2008}), \bibinfo{pages}{93--106}.
\newblock


\bibitem[\protect\citeauthoryear{Shchur, Mumme, Bojchevski, and
  G{\"{u}}nnemann}{Shchur et~al\mbox{.}}{2018}]%
        {DBLP:journals/corr/abs-1811-05868}
\bibfield{author}{\bibinfo{person}{Oleksandr Shchur},
  \bibinfo{person}{Maximilian Mumme}, \bibinfo{person}{Aleksandar Bojchevski},
  {and} \bibinfo{person}{Stephan G{\"{u}}nnemann}.}
  \bibinfo{year}{2018}\natexlab{}.
\newblock \showarticletitle{Pitfalls of Graph Neural Network Evaluation}.
\newblock \bibinfo{journal}{\emph{CoRR}}  \bibinfo{volume}{abs/1811.05868}
  (\bibinfo{year}{2018}).
\newblock


\bibitem[\protect\citeauthoryear{Vaswani, Shazeer, Parmar, Uszkoreit, Jones,
  Gomez, Kaiser, and Polosukhin}{Vaswani et~al\mbox{.}}{2017}]%
        {DBLP:conf/nips/VaswaniSPUJGKP17}
\bibfield{author}{\bibinfo{person}{Ashish Vaswani}, \bibinfo{person}{Noam
  Shazeer}, \bibinfo{person}{Niki Parmar}, \bibinfo{person}{Jakob Uszkoreit},
  \bibinfo{person}{Llion Jones}, \bibinfo{person}{Aidan~N. Gomez},
  \bibinfo{person}{Lukasz Kaiser}, {and} \bibinfo{person}{Illia Polosukhin}.}
  \bibinfo{year}{2017}\natexlab{}.
\newblock \showarticletitle{Attention is All you Need}. In
  \bibinfo{booktitle}{\emph{NeurIPS}}. \bibinfo{pages}{5998--6008}.
\newblock


\bibitem[\protect\citeauthoryear{Velickovic, Cucurull, Casanova, Romero,
  Li{\`{o}}, and Bengio}{Velickovic et~al\mbox{.}}{2018}]%
        {DBLP:conf/iclr/VelickovicCCRLB18}
\bibfield{author}{\bibinfo{person}{Petar Velickovic}, \bibinfo{person}{Guillem
  Cucurull}, \bibinfo{person}{Arantxa Casanova}, \bibinfo{person}{Adriana
  Romero}, \bibinfo{person}{Pietro Li{\`{o}}}, {and} \bibinfo{person}{Yoshua
  Bengio}.} \bibinfo{year}{2018}\natexlab{}.
\newblock \showarticletitle{Graph Attention Networks}. In
  \bibinfo{booktitle}{\emph{{ICLR}}}.
\newblock


\bibitem[\protect\citeauthoryear{Wang, Ying, Huang, and Leskovec}{Wang
  et~al\mbox{.}}{2021}]%
        {wang2021multi}
\bibfield{author}{\bibinfo{person}{Guangtao Wang}, \bibinfo{person}{Rex Ying},
  \bibinfo{person}{Jing Huang}, {and} \bibinfo{person}{Jure Leskovec}.}
  \bibinfo{year}{2021}\natexlab{}.
\newblock \showarticletitle{Multi-hop Attention Graph Neural Networks}. In
  \bibinfo{booktitle}{\emph{IJCAI}}.
\newblock


\bibitem[\protect\citeauthoryear{Wang, Ji, Shi, Wang, Ye, Cui, and Yu}{Wang
  et~al\mbox{.}}{2019}]%
        {DBLP:conf/www/WangJSWYCY19}
\bibfield{author}{\bibinfo{person}{Xiao Wang}, \bibinfo{person}{Houye Ji},
  \bibinfo{person}{Chuan Shi}, \bibinfo{person}{Bai Wang},
  \bibinfo{person}{Yanfang Ye}, \bibinfo{person}{Peng Cui}, {and}
  \bibinfo{person}{Philip~S. Yu}.} \bibinfo{year}{2019}\natexlab{}.
\newblock \showarticletitle{Heterogeneous Graph Attention Network}. In
  \bibinfo{booktitle}{\emph{{WWW}}}. \bibinfo{pages}{2022--2032}.
\newblock


\bibitem[\protect\citeauthoryear{Weisfeiler and Leman}{Weisfeiler and
  Leman}{1968}]%
        {weisfeiler1968reduction}
\bibfield{author}{\bibinfo{person}{Boris Weisfeiler} {and}
  \bibinfo{person}{Andrei Leman}.} \bibinfo{year}{1968}\natexlab{}.
\newblock \showarticletitle{The reduction of a graph to canonical form and the
  algebra which appears therein}.
\newblock \bibinfo{journal}{\emph{NTI, Series}} \bibinfo{volume}{2},
  \bibinfo{number}{9} (\bibinfo{year}{1968}), \bibinfo{pages}{12--16}.
\newblock


\bibitem[\protect\citeauthoryear{Wu, Jr., Zhang, Fifty, Yu, and Weinberger}{Wu
  et~al\mbox{.}}{2019}]%
        {DBLP:conf/icml/WuSZFYW19}
\bibfield{author}{\bibinfo{person}{Felix Wu}, \bibinfo{person}{Amauri H.~Souza
  Jr.}, \bibinfo{person}{Tianyi Zhang}, \bibinfo{person}{Christopher Fifty},
  \bibinfo{person}{Tao Yu}, {and} \bibinfo{person}{Kilian~Q. Weinberger}.}
  \bibinfo{year}{2019}\natexlab{}.
\newblock \showarticletitle{Simplifying Graph Convolutional Networks}. In
  \bibinfo{booktitle}{\emph{{ICML}}}, Vol.~\bibinfo{volume}{97}.
  \bibinfo{pages}{6861--6871}.
\newblock


\bibitem[\protect\citeauthoryear{Wu, Jain, Wright, Mirhoseini, Gonzalez, and
  Stoica}{Wu et~al\mbox{.}}{2021}]%
        {wu2021representing}
\bibfield{author}{\bibinfo{person}{Zhanghao Wu}, \bibinfo{person}{Paras Jain},
  \bibinfo{person}{Matthew Wright}, \bibinfo{person}{Azalia Mirhoseini},
  \bibinfo{person}{Joseph~E Gonzalez}, {and} \bibinfo{person}{Ion Stoica}.}
  \bibinfo{year}{2021}\natexlab{}.
\newblock \showarticletitle{Representing long-range context for graph neural
  networks with global attention}. In \bibinfo{booktitle}{\emph{NeurIPS}},
  Vol.~\bibinfo{volume}{34}. \bibinfo{pages}{13266--13279}.
\newblock


\bibitem[\protect\citeauthoryear{Xu, Ba, Kiros, Cho, Courville, Salakhudinov,
  Zemel, and Bengio}{Xu et~al\mbox{.}}{2015}]%
        {xu2015show}
\bibfield{author}{\bibinfo{person}{Kelvin Xu}, \bibinfo{person}{Jimmy Ba},
  \bibinfo{person}{Ryan Kiros}, \bibinfo{person}{Kyunghyun Cho},
  \bibinfo{person}{Aaron Courville}, \bibinfo{person}{Ruslan Salakhudinov},
  \bibinfo{person}{Rich Zemel}, {and} \bibinfo{person}{Yoshua Bengio}.}
  \bibinfo{year}{2015}\natexlab{}.
\newblock \showarticletitle{Show, attend and tell: Neural image caption
  generation with visual attention}. In \bibinfo{booktitle}{\emph{ICML}}.
  \bibinfo{pages}{2048--2057}.
\newblock


\bibitem[\protect\citeauthoryear{Xu, Hu, Leskovec, and Jegelka}{Xu
  et~al\mbox{.}}{2019}]%
        {DBLP:conf/iclr/XuHLJ19}
\bibfield{author}{\bibinfo{person}{Keyulu Xu}, \bibinfo{person}{Weihua Hu},
  \bibinfo{person}{Jure Leskovec}, {and} \bibinfo{person}{Stefanie Jegelka}.}
  \bibinfo{year}{2019}\natexlab{}.
\newblock \showarticletitle{How Powerful are Graph Neural Networks?}. In
  \bibinfo{booktitle}{\emph{{ICLR}}}.
\newblock


\bibitem[\protect\citeauthoryear{Xu, Li, Tian, Sonobe, Kawarabayashi, and
  Jegelka}{Xu et~al\mbox{.}}{2018}]%
        {DBLP:conf/icml/XuLTSKJ18}
\bibfield{author}{\bibinfo{person}{Keyulu Xu}, \bibinfo{person}{Chengtao Li},
  \bibinfo{person}{Yonglong Tian}, \bibinfo{person}{Tomohiro Sonobe},
  \bibinfo{person}{Ken{-}ichi Kawarabayashi}, {and} \bibinfo{person}{Stefanie
  Jegelka}.} \bibinfo{year}{2018}\natexlab{}.
\newblock \showarticletitle{Representation Learning on Graphs with Jumping
  Knowledge Networks}. In \bibinfo{booktitle}{\emph{{ICML}}},
  Vol.~\bibinfo{volume}{80}. \bibinfo{pages}{5449--5458}.
\newblock


\bibitem[\protect\citeauthoryear{Yang, Cohen, and Salakhudinov}{Yang
  et~al\mbox{.}}{2016}]%
        {yang2016revisiting}
\bibfield{author}{\bibinfo{person}{Zhilin Yang}, \bibinfo{person}{William
  Cohen}, {and} \bibinfo{person}{Ruslan Salakhudinov}.}
  \bibinfo{year}{2016}\natexlab{}.
\newblock \showarticletitle{Revisiting semi-supervised learning with graph
  embeddings}. In \bibinfo{booktitle}{\emph{ICML}}. \bibinfo{pages}{40--48}.
\newblock


\bibitem[\protect\citeauthoryear{Ying, Cai, Luo, Zheng, Ke, He, Shen, and
  Liu}{Ying et~al\mbox{.}}{2021}]%
        {ying2021transformers}
\bibfield{author}{\bibinfo{person}{Chengxuan Ying}, \bibinfo{person}{Tianle
  Cai}, \bibinfo{person}{Shengjie Luo}, \bibinfo{person}{Shuxin Zheng},
  \bibinfo{person}{Guolin Ke}, \bibinfo{person}{Di He},
  \bibinfo{person}{Yanming Shen}, {and} \bibinfo{person}{Tie{-}Yan Liu}.}
  \bibinfo{year}{2021}\natexlab{}.
\newblock \showarticletitle{Do Transformers Really Perform Badly for Graph
  Representation?}. In \bibinfo{booktitle}{\emph{NeurIPS}}.
  \bibinfo{pages}{28877--28888}.
\newblock


\bibitem[\protect\citeauthoryear{You, Ying, and Leskovec}{You
  et~al\mbox{.}}{2019}]%
        {DBLP:conf/icml/YouYL19}
\bibfield{author}{\bibinfo{person}{Jiaxuan You}, \bibinfo{person}{Rex Ying},
  {and} \bibinfo{person}{Jure Leskovec}.} \bibinfo{year}{2019}\natexlab{}.
\newblock \showarticletitle{Position-aware Graph Neural Networks}. In
  \bibinfo{booktitle}{\emph{{ICML}}}, Vol.~\bibinfo{volume}{97}.
  \bibinfo{pages}{7134--7143}.
\newblock


\bibitem[\protect\citeauthoryear{You, Ying, and Leskovec}{You
  et~al\mbox{.}}{2020}]%
        {DBLP:conf/nips/YouYL20}
\bibfield{author}{\bibinfo{person}{Jiaxuan You}, \bibinfo{person}{Zhitao Ying},
  {and} \bibinfo{person}{Jure Leskovec}.} \bibinfo{year}{2020}\natexlab{}.
\newblock \showarticletitle{Design Space for Graph Neural Networks}. In
  \bibinfo{booktitle}{\emph{NeurIPS}}.
\newblock


\bibitem[\protect\citeauthoryear{Zeng, Zhang, Xia, Srivastava, Malevich,
  Kannan, Prasanna, Jin, and Chen}{Zeng et~al\mbox{.}}{2021}]%
        {zeng2021decoupling}
\bibfield{author}{\bibinfo{person}{Hanqing Zeng}, \bibinfo{person}{Muhan
  Zhang}, \bibinfo{person}{Yinglong Xia}, \bibinfo{person}{Ajitesh Srivastava},
  \bibinfo{person}{Andrey Malevich}, \bibinfo{person}{Rajgopal Kannan},
  \bibinfo{person}{Viktor Prasanna}, \bibinfo{person}{Long Jin}, {and}
  \bibinfo{person}{Ren Chen}.} \bibinfo{year}{2021}\natexlab{}.
\newblock \showarticletitle{Decoupling the depth and scope of graph neural
  networks}. In \bibinfo{booktitle}{\emph{NeurIPS}}, Vol.~\bibinfo{volume}{34}.
  \bibinfo{pages}{19665--19679}.
\newblock


\bibitem[\protect\citeauthoryear{Zeng, Zhou, Srivastava, Kannan, and
  Prasanna}{Zeng et~al\mbox{.}}{2020}]%
        {DBLP:conf/iclr/ZengZSKP20}
\bibfield{author}{\bibinfo{person}{Hanqing Zeng}, \bibinfo{person}{Hongkuan
  Zhou}, \bibinfo{person}{Ajitesh Srivastava}, \bibinfo{person}{Rajgopal
  Kannan}, {and} \bibinfo{person}{Viktor~K. Prasanna}.}
  \bibinfo{year}{2020}\natexlab{}.
\newblock \showarticletitle{GraphSAINT: Graph Sampling Based Inductive Learning
  Method}. In \bibinfo{booktitle}{\emph{{ICLR}}}.
\newblock


\bibitem[\protect\citeauthoryear{Zhang, Zhu, Wang, and Zhang}{Zhang
  et~al\mbox{.}}{2020b}]%
        {DBLP:conf/iclr/0001ZWZ20}
\bibfield{author}{\bibinfo{person}{Kai Zhang}, \bibinfo{person}{Yaokang Zhu},
  \bibinfo{person}{Jun Wang}, {and} \bibinfo{person}{Jie Zhang}.}
  \bibinfo{year}{2020}\natexlab{b}.
\newblock \showarticletitle{Adaptive Structural Fingerprints for Graph
  Attention Networks}. In \bibinfo{booktitle}{\emph{{ICLR}}}.
\newblock


\bibitem[\protect\citeauthoryear{Zhang, Wipf, Gan, and Song}{Zhang
  et~al\mbox{.}}{2021}]%
        {DBLP:conf/nips/ZhangWGS21}
\bibfield{author}{\bibinfo{person}{Qingru Zhang}, \bibinfo{person}{David Wipf},
  \bibinfo{person}{Quan Gan}, {and} \bibinfo{person}{Le Song}.}
  \bibinfo{year}{2021}\natexlab{}.
\newblock \showarticletitle{A Biased Graph Neural Network Sampler with
  Near-Optimal Regret}. In \bibinfo{booktitle}{\emph{NeurIPS}}.
  \bibinfo{pages}{8833--8844}.
\newblock


\bibitem[\protect\citeauthoryear{Zhang, Huang, Zhou, and Zhou}{Zhang
  et~al\mbox{.}}{2020a}]%
        {DBLP:conf/kdd/ZhangHZZ20}
\bibfield{author}{\bibinfo{person}{Shengzhong Zhang}, \bibinfo{person}{Zengfeng
  Huang}, \bibinfo{person}{Haicang Zhou}, {and} \bibinfo{person}{Ziang Zhou}.}
  \bibinfo{year}{2020}\natexlab{a}.
\newblock \showarticletitle{{SCE:} Scalable Network Embedding from Sparsest
  Cut}. In \bibinfo{booktitle}{\emph{{KDD}}}. \bibinfo{pages}{257--265}.
\newblock


\bibitem[\protect\citeauthoryear{Zhang and Xie}{Zhang and Xie}{2020}]%
        {zhang2020improving}
\bibfield{author}{\bibinfo{person}{Shuo Zhang} {and} \bibinfo{person}{Lei
  Xie}.} \bibinfo{year}{2020}\natexlab{}.
\newblock \showarticletitle{Improving attention mechanism in graph neural
  networks via cardinality preservation}. In \bibinfo{booktitle}{\emph{IJCAI}},
  Vol.~\bibinfo{volume}{2020}. \bibinfo{pages}{1395}.
\newblock


\bibitem[\protect\citeauthoryear{Zhao, Dong, Ding, Kharlamov, and Tang}{Zhao
  et~al\mbox{.}}{2021}]%
        {DBLP:conf/nips/ZhaoDDKT21}
\bibfield{author}{\bibinfo{person}{Jialin Zhao}, \bibinfo{person}{Yuxiao Dong},
  \bibinfo{person}{Ming Ding}, \bibinfo{person}{Evgeny Kharlamov}, {and}
  \bibinfo{person}{Jie Tang}.} \bibinfo{year}{2021}\natexlab{}.
\newblock \showarticletitle{Adaptive Diffusion in Graph Neural Networks}. In
  \bibinfo{booktitle}{\emph{NeurIPS}}. \bibinfo{pages}{23321--23333}.
\newblock


\bibitem[\protect\citeauthoryear{Zheng, Zong, Cheng, Song, Ni, Yu, Chen, and
  Wang}{Zheng et~al\mbox{.}}{2020}]%
        {DBLP:conf/icml/ZhengZCSNYC020}
\bibfield{author}{\bibinfo{person}{Cheng Zheng}, \bibinfo{person}{Bo Zong},
  \bibinfo{person}{Wei Cheng}, \bibinfo{person}{Dongjin Song},
  \bibinfo{person}{Jingchao Ni}, \bibinfo{person}{Wenchao Yu},
  \bibinfo{person}{Haifeng Chen}, {and} \bibinfo{person}{Wei Wang}.}
  \bibinfo{year}{2020}\natexlab{}.
\newblock \showarticletitle{Robust Graph Representation Learning via Neural
  Sparsification}. In \bibinfo{booktitle}{\emph{{ICML}}},
  Vol.~\bibinfo{volume}{119}. \bibinfo{pages}{11458--11468}.
\newblock


\end{thebibliography}

\clearpage

\section*{Appendices}
\appendix
%\section{Remarks on the expressive power of \mechanism~layers} %\label{app:remarks_of_theory}
%In this paper, we mainly verify that the proposed Graph \mechanismlowercase~networks are most powerful message-passing GNNs under the condition that the feature space is countable \cite{he2021learning,DBLP:conf/iclr/XuHLJ19,zhang2020improving}.
%Recent studies have shown that a simplex operator for feature aggregations in some GNN layer is injective, i.e., the 1-WL test equivalent in the countable feature space \cite{corso2020principal}.
%But such injectivity might not hold for the simplex operator when it operates in the uncountable feature space.
%To ensure the injectivity when a GNN deals with uncountable features, diverse forms of operators, e.g., mean, max, and min operators are required to collaboratively aggregate neighbor features.
%Thus, the expressive power of the proposed \model s can be equivalent to the 1-WL test in uncountable feature space by appropriately integrating with other effective operators for feature aggregation.

\section{Irrelevance between connected nodes}\label{conflict-example}

In this section, we show a large amount of neighbors are found to highly differ regarding either node features or graph structure. The corresponding results obtained from Cora and Cite are exemplified here to demonstrate such phenomenon (Fig. \ref{fig:conflict_feature} and \ref{fig:conflict_struct}).

\begin{figure}[htbp]
    \centering
    \vspace{-4px}
    \includegraphics[width=0.45\linewidth]{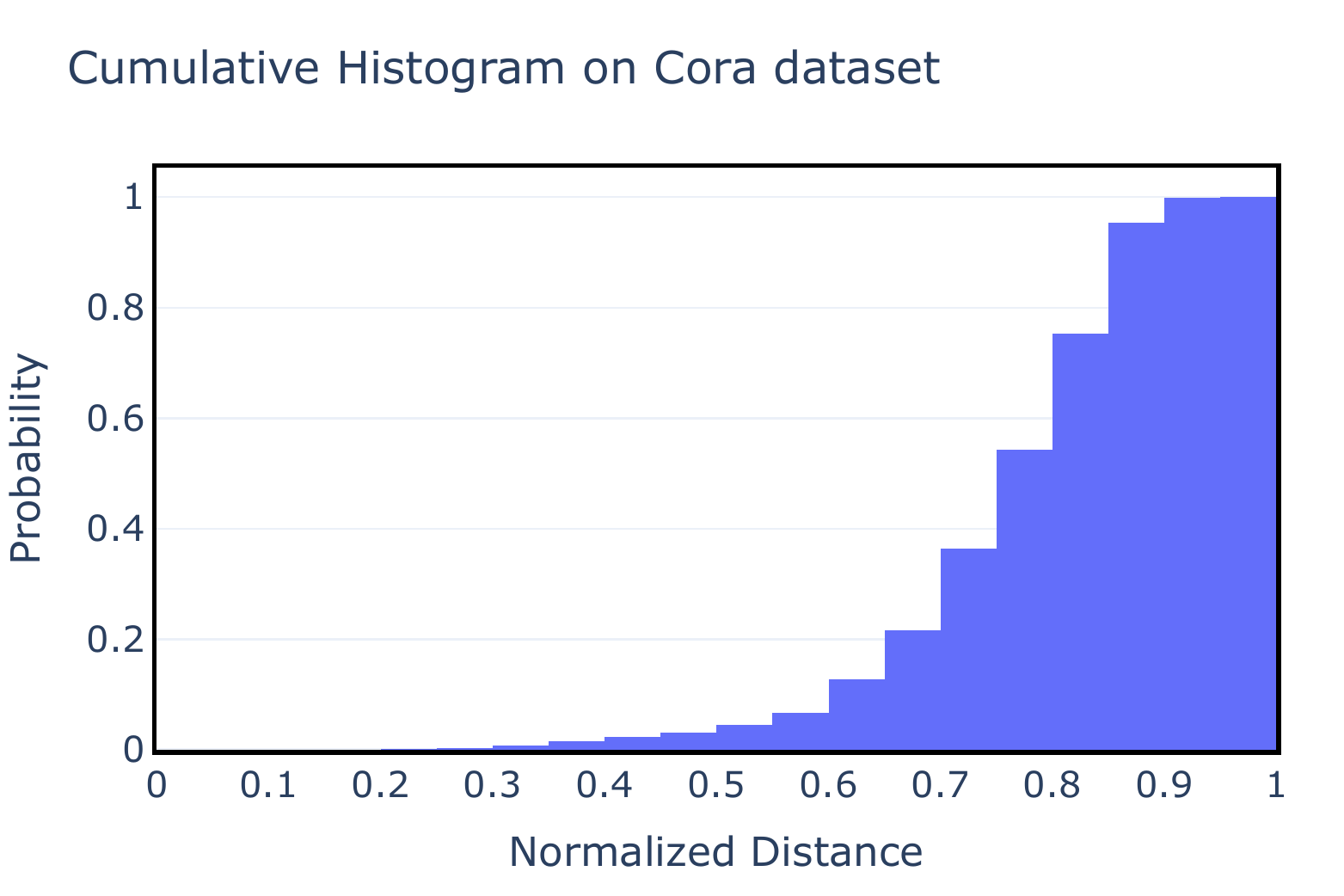}
    \includegraphics[width=0.45\linewidth]{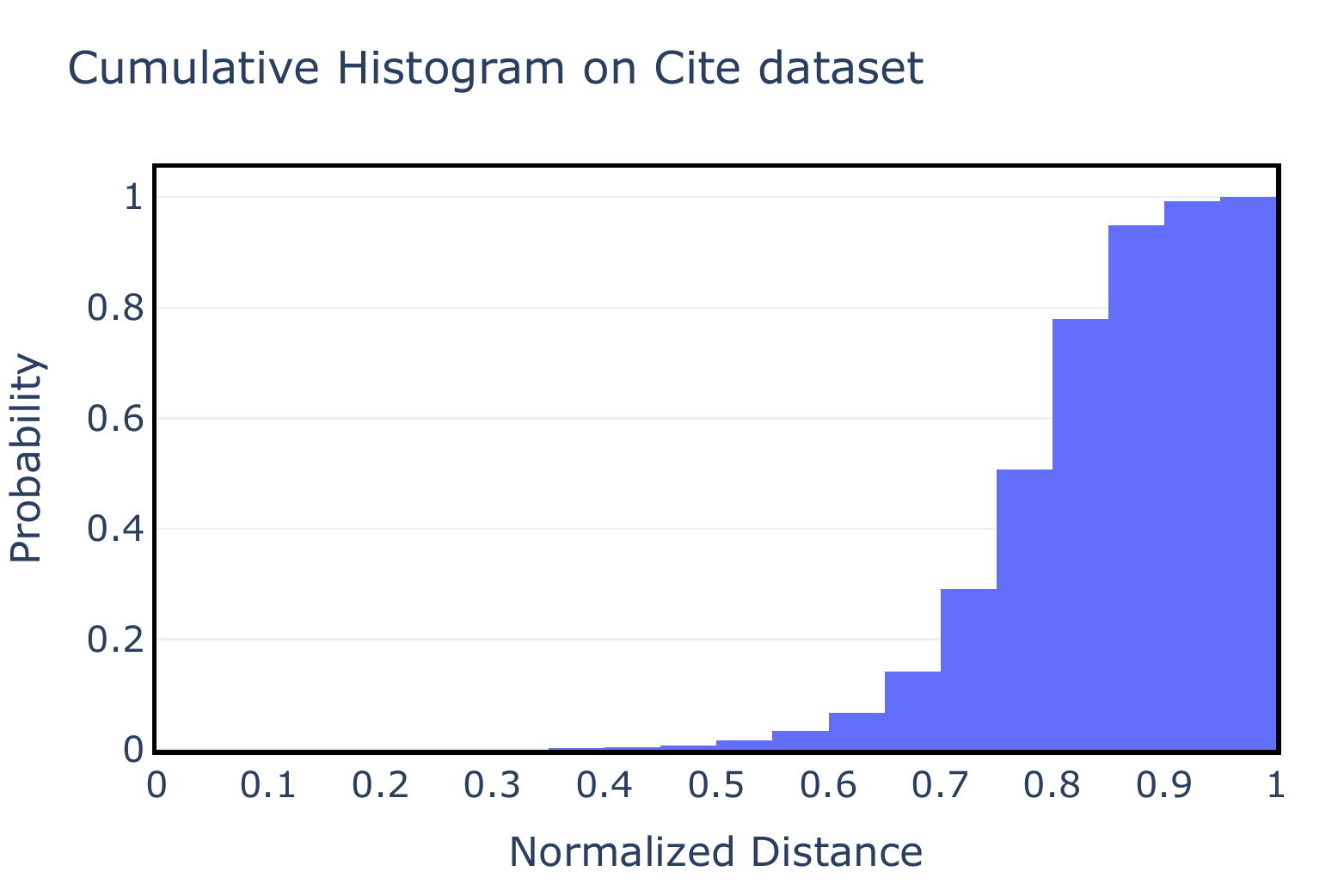}
    % \includegraphics[width=0.45\textwidth]{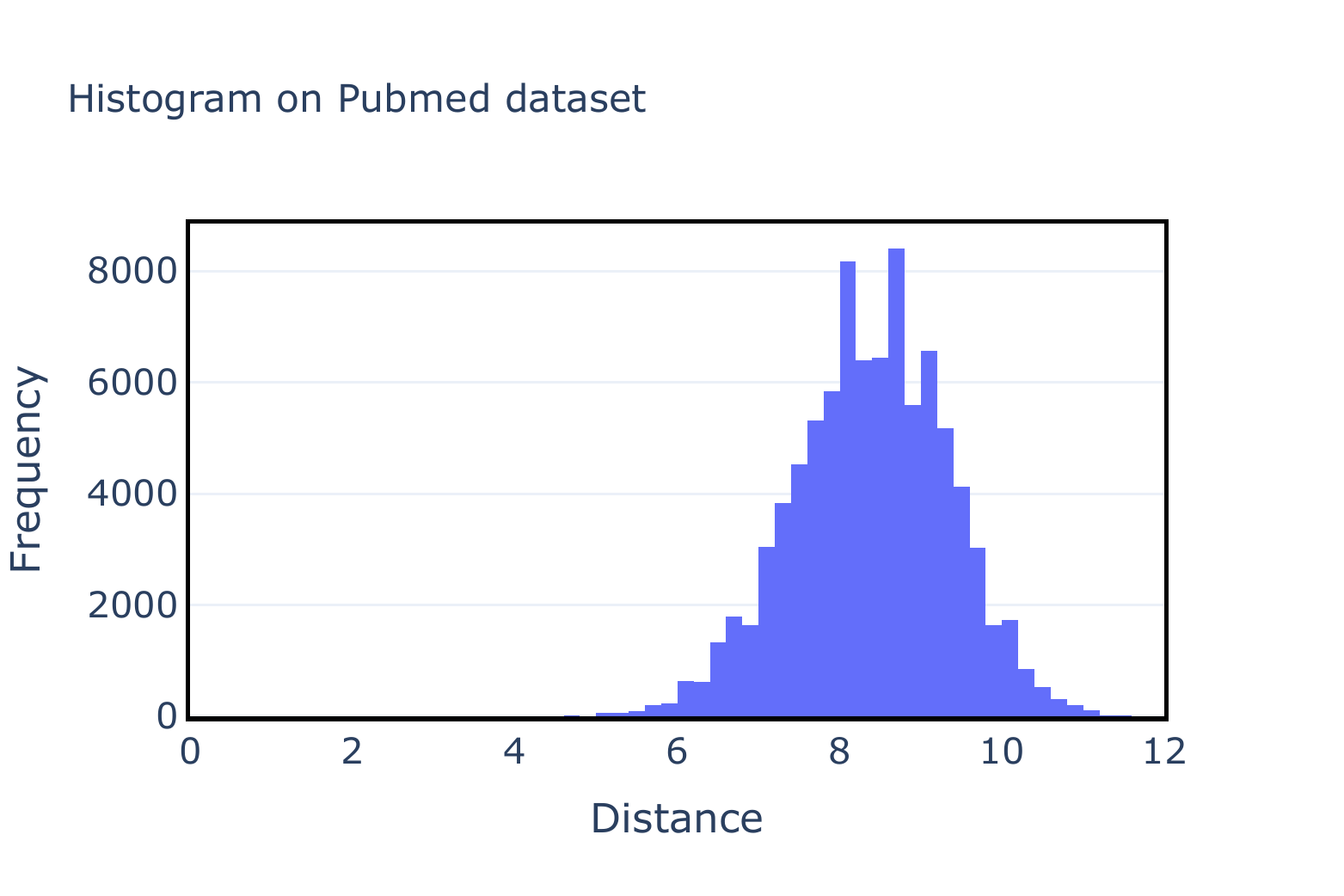}
    % \includegraphics[width=0.45\textwidth]{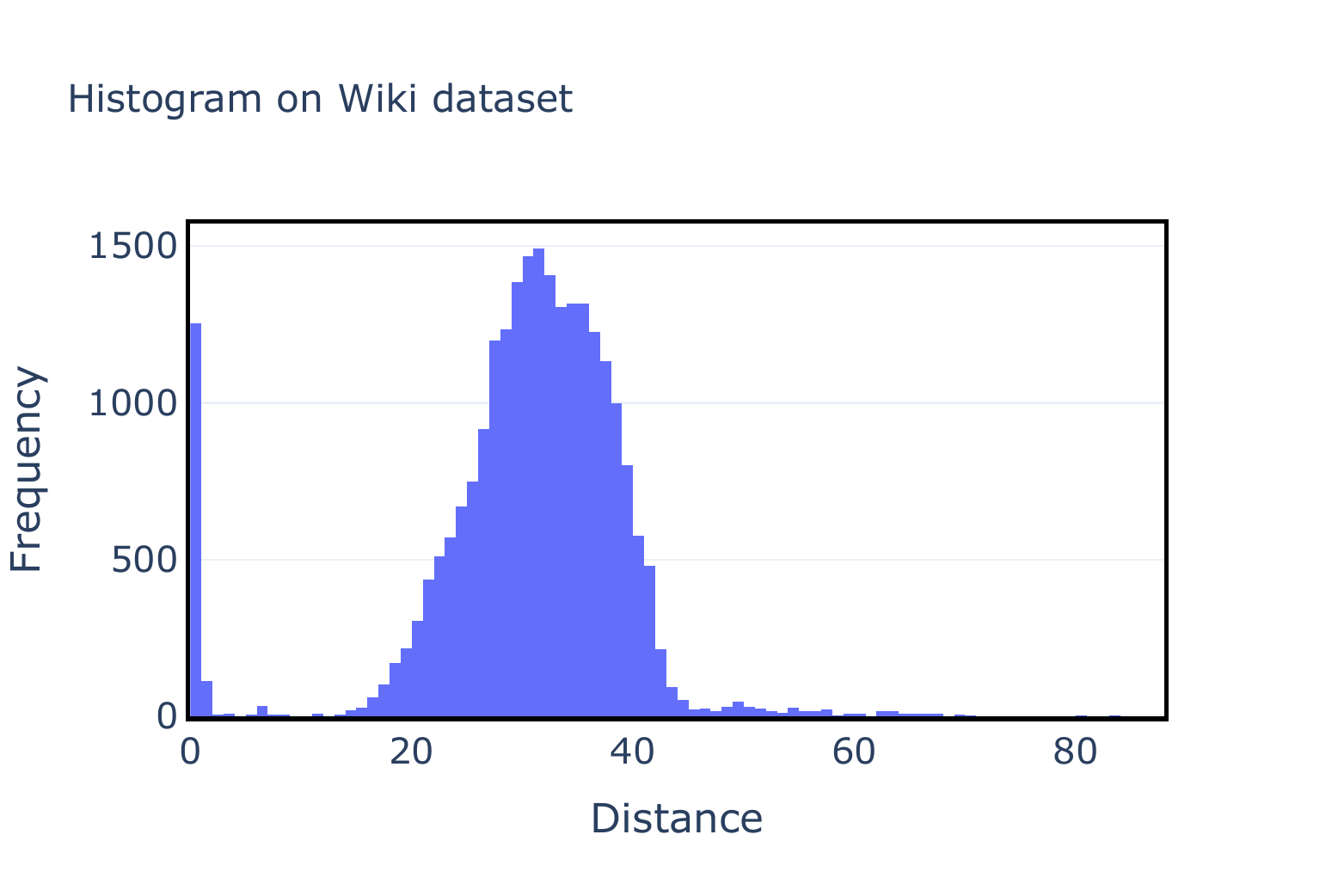}
    % \includegraphics[width=0.45\textwidth]{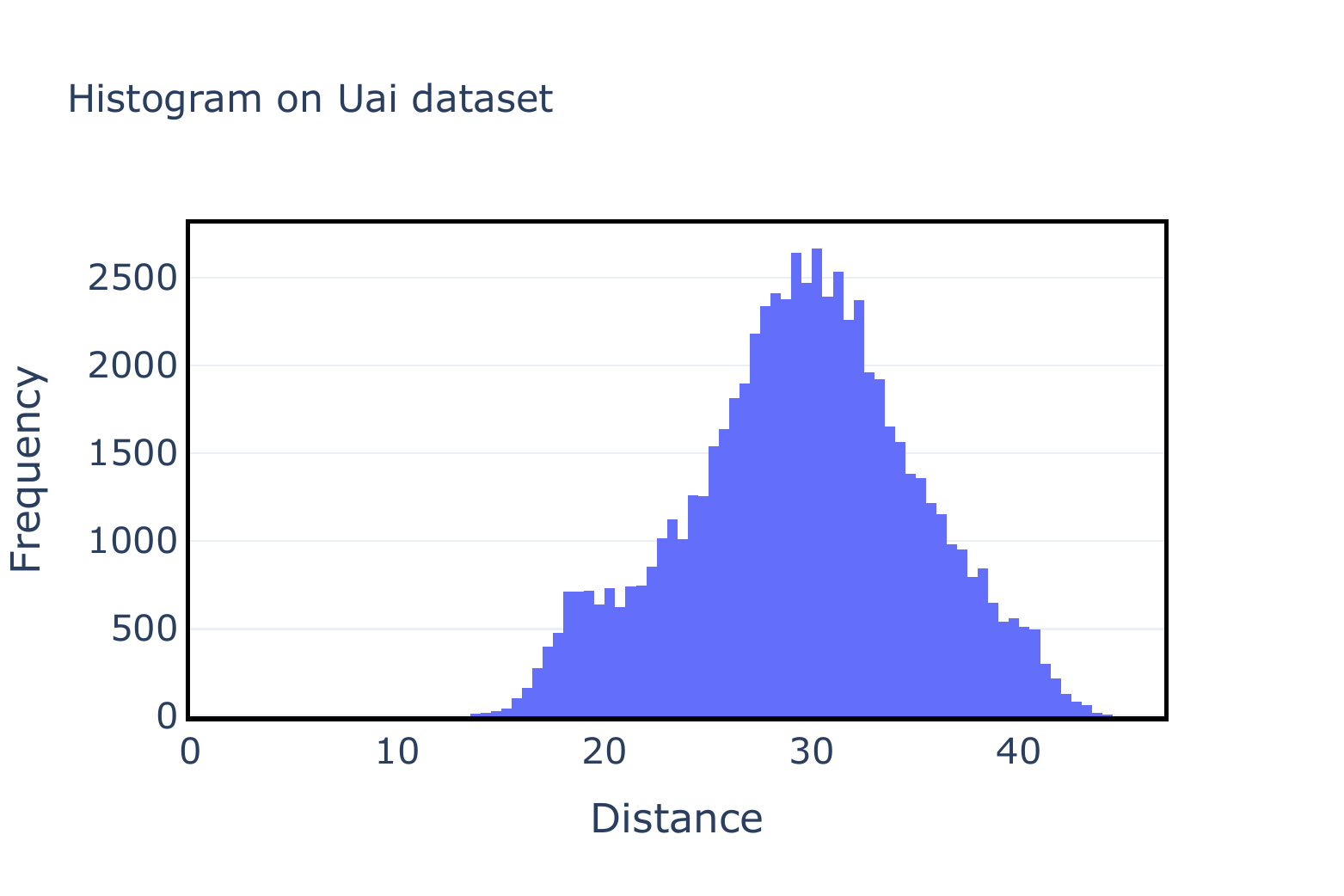}
    % \includegraphics[width=0.45\textwidth]{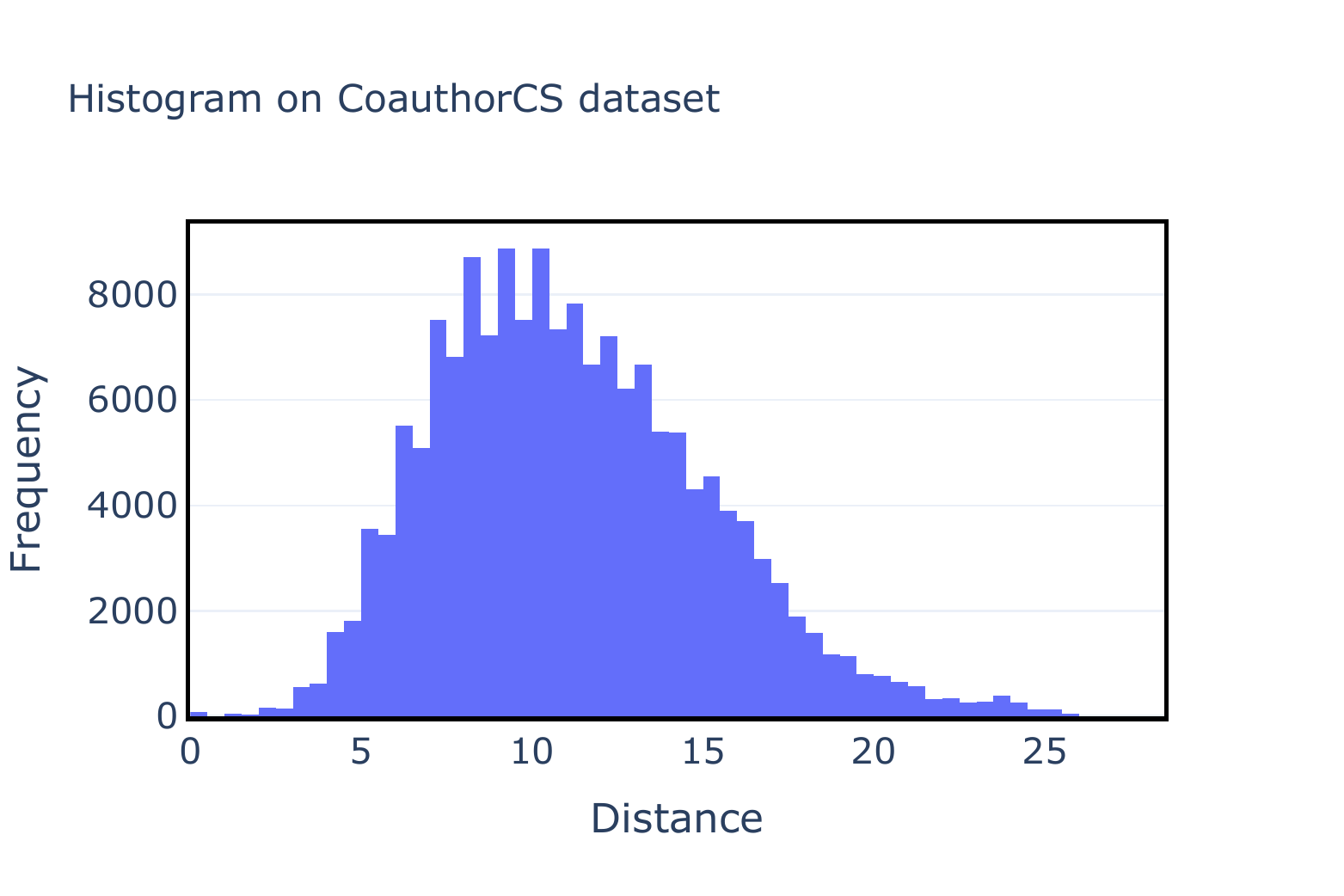}
    % \caption{Histograms and cumulative histograms of normalized Euclidean distance regarding node features. \textbf{Large} values mean connected node pairs have very different features. Here most distances are large.}
    \vspace{-6px}
    \caption{Cumulative histograms of normalized Euclidean distance regarding node features. \textbf{Large} values mean connected node pairs have very different features. Here most distances are large.}
    \vspace{-8px}
    \label{fig:conflict_feature}
\end{figure}

\begin{figure}[htbp]
    \centering
    \vspace{-4px}
    \includegraphics[width=0.45\linewidth]{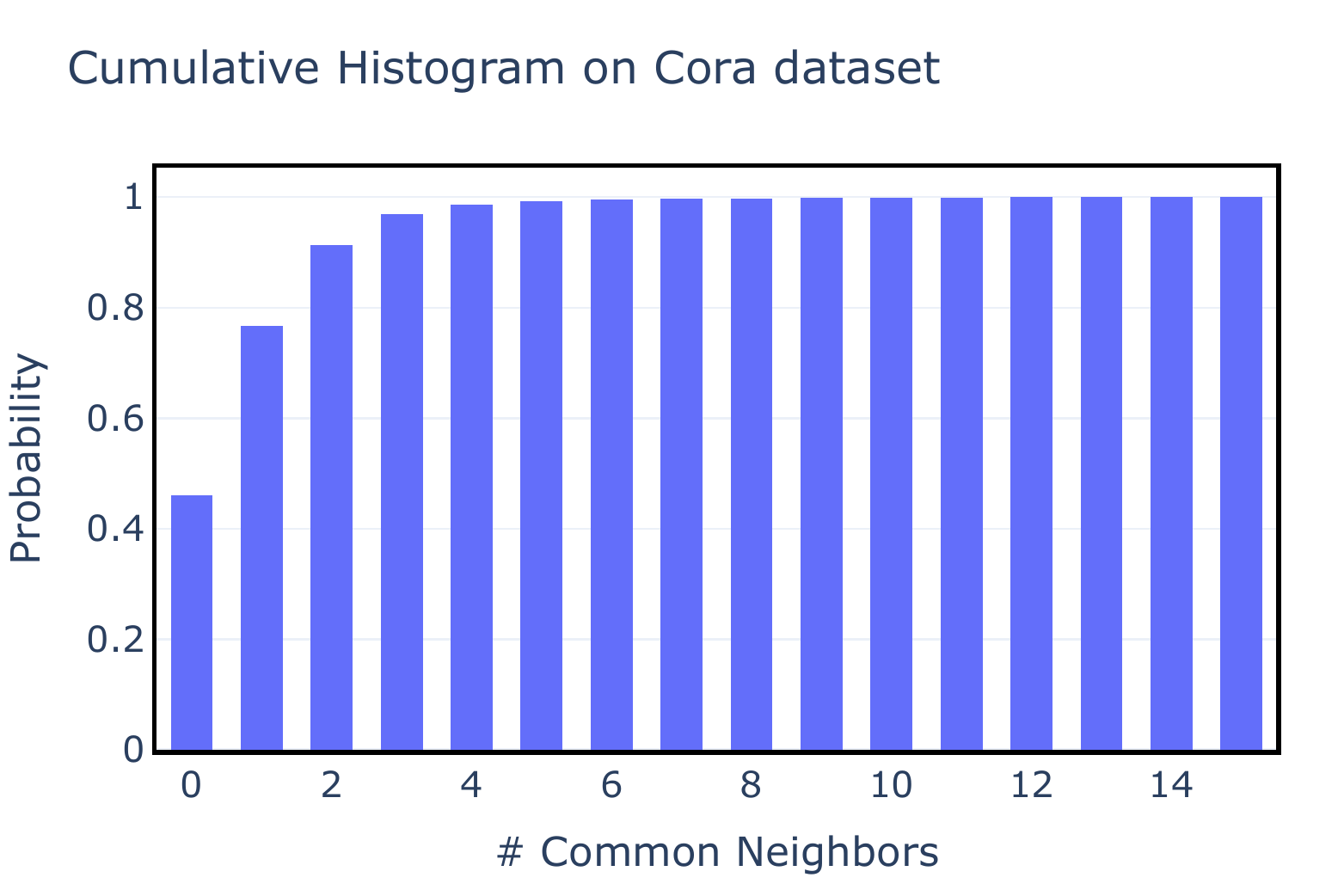}
    \includegraphics[width=0.45\linewidth]{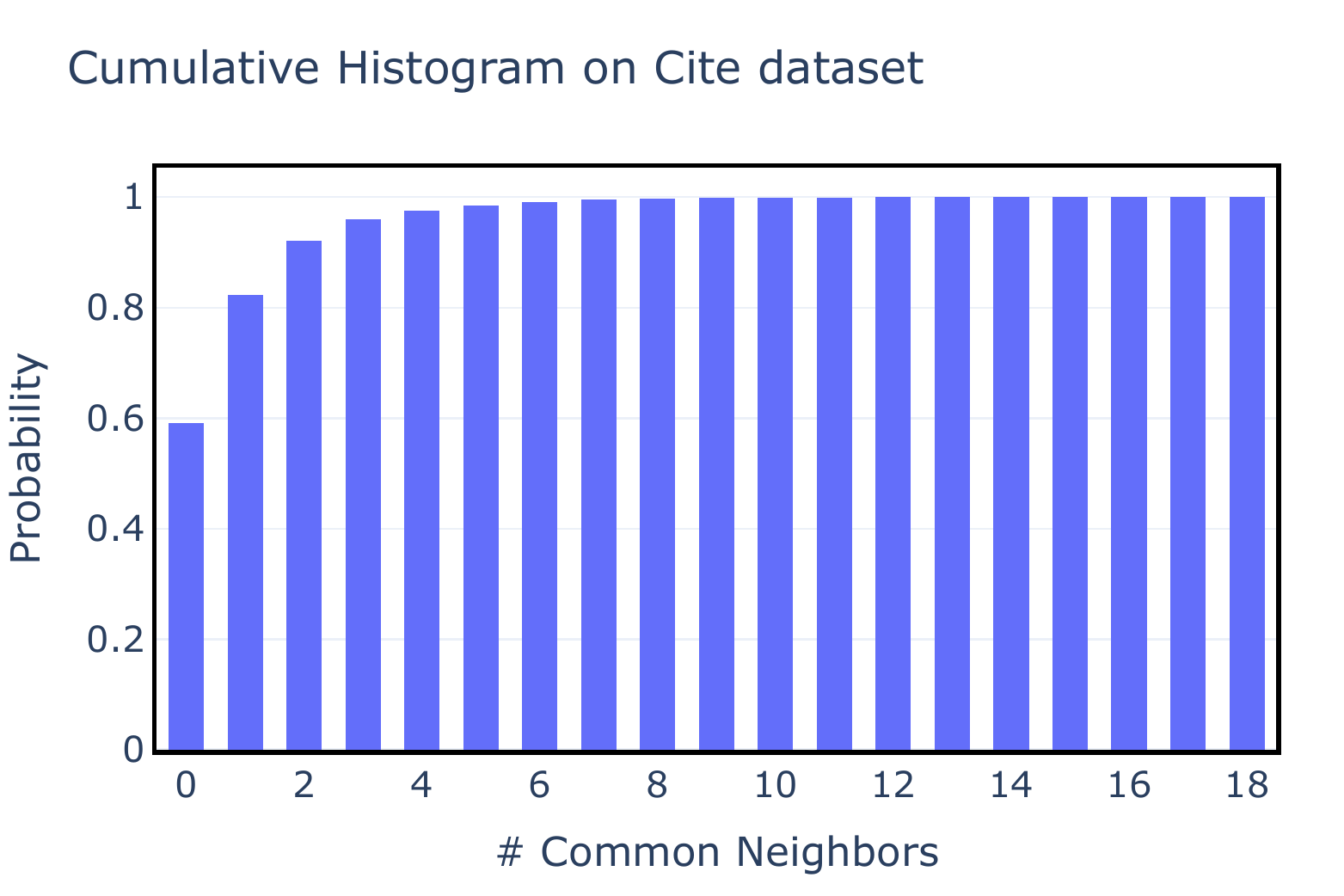}
    % \includegraphics[width=0.45\textwidth]{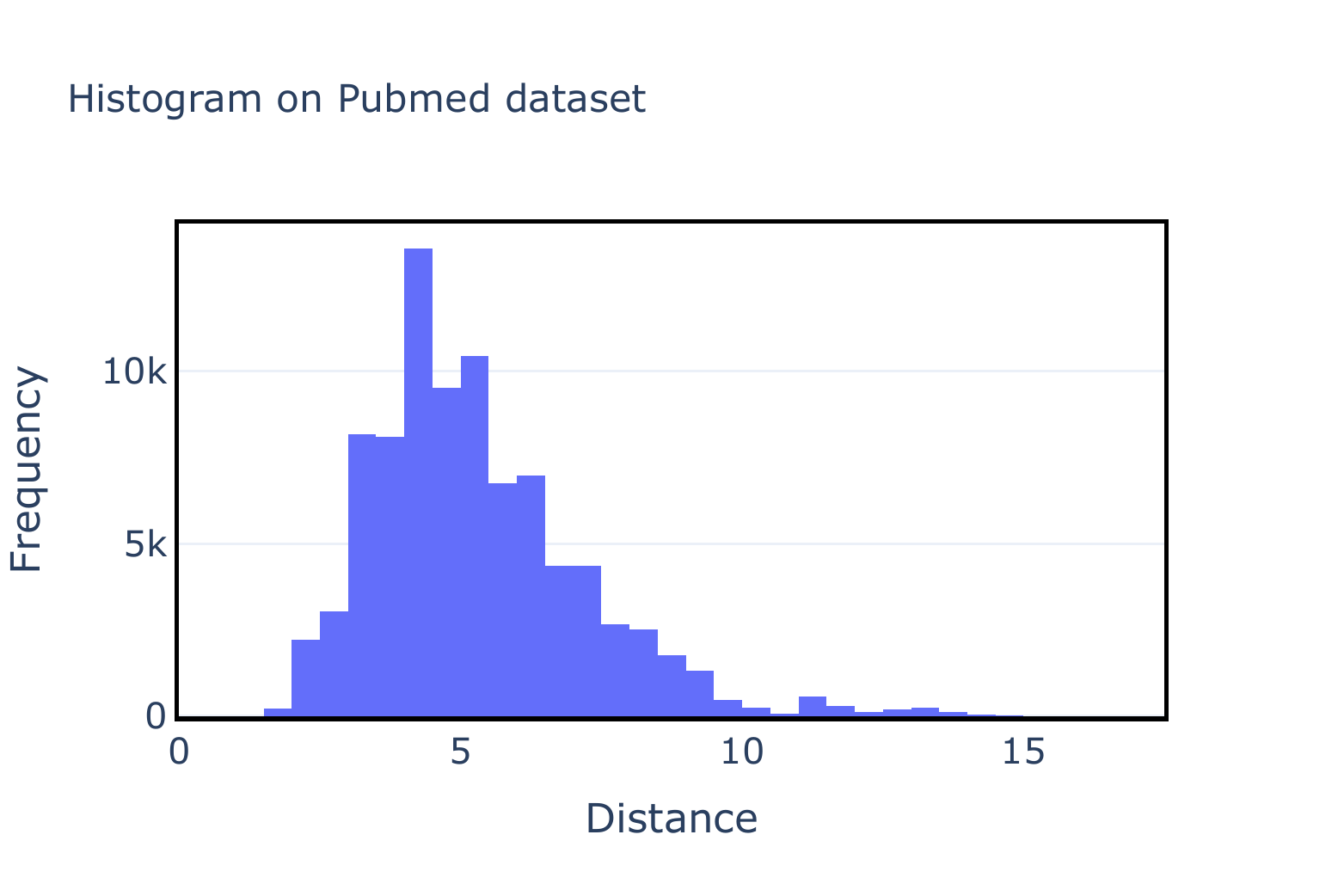}
    % \includegraphics[width=0.45\textwidth]{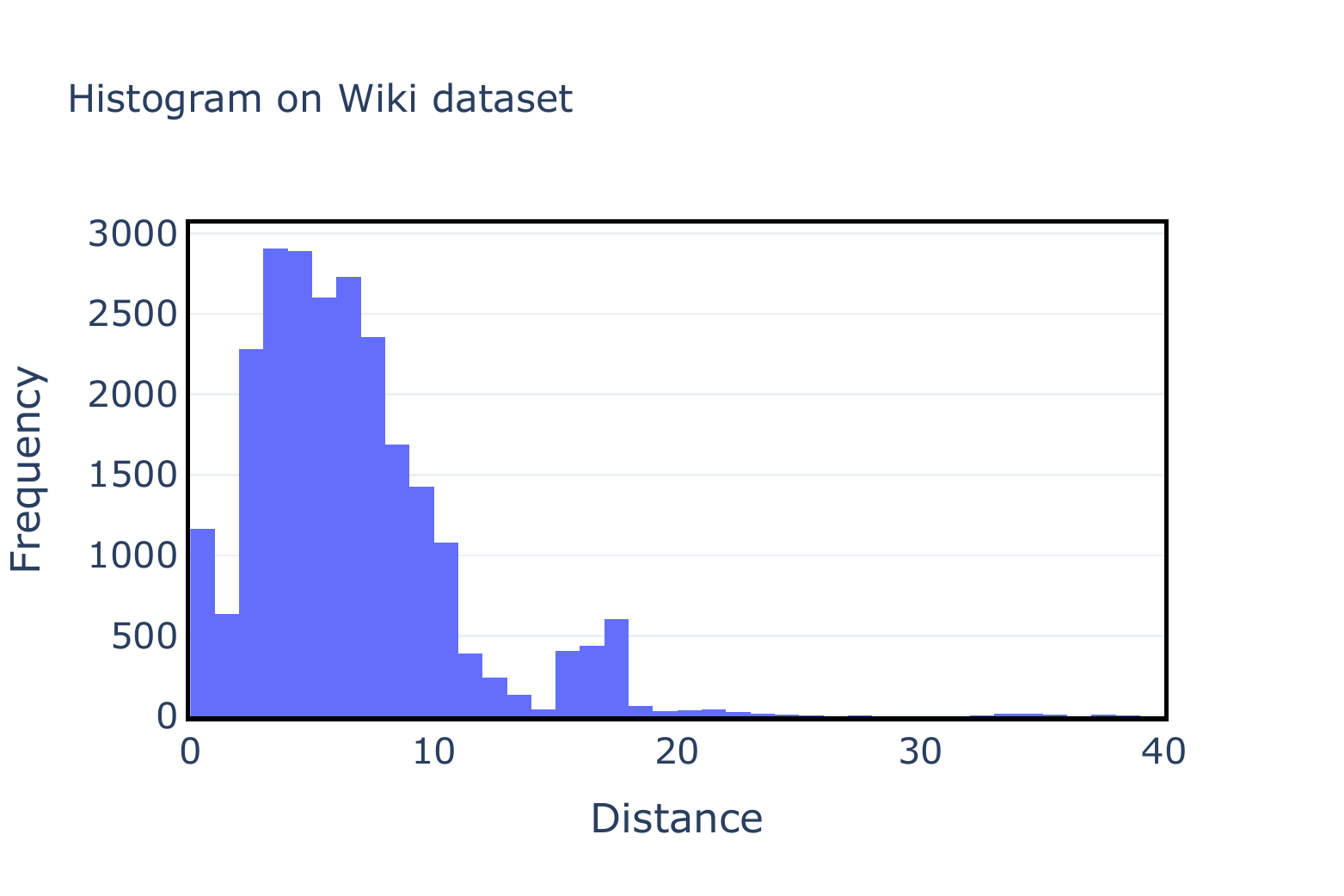}
    % \includegraphics[width=0.45\textwidth]{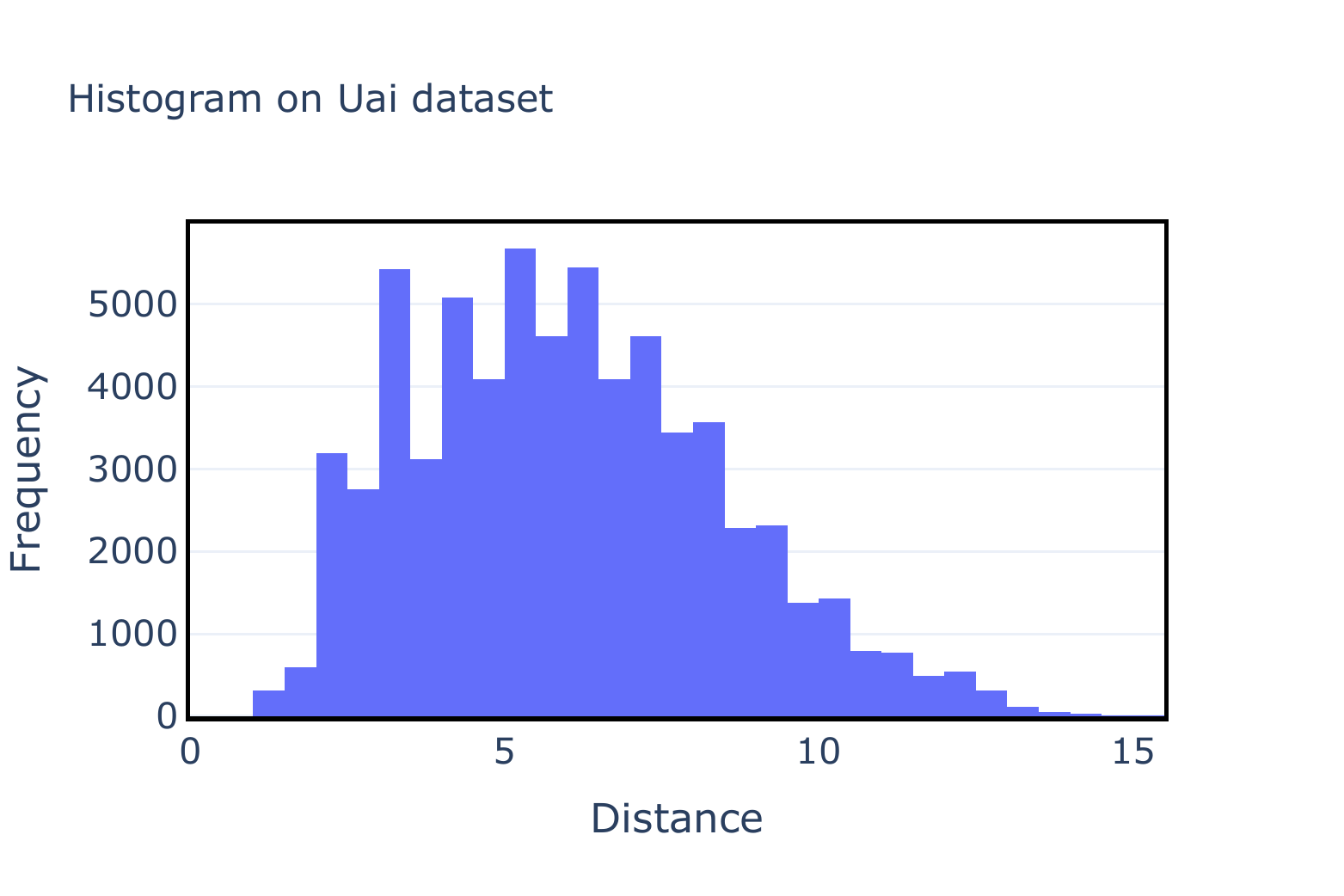}
    % \includegraphics[width=0.45\textwidth]{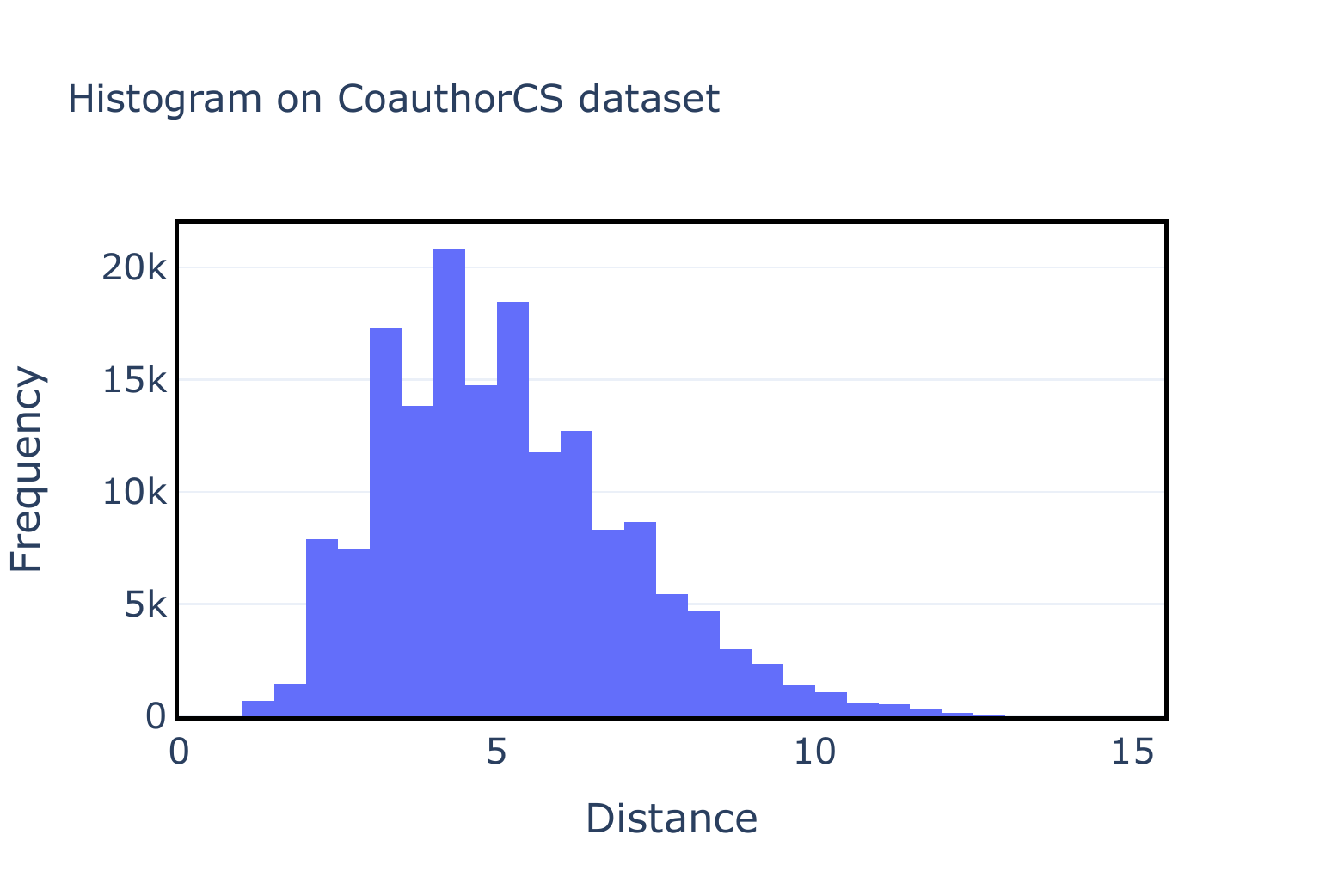}
    % \caption{Histograms and cumulative histograms of common neighbors between connected node pairs. \textbf{Small} values mean connected node pairs differ in terms of graph structure. Here most values are 0s or close to 0.}
    \vspace{-6px}
    \caption{Cumulative histograms of common neighbors between connected node pairs. \textbf{Small} values mean connected node pairs differ in terms of graph structure. Here most values are 0s or close to 0.}
    \vspace{-2px}
    \label{fig:conflict_struct}
\end{figure}

% In Fig. \ref{fig:conflict_feature}, the cumulative histogram of the normalized Euclidean distance regarding node features and the distribution of such distance are depicted.
The difference between the node features of connected nodes is measured by normalized Euclidean distance, with Fig. \ref{fig:conflict_feature} depicting its cumulative histograms.
It is evident that the distances of most node pairs in both datasets are very close to the largest.
Specifically, the normalized distance of more than 70\% of connected nodes in both datasets is higher than 0.7.
% This indicates that most node features associated with each node are quite different from those of its neighbors. Thus, these connected nodes in the graph are highly irrelevant regarding node features.
This indicates that connected nodes have quite different features from their neighbors, and thus are highly irrelevant regarding node features.

The number of common neighbors is applied to measure the difference in the structure of connected nodes, with Fig. \ref{fig:conflict_struct} depicting its cumulative histograms.
The number of common neighbors is a widely used method to measure the similarity of two connected nodes \cite{DBLP:conf/colt/SarkarCM10,papadopoulos2015network}. It is observed that around 50\% of connected node pairs do NOT have even one common neighbor in Cora and Cite, and most connected node pairs (about 90\%) have very few (0, 1 or 2) common neighbors. 
Thus, the structure of connected node pairs is highly different, indicating these nodes are very irrelevant.
% Considering the total number of connected node pairs (i.e., total degrees) in these two datasets are 10,858 and 9104, the structure of more than 70\% of neighbors is highly irrelevant.

%In this section, we show the irrelevance between connected node pairs in Cora and Citeseer datasets. The irrelevance of node features is measured by the Euclidean distance between input features of two nodes, i.e. $\| \mathbf{X}_i - \mathbf{X}_j \|_2$ for node $i$ and node $j$.

% Given such observations, we believe that a large portion of neighbors are irrelevant to the central node.
Such observations indicate that a large portion of neighbors are irrelevant to the central node.
These neighbors are possibly less informative for feature aggregation.
Thus, they should be ignored by some appropriate attention mechanisms.
Motivated by the discovered phenomenon that most neighbors are highly irrelevant, we propose \mechanism~to endow graph neural networks with the capability of flexibly ignoring irrelevant neighbors that can be identified by diverse forms of node-node dissimilarity.
% The corresponding experimental results also demonstrate the proposed \model s indeed can learn to ignore those unimportant neighbors and their performances outperform all compared baselines on most testing datasets.

\section{More details on experiments}  \label{app:experiments}

\subsection{Data} \label{app:data_description}
\paragraph{Data description} 
The dataset statistics can be found in Table \ref{tab:dataset}. Cora, Cite, and Pubmed are citation networks, where nodes, edges, and node features respectively represent the documents, document-document citations, and the keywords of the documents. Wiki and Uai are two web networks, where nodes, edges and node features represent web pages, web-web hyperlinks and descriptive information on these web pages, respectively. 
CoauthorCS is a co-authorship graph, where nodes are authors, which are connected by an edge if they co-author a paper, node features represent keywords for each author’s papers, and class
labels indicate the most active fields of study for each author.
For Cora, Cite and Pubmed, we follow previous works \cite{DBLP:conf/iclr/KipfW17, DBLP:conf/iclr/VelickovicCCRLB18} to use 20 nodes from each label as the training set, 500 nodes as the validation set, and 1000 nodes as the test set.
For Wiki, Uai, and CoauthorCS, as there are no established splits for training and testing, we randomly generate five sets of splits.
In each of them, 20 nodes of each ground truth class are sampled for training, 500 nodes are sampled for validation, and 1000 nodes are done for testing.
For each of them, we randomly generate five sets of splits (i.e., training, validation, and testing splits) for the classification tasks, and use all nodes in each dataset for clustering tasks.
%we randomly generate the corresponding splits, where 20 nodes of each ground truth class are sampled for training, 500 nodes are sampled for validation, and 1000 nodes are done for testing. 

\paragraph{Pre-processing} From Table \ref{tab:dataset}, we find the dimension of input features are too high in Wiki, Uai and CoauthorCS, which may cause unstable training performances. 
Thus we apply a trainable linear layer without non-linearity activation to reduce the feature dimensions. After the linear layer, the dimension of the input to the GNNs are reduced to 512. This pre-processing step is applied for both our models and all the baselines.

\subsection{Detailed settings of all approaches}\label{detail-setting}
%\paragraph{Settings of graph neural networks}
\model s are compared with twelve strong graph neural networks, including MoNet \cite{DBLP:conf/cvpr/MontiBMRSB17}, GCN \cite{DBLP:conf/iclr/KipfW17}, GraphSage \cite{DBLP:conf/nips/HamiltonYL17}, JKNet \cite{DBLP:conf/icml/XuLTSKJ18}, APPNP \cite{DBLP:conf/iclr/KlicperaBG19}, ARMA \cite{bianchi2021graph}, GIN \cite{DBLP:conf/iclr/XuHLJ19}, Neural Sparse \cite{DBLP:conf/icml/ZhengZCSNYC020}, GAT \cite{DBLP:conf/iclr/VelickovicCCRLB18}, GATv2 \cite{brody2022how}, CAT \cite{he2021learning}, and HardGAT \cite{DBLP:conf/kdd/GaoJ19}.
%Besides, we also construct a variant of GAT (GAT-$k$-Lap), which uses the concatenation of original node features and the first $k$ eigenvectors of Laplacian graph \cite{DBLP:conf/wsdm/QiuDMLWT18}.
%GAT-$k$-Lap can be seen as a variant of GAT that is improved by graph structure.

To perform unbiased comparisons, the source codes released by the authors are used to implement all the mentioned baselines. In our experiments, all the baselines use a two-layer network structure to learn node representations, meaning that the output layer of each GNN is followed by one hidden layer.
As for the tuning of each baseline, we mainly follow the configurations presented in \cite{he2021learning,DBLP:conf/iclr/KipfW17,DBLP:conf/iclr/VelickovicCCRLB18}.
%The detailed configurations for attention based methods have been listed as follows.
% in Table \ref{exp-setting}, where ``lr'' and ``hidden'' respectively represent the learning rate of each epoch and the dimension of the hidden layer.
The configurations of the proposed \model s are generally same to those of GAT.
Specifically, 8 attention heads are used in hidden layers, while the number of hidden layer dimension (for one head) is 8 for Cora, Cite, and Pubmed, and 32 for Wiki, Uai, and CoauthorCS.
For the output layer, 1 attention head is used. 
``\verb|LeakyReLU|'' is used as the non-linearity in the attention mechanism with the negative slope as 0.2.
All attention-based GNNs are trained with learning rate as 0.005, weight decay as 0.0005, number of training epochs as 1000 and dropout ratio as 0.6. 
For \textit{Contractive apprehension span}, $\beta = 1.0$, while for \textit{Subtractive apprehension span}, $\beta = 0.5$. 
% The initialization of all GNNs is based on Glorot initialization, and all GNNs are trained to minimize the cross-entropy loss of the training nodes using Adam optimizer \cite{kingma2014adam}.
All GNNs are initialized with Glorot initialization \cite{DBLP:journals/jmlr/GlorotB10}, and all GNNs are trained to minimize the cross-entropy loss of the training nodes using Adam optimizer \cite{kingma2014adam}.
All the experiments are conducted on an NVIDIA RTX 3090 graphics card. The software environment of the experiments is CUDA 11.1, Python 3.8, and PyTorch 1.8.1.

\subsection{Visualization of attention scores}  \label{app:attn_coef}

We show the histograms of attention scores from the output layers of GAT, CAT and \model s in Fig. \ref{fig:cora_hist}-\ref{fig:coauthorcs_hist}.
It is not surprising that the attention coefficients learned by different variants of \model~are generally more concentrated.
%It is not surprising that the \model~ does not have much more small attention values.
We observe that \model~is able to learn ignoring on Cora, Cite, Wiki and Uai datasets.
\model~ignores fewer neighbors in citation networks, as Cora, Cite, and Pubmed are three very sparse graphs, which means there are few nodes having many neighbors, so \model~does not have enough training samples to learn ignoring. 
However, things are different on Wiki and Uai datasets, where there are much more very small attention values, which indicates \model~indeed learns to ignore irrelevant neighbors.
It is obvious that learning-to-ignore is indeed the reason that \model s achieve state-of-the-art performances on all the testing datasets.

\begin{figure}[htbp]
    \centering
    \subfigure{
        \includegraphics[height=0.15\textheight]{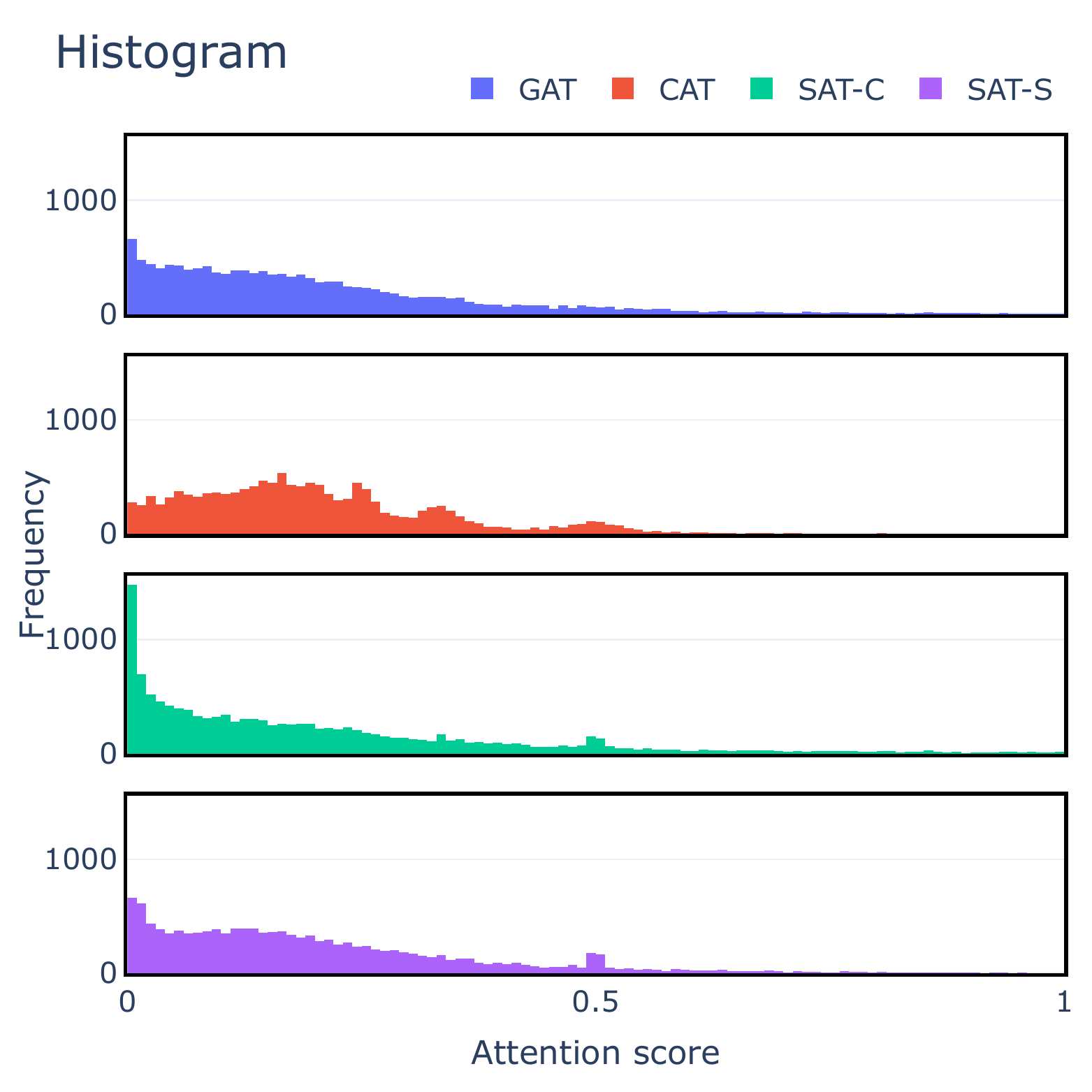}
    }
    \subfigure{
        \includegraphics[height=0.15\textheight]{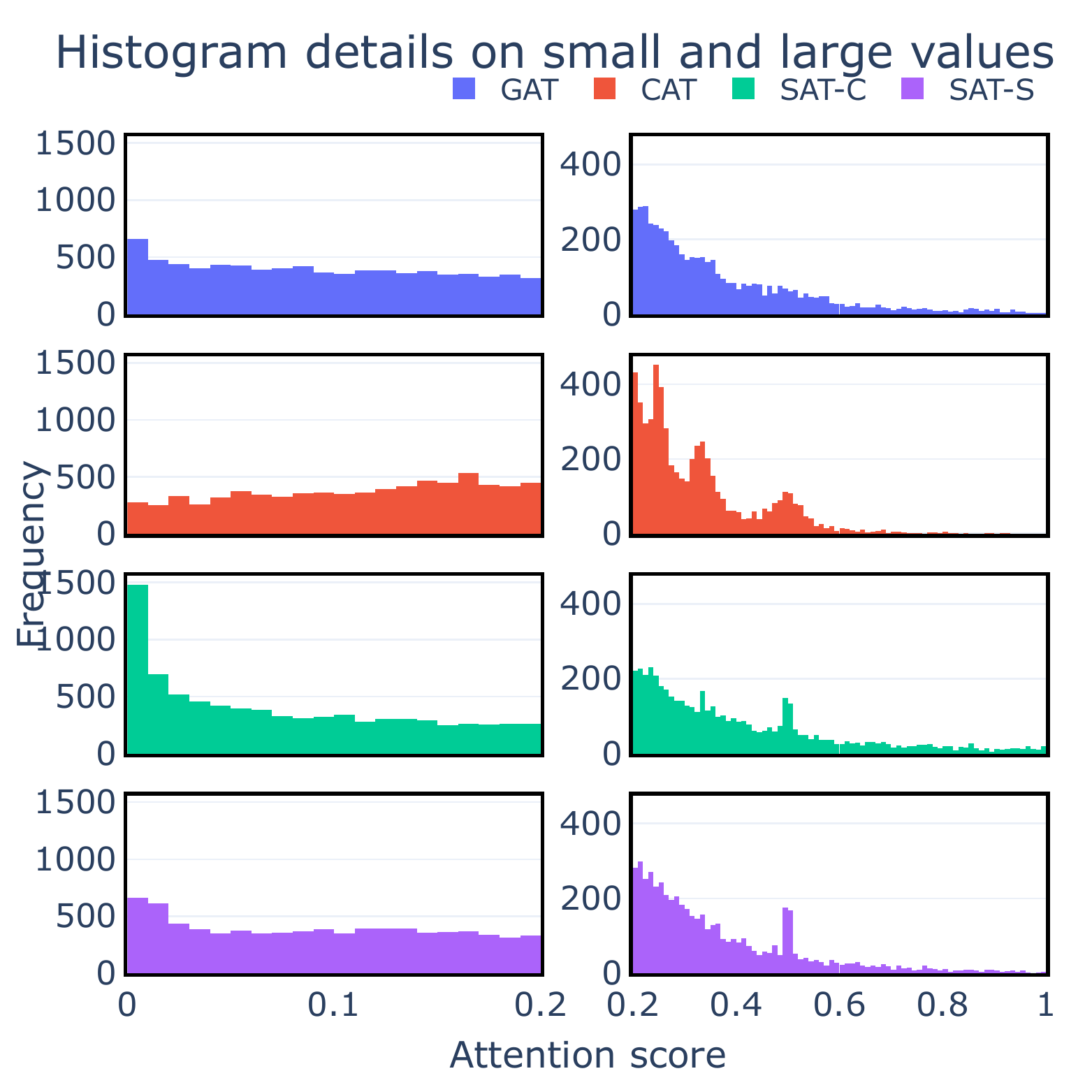}
    }
    \vspace{-8px}
    \caption{Attention scores from GAT, CAT and \model~on Cora}
    \vspace{-2px}
    \label{fig:cora_hist}
\end{figure}

\begin{figure}[htbp]
    \centering
    \subfigure{
        \includegraphics[height=0.15\textheight]{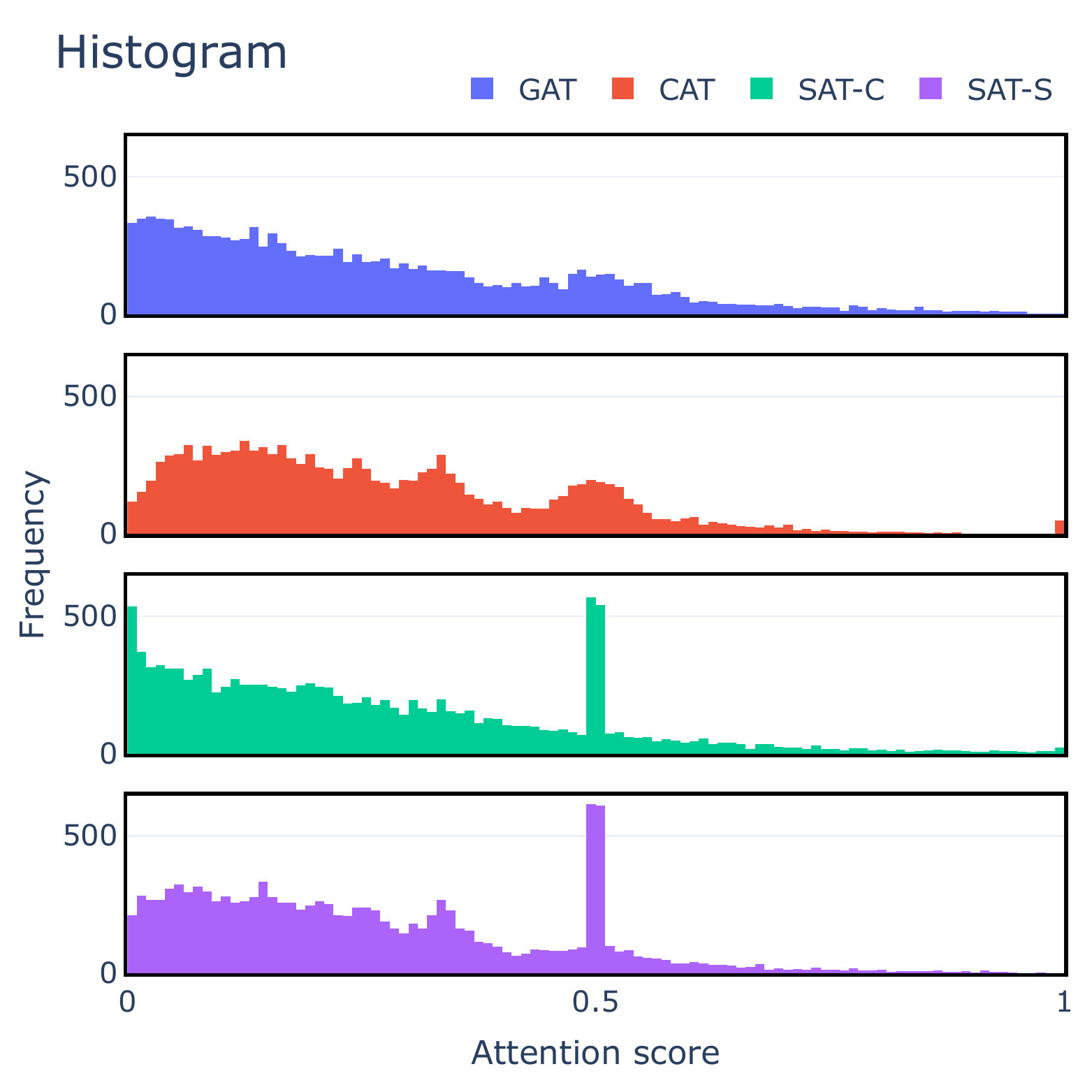}
    }
    \subfigure{
        \includegraphics[height=0.15\textheight]{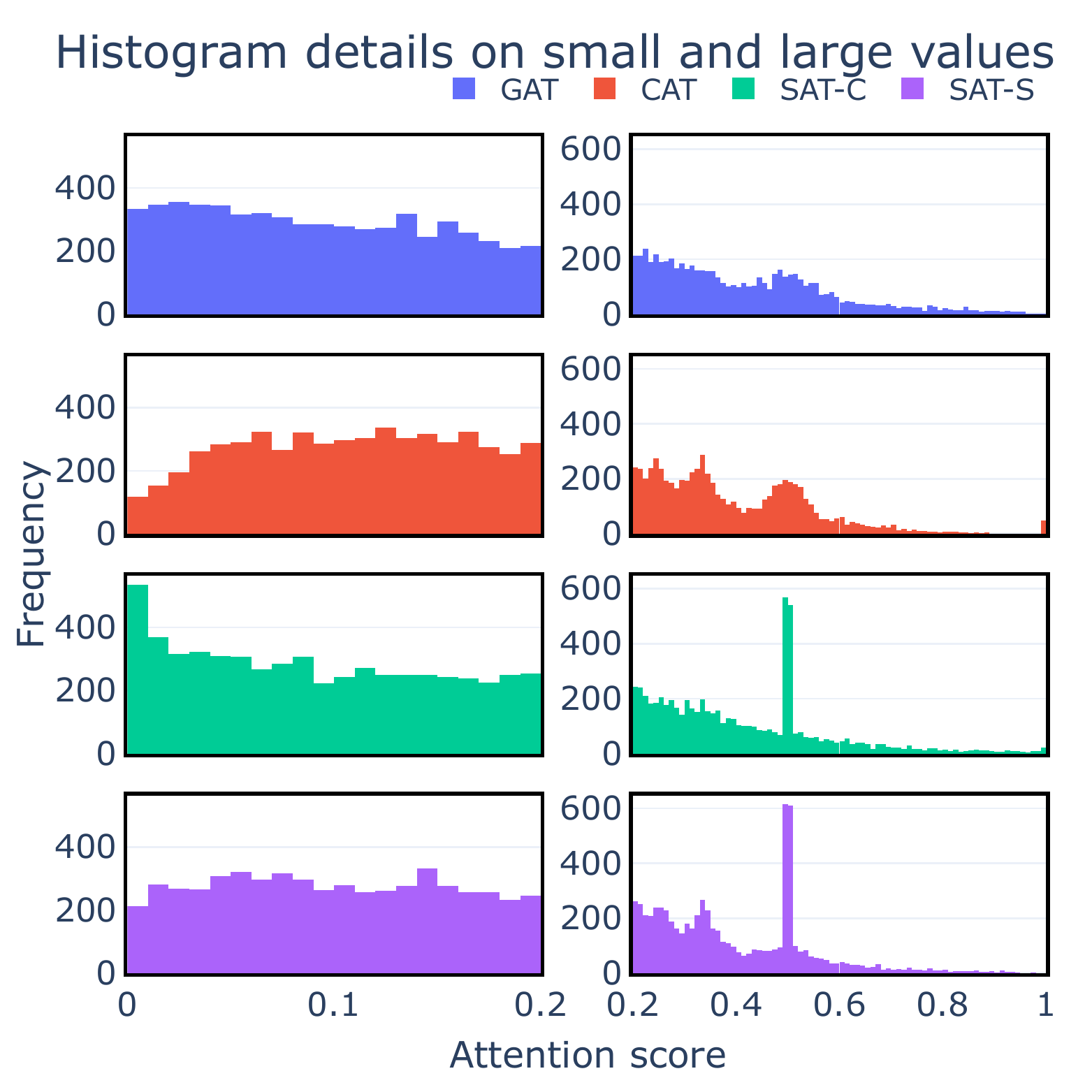}
    }
    \vspace{-10px}
    \caption{Attention scores from GAT, CAT and \model~on Cite}
    \vspace{-4px}
    \label{fig:cite_hist}
\end{figure}

\begin{figure}[htbp]
    \centering
    \subfigure{
        \includegraphics[height=0.15\textheight]{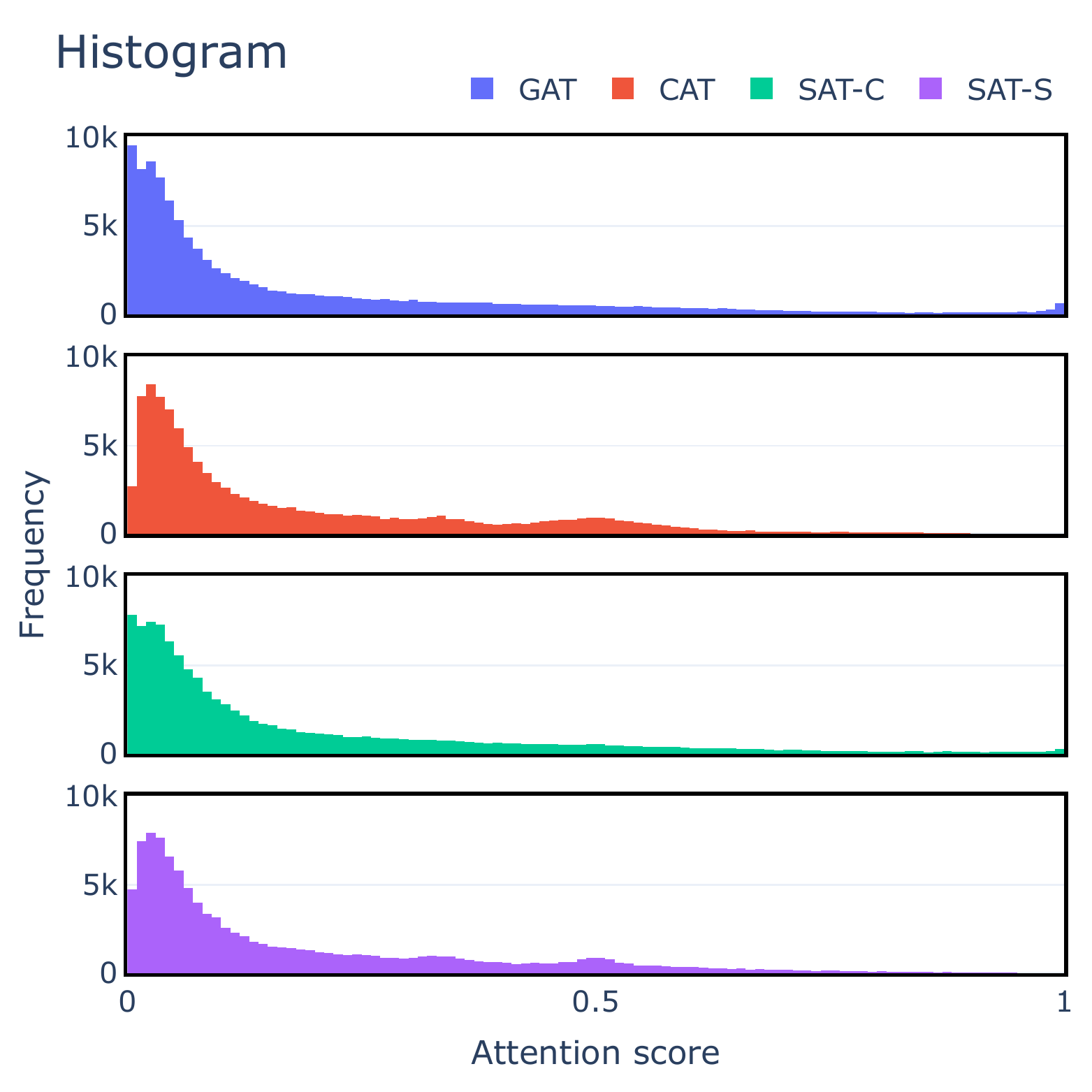}
    }
    \subfigure{
        \includegraphics[height=0.15\textheight]{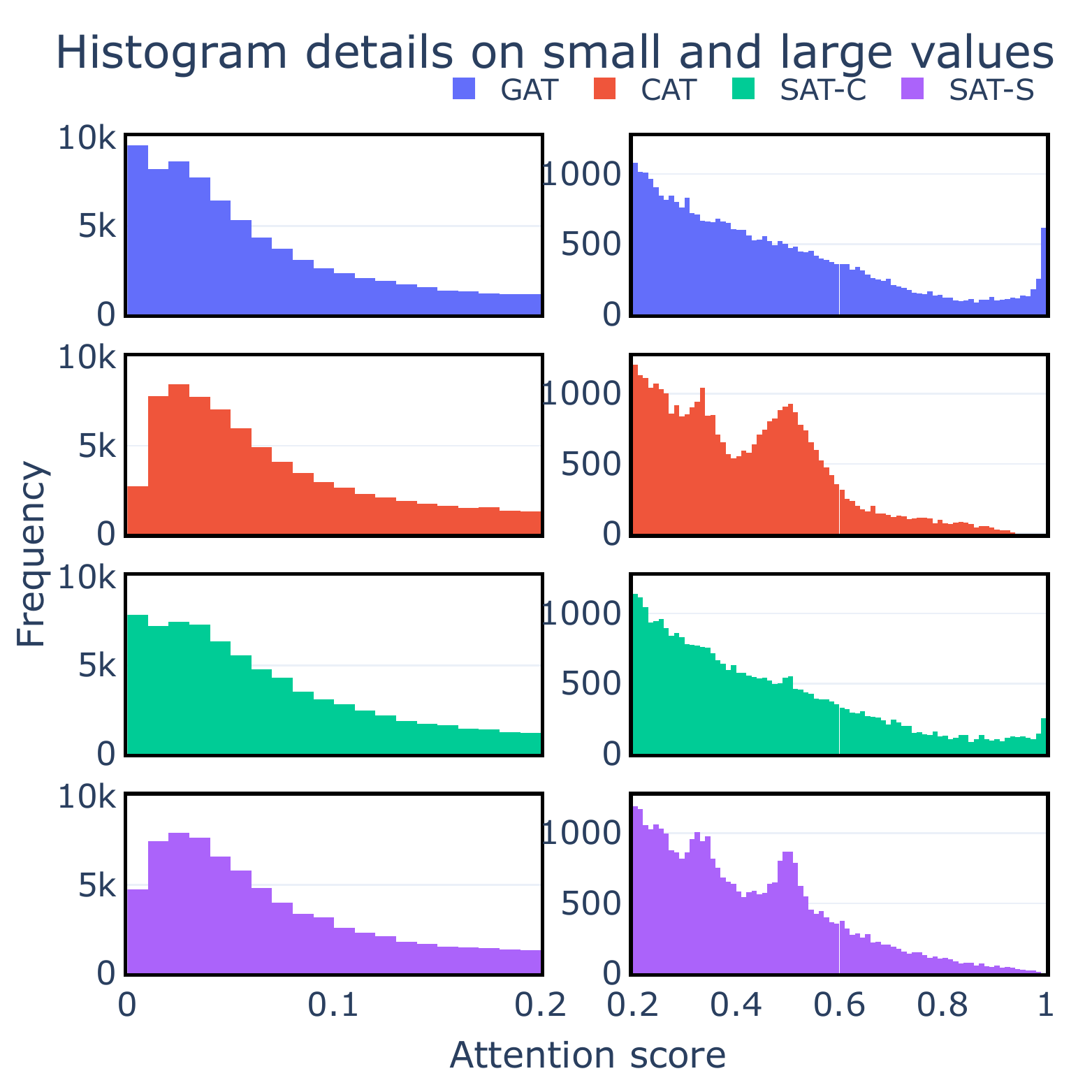}
    }
    \vspace{-8px}
    \caption{Attention scores from GAT, CAT and \model~on Pubmed}
    \vspace{-4px}
    \label{fig:pubmed_hist}
\end{figure}

\begin{figure}[htbp]
    \centering
    \subfigure{
        \includegraphics[height=0.15\textheight]{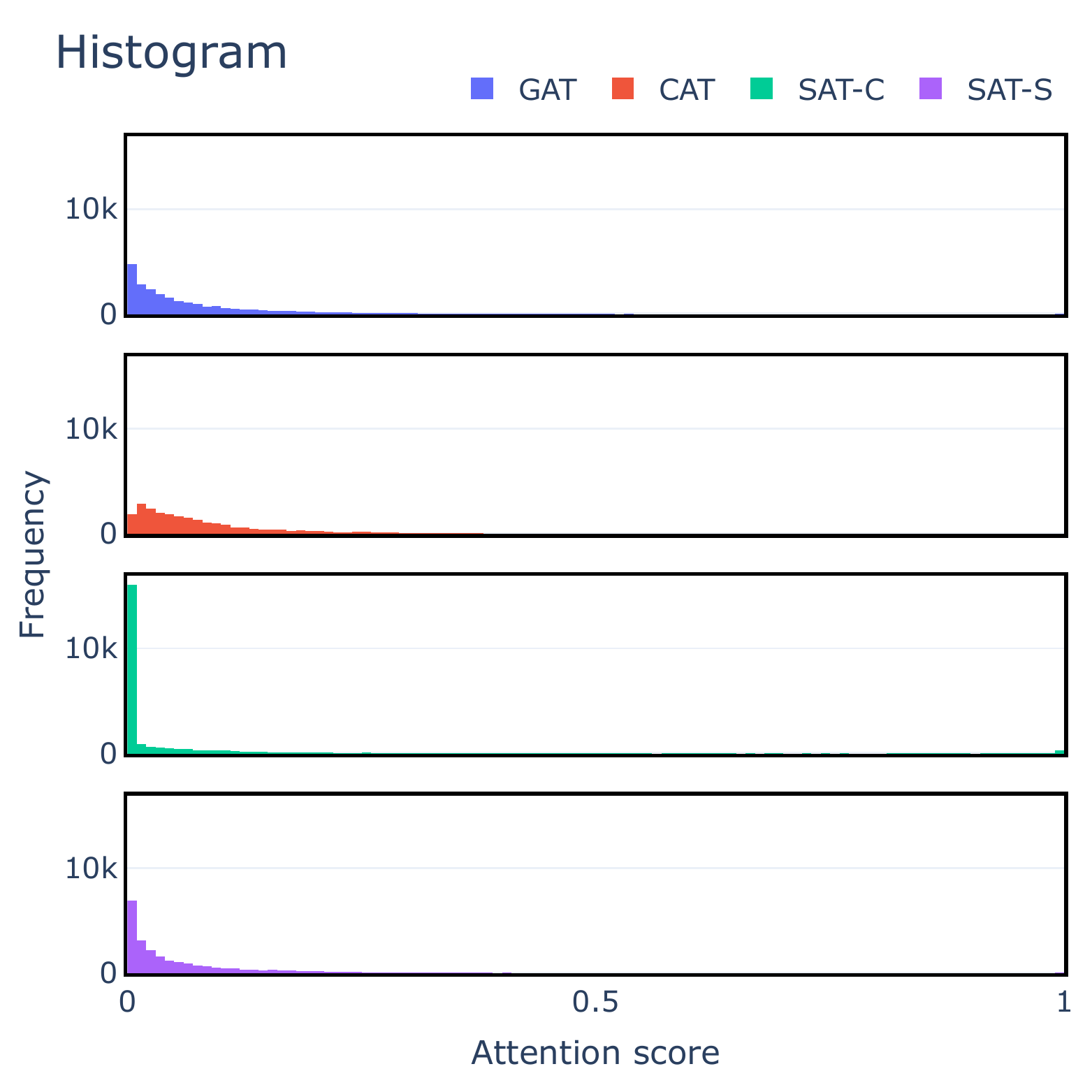}
    }
    \subfigure{
        \includegraphics[height=0.15\textheight]{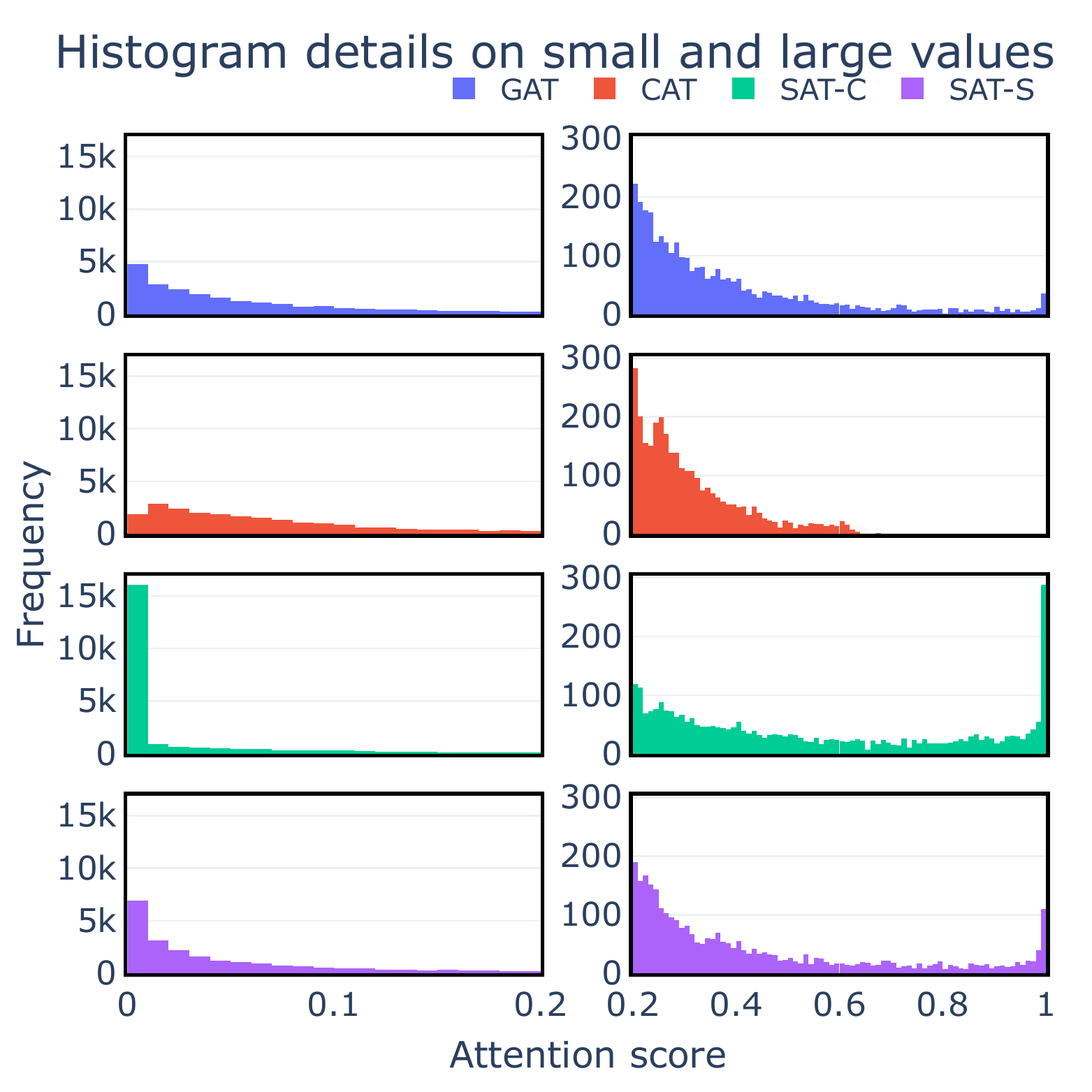}
    }
    \vspace{-10px}
    \caption{Attention scores from GAT, CAT and \model~on Wiki}
    \vspace{-4px}
    \label{fig:wiki_hist}
\end{figure}

\begin{figure}[htbp]
    \centering
    \subfigure{
        \includegraphics[height=0.15\textheight]{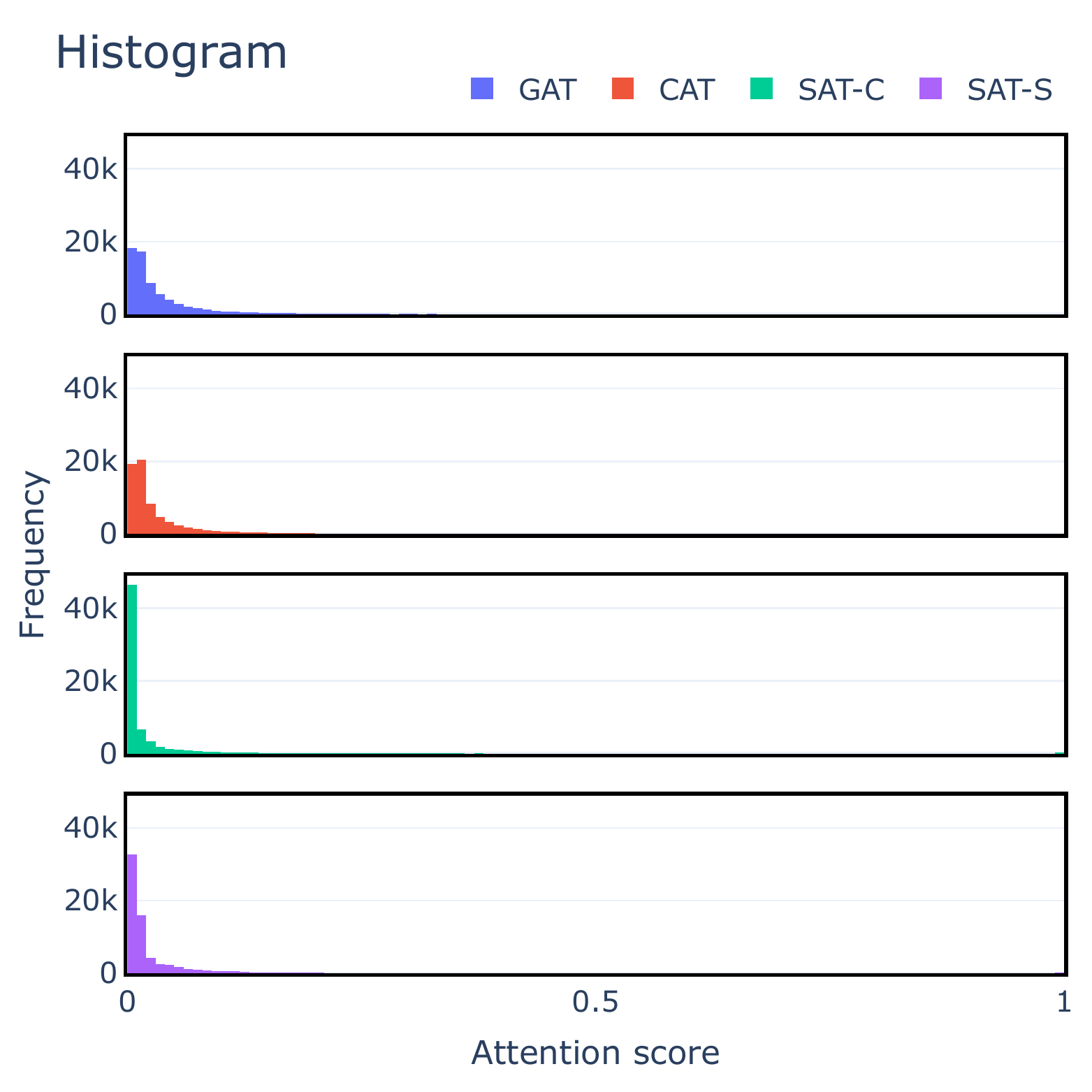}
    }
    \subfigure{
        \includegraphics[height=0.15\textheight]{images/uai_hist_detail_m.pdf}
    }
    \vspace{-10px}
    \caption{Attention scores from GAT, CAT and \model~on Uai}
    \vspace{-4px}
    \label{fig:uai_hist}
\end{figure}

\begin{figure}[htbp]
    \centering
    \subfigure{
        \includegraphics[height=0.15\textheight]{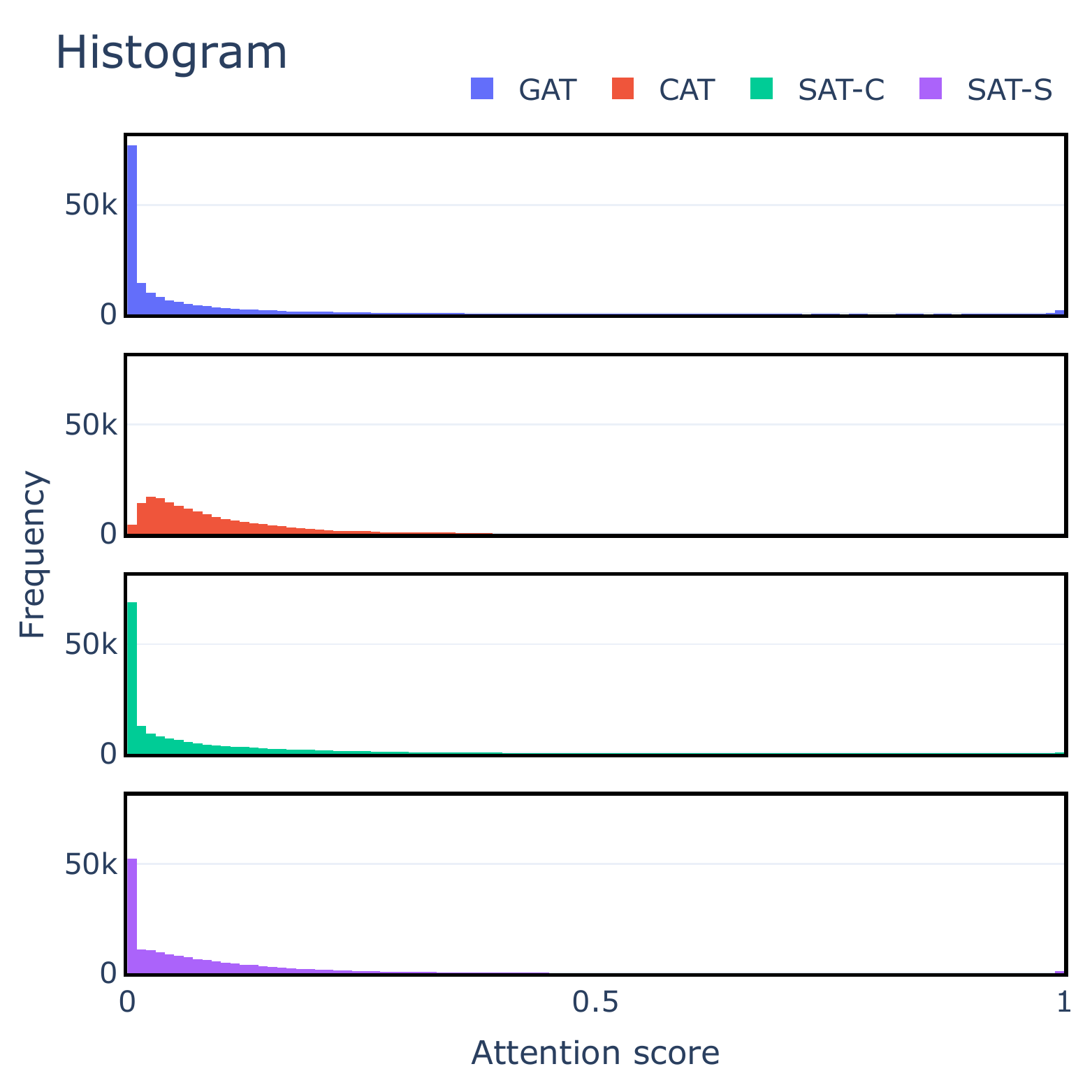}
    }
    \subfigure{
        \includegraphics[height=0.15\textheight]{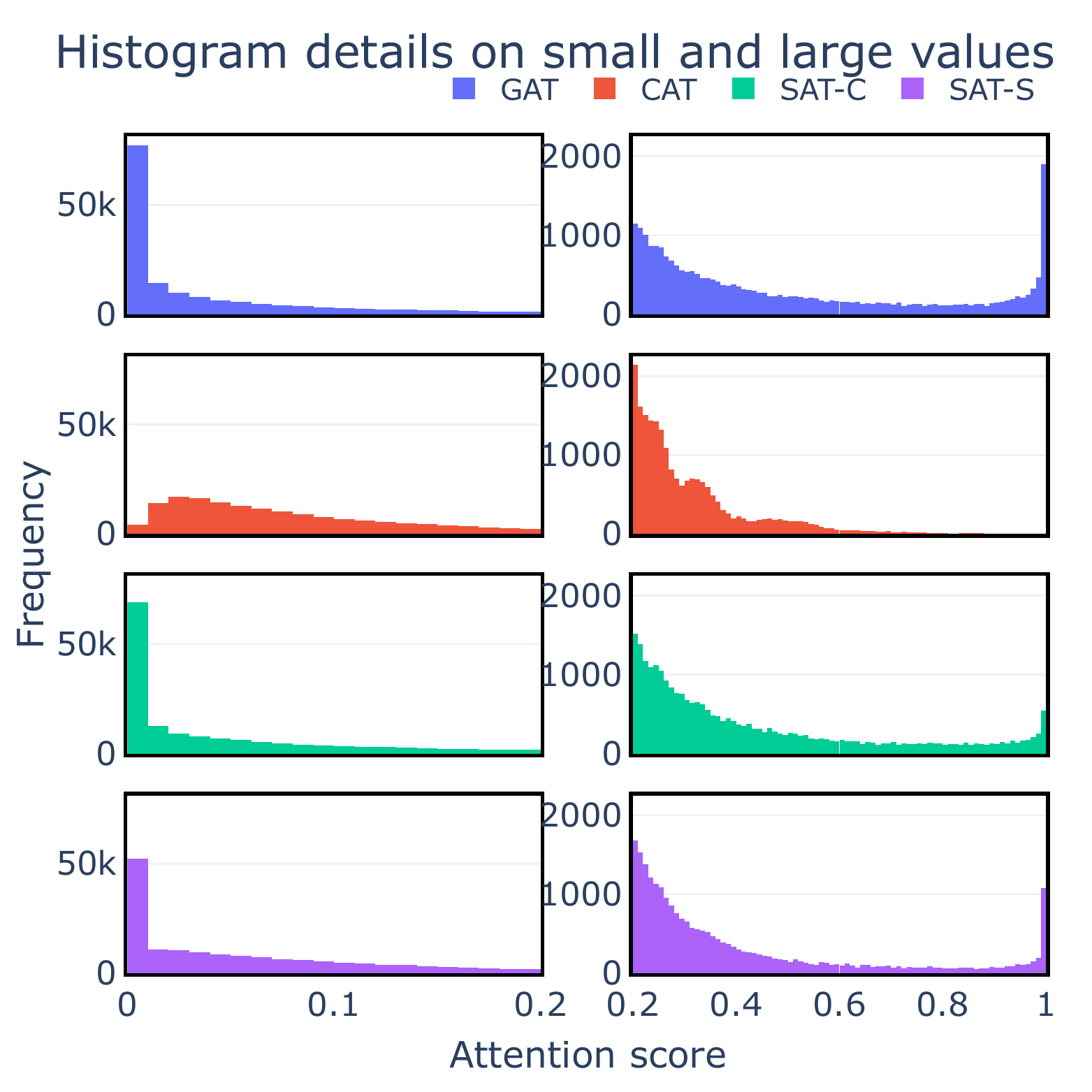}
    }
    \vspace{-10px}
    \caption{Attention scores from GAT, CAT and \model~on CoauthorCS}
    
    \label{fig:coauthorcs_hist}
\end{figure}

\newpage

\clearpage
\section*{Supplementary materials}
\section*{Proofs of Expressive Power}

\subsection*{Proof of Theorem \ref{theorem-c}}\label{proof-c}
\begin{proof}
The proof of Theorem \ref{theorem-c} can be divided into two parts, i.e., the proof of the sufficiency and necessity of the iff conditions \cite{he2021learning,DBLP:conf/iclr/XuHLJ19,zhang2020improving}.
The sufficiency of the iff conditions stated in Theorem \ref{theorem-c} is firstly proved.
Given a central node $c_i$, its aggregation function $h(c_i, X_i)$ can be written as:
%Given the definition of $h(c, X)$, we have:
\begin{equation}\label{hx-c}
    \begin{aligned}
    &h(c_i, X_i) = \sum_{x\in X_i} \alpha_{c_ix}g(x),\alpha_{c_ix} = \frac{f_{c_ix}\cdot d_{c_ix}}{ \sum_{x\in X_i} f_{c_ix}\cdot d_{c_ix}},\\
	&f_{c_ix} = \frac{\exp{ (m_{c_ix})}}{\sum_{x\in X_i}\exp{ (m_{c_ix})}}, d_{c_ix} = \exp{(-\beta \mathbf S_{c_ix})},
    \end{aligned}
\end{equation}
where $m_{c_ix}$ is the feature correlations between $c_i$ and the neighbor having vectorized features $x$.
Given Eq. (\ref{hx-c}), for two central nodes $c_1$ and $c_2$, $h(c_1, X_1)$ and $h(c_2, X_2)$ can be written as:
\begin{equation}\label{hx1hx2-c}
    \begin{aligned}
    &h(c_1, X_1) = \sum_{x\in X_1}\alpha_{c_1x}g(x)=\sum_{x\in X_1}[\frac{f_{c_1x}\cdot d_{c_1x}}{ \sum_{x\in X_1} f_{c_1x}\cdot d_{c_1x}}]\cdot g(x)\\
	&h(c_2, X_2) = \sum_{x\in X_2}\alpha_{c_2x}g(x)=\sum_{x\in X_2}[\frac{f_{c_2x}\cdot d_{c_2x}}{ \sum_{x\in X_2} f_{c_2x}\cdot d_{c_2x}}]\cdot g(x)
    \end{aligned}
\end{equation}
If $\mathsf{M}_1 = \mathsf{M}_2$, $h(c_2, X_2)$ can be written as follows:
\begin{equation}\label{hx2-rew1}
    \begin{aligned}
    &h(c_2, X_2) = \sum_{x\in \mathsf{M}_2}[\frac{f_{c_2x}\cdot \sum_{y=x, y\in X_2}d_{c_2y}}{ \sum_{x\in \mathsf{M}_2} f_{c_2x}\cdot\sum_{y=x, y\in X_2}d_{c_2y}}]\cdot g(x)\\
	&=\sum_{x\in \mathsf{M}_2}[\frac{\frac{\exp{ (m_{c_2x})}}{\sum_{x\in X_2}\exp{ (m_{c_2x})}}\cdot \sum_{y=x, y\in X_2}d_{c_2y}}{ \sum_{x\in \mathsf{M}_2} \frac{\exp{ (m_{c_2x})}}{\sum_{x\in X_2}\exp{ (m_{c_2x})}}\sum_{y=x, y\in X_2}d_{c_2y}}]\cdot g(x)\\
	&=\sum_{x\in \mathsf{M}_2}\frac{\exp{ (m_{c_2x})}\sum_{y=x, y\in X_2}\exp{ (-\beta\mathbf S_{c_2y})}}{\sum_{x\in \mathsf{M}_2}\exp{ (m_{c_2x})}\sum_{y=x, y\in X_2}\exp{ (-\beta\mathbf S_{c_2y})}}\cdot g(x).
    \end{aligned}
\end{equation}
Given $q \cdot \sum_{y=x, y\in X_1} \psi(-\beta\mathbf S_{c_1y}) = \sum_{y=x, y\in X_2} \psi(-\beta\mathbf S_{c_2y})$, and let $\psi(\cdot) \doteq \exp{(\cdot)}$, we have:
\begin{equation}\label{hx2-rew2}
	h(c_2, X_2) = \sum_{x\in \mathsf{M}_1}\frac{q\cdot \exp{ (m_{c_2x})}\sum_{ y=x, y\in X_1}d_{c_1y}}{\sum_{x\in \mathsf{M}_1}q\cdot\exp{ (m_{c_2x})}\sum_{y=x, y\in X_1}d_{c_1y}}\cdot g(x).
\end{equation}
$h(c_1, X_1)=h(c_2, X_2)$ can be derived if $c_1=c_2$.
%Given $c_1=c_2$, we derive $h(c_1, X_1)=h(c_2, X_2)$.

We next prove the necessity of the iff conditions that are stated in Theorem \ref{theorem-c}.
And it can be achieved by showing possible contradictions when the iff conditions are not satisfied.
If $h(c_1, X_1)=h(c_2, X_2)$, we have:
\begin{equation}\label{h1minush2-c}
	\begin{aligned}
		&h(c_1, X_1)-h(c_2, X_2) =\\
		&\sum_{x\in X_1}[\frac{f_{c_1x}\cdot d_{c_1x}}{ \sum_{x\in X_1} f_{c_1x}\cdot d_{c_1x}}]\cdot g(x)\\
		&- \sum_{x\in X_2}[\frac{f_{c_2x}\cdot d_{c_2x}}{ \sum_{x\in X_2} f_{c_2x}\cdot d_{c_2x}}]\cdot g(x)=0.
	\end{aligned}
\end{equation}
Firstly assuming $\mathsf{M}_1 \ne \mathsf{M}_2$, we thus have:
%We firstly assume $\mathsf{M}_1 \ne \mathsf{M}_2$, then:
\begin{equation}\label{s1nes2-c}
	\begin{aligned}
		&h(c_1, X_1)-h(c_2, X_2) = \\
		&\sum_{x\in \mathsf{M}_1 \cap \mathsf{M}_2}\![\frac{\exp{ (m_{c_1x})}\sum_{y=x, y\in X_1}\!d_{c_1y}}{\sum_{x\in \mathsf{M}_1}\!\exp{ (m_{c_1x})}\sum_{y=x, y\in X_1}\!d_{c_1y}} \\ 
		&- \frac{\exp{ (m_{c_2x})}\sum_{y=x, y\in X_2}\!d_{c_2y}}{\sum_{x\in \mathsf{M}_2}\!\exp{ (m_{c_2x})}\sum_{y=x, y\in X_2}\!d_{c_2y}}]\cdot g(x)\\ 
		&+\sum_{x\in \mathsf{M}_1 \setminus \mathsf{M}_2}\![\frac{\exp{ (m_{c_1x})}\sum_{y=x, y\in X_1}\!d_{c_1y}}{\sum_{x\in \mathsf{M}_1}\!\exp{ (m_{c_1x})}\sum_{y=x, y\in X_1}\!d_{c_1y}}]\cdot g(x)\\
		&-\sum_{x\in \mathsf{M}_2 \setminus \mathsf{M}_1}[\frac{\exp{ (m_{c_2x})}\sum_{y=x, y\in X_2}\!d_{c_2y}}{\sum_{x\in \mathsf{M}_2}\!\exp{ (m_{c_2x})}\sum_{y=x, y\in X_2}\!d_{c_2y}}]\cdot g(x)=0.
	\end{aligned}
\end{equation}
As Eq. (\ref{s1nes2-c}) holds for any possible $g(\cdot)$, we define a new function $g^\prime(\cdot)$:
\begin{equation}\label{gprime-c}
	\begin{aligned}
		&g(x) = g^\prime(x), \text {for } x\in \mathsf{M}_1\cap \mathsf{M}_2\\
		&g(x) = g^\prime(x) - 1, \text {for }x \in \mathsf{M}_1\setminus \mathsf{M}_2\\
		&g(x) = g^\prime(x) + 1, \text {for } x\in \mathsf{M}_2\setminus \mathsf{M}_1\\
	\end{aligned}
\end{equation}
It is known that Eq. (\ref{s1nes2-c}) holds for both $g(\cdot)$ and $g^\prime(\cdot)$.
Thus, we have:
\begin{equation}\label{s1nes3-c}
	\begin{aligned}
		&h(c_1, X_1)-h(c_2, X_2) = 0 =\\
		&\sum_{x\in \mathsf{M}_1 \cap \mathsf{M}_2}\![\frac{\exp{ (m_{c_1x})}\sum_{y=x, y\in X_1}\!d_{c_1y}}{\sum_{x\in \mathsf{M}_1}\!\exp{ (m_{c_1x})}\sum_{y=x, y\in X_1}\!d_{c_1y}}\\ 
		&- \frac{\exp{ (m_{c_2x})}\sum_{y=x, y\in X_2}\!d_{c_2y}}{\sum_{x\in \mathsf{M}_2}\!\exp{ (m_{c_2x})}\sum_{y=x, y\in X_2}\!d_{c_2y}}]\cdot g^\prime(x)\\
		&+\sum_{x\in \mathsf{M}_1 \setminus \mathsf{M}_2}\![\frac{\exp{ (m_{c_1x})}\sum_{y=x, y\in X_1}\!d_{c_1y}}{\sum_{x\in \mathsf{M}_1}\!\exp{ (m_{c_1x})}\sum_{y=x, y\in X_1}\!d_{c_1y}}]\cdot g^\prime(x)\\ 
		&-\sum_{x\in \mathsf{M}_2 \setminus \mathsf{M}_1}[\frac{\exp{ (m_{c_2x})}\sum_{y=x, y\in X_2}\!d_{c_2y}}{\sum_{x\in \mathsf{M}_2}\!\exp{ (m_{c_2x})}\sum_{y=x, y\in X_2}\!d_{c_2y}}]\cdot g^\prime(x)\\
	\end{aligned}
\end{equation}
As Eqs. (\ref{s1nes2-c}) and (\ref{s1nes3-c}) are equal to zero, we have:
\begin{equation}\label{eq-s1nes2}
	\begin{aligned}
		&\sum_{x\in \mathsf{M}_1 \setminus \mathsf{M}_2}\![\frac{\exp{ (m_{c_1x})}\sum_{y=x, y\in X_1}\!d_{c_1y}}{\sum_{x\in \mathsf{M}_1}\!\exp{ (m_{c_1x})}\sum_{y=x, y\in X_1}\!d_{c_1y}}]\\ & + \sum_{x\in \mathsf{M}_2 \setminus \mathsf{M}_1}[\frac{\exp{ (m_{c_2x})}\sum_{y=x, y\in X_2}\!d_{c_2y}}{\sum_{x\in \mathsf{M}_2}\!\exp{ (m_{c_2x})}\sum_{y=x, y\in X_2}\!d_{c_2y}}]=0.
	\end{aligned}
\end{equation}
It is seen that Eq. (\ref{eq-s1nes2}) does not hold as $\exp(\cdot)$ is positive.
$\mathsf{M}_1 \ne \mathsf{M}_2$ is therefore not true.
Now assuming $\mathsf{M}_1 = \mathsf{M}_2 = \mathsf{M}$ and excluding the irrational terms, Eq. (\ref{s1nes2-c}) can be rewritten as follows: 
\begin{equation}
	\begin{aligned}
		&\sum_{x\in \mathsf{M}_1 \cap \mathsf{M}_2}\![\frac{\exp{ (m_{c_1x})}\sum_{y=x, y\in X_1}\!d_{c_1y}}{\sum_{x\in \mathsf{M}_1}\!\exp{ (m_{c_1x})}\sum_{y=x, y\in X_1}\!d_{c_1y}}\\
		& - \frac{\exp{ (m_{c_2x})}\sum_{y=x, y\in X_2}\!d_{c_2y}}{\sum_{x\in \mathsf{M}_2}\!\exp{ (m_{c_2x})}\sum_{y=x, y\in X_2}\!d_{c_2y}}]\cdot g(x)=0.
	\end{aligned}
\end{equation}
To ensure the equation above to hold, each term in it has to be zero:
\begin{equation}\label{s1es2-c}
	\begin{aligned}
		&\frac{\exp{ (m_{c_1x})}\sum_{y=x, y\in X_1}d_{c_1y}}{\sum_{x\in \mathsf{M}_1}\exp{ (m_{c_1x})}\sum_{y=x, y\in X_1}d_{c_1y}}\\
		& - \frac{\exp{ (m_{c_2x})}\sum_{y=x, y\in X_2}d_{c_2y}}{\sum_{x\in \mathsf{M}_2}\exp{ (m_{c_2x})}\sum_{y=x, y\in X_2}d_{c_2y}}=0.
	\end{aligned}
\end{equation}
The above equation can be further rewritten as:
%The above equation can be rewritten as:
\begin{equation}\label{s1es22-c}
	\begin{aligned}
		&\frac{\sum_{y=x, y\in X_1}\!d_{c_1y}}{\sum_{y=x, y\in X_2}\!d_{c_2y}}=\\
		&\frac{\exp{ (m_{c_2x})}\sum_{x\in \mathsf{M}_1}\!\exp{ (m_{c_1x})}\sum_{y=x, y\in X_1}\!d_{c_1y}}{\exp{ (m_{c_1x})}\sum_{x\in \mathsf{M}_2}\!\exp{ (m_{c_2x})}\!\sum_{y=x, y\in X_2}\!d_{c_2y}}.
	\end{aligned}
\end{equation}
Assuming $\mathsf{M} = \lbrace s, s_0\rbrace$, $c_1 = s_0$, $c_2 = s$, the feature correlations between the central node and its neighbors are $m_{c_1x} = 1$ for $x \in \mathsf{M}$, $m_{c_2s} = 1$, and $m_{c_2s_0} = 2$.
When $x = s$, we have:
\begin{equation}
	\begin{aligned}
		&\frac{\sum_{s\in X_1}d_{c_1s}}{\sum_{s\in X_2}d_{c_2s}}=\frac{e [e\sum_{s\in X_1}d_{c_1s} + e\sum_{s_0\in X_1}d_{c_1s_0}]}{e[e\sum_{s\in X_2}d_{c_2s} + e^2\sum_{s_0\in X_2}d_{c_2s_0}]}.
	\end{aligned}
\end{equation}
$d_{cx} = \exp{ (-\beta\mathbf S_{cx})}$ can be any positive value as the computation of feature correlation and the learning of node-node dissimilarity ($\mathbf S$) are mutually independent.
Considering $d_{cx} = \exp{ (-\beta\mathbf S_{cx})}=a > 0$,
we have $\frac{\mu_1(s)}{\mu_2(s)}=\frac{|X_1|}{|X_2|-n+ne}$.
This equation does not hold because LHS (a rational number) does not equal RHS (an irrational number).  
Thus, $c_1 \ne c_2$ is not true.
Let $c_1 = c_2 = c$, Eq. (\ref{s1es22-c}) can be rewritten as follows:
\begin{equation}\label{c1ec2-c}
	\begin{aligned}
		&\frac{\sum_{y=x, y\in X_1}\!d_{cy}}{\sum_{y=x, y\in X_2}\!d_{cy}}\!=\!\frac{\sum_{x\in \mathsf{M}_1}\!\exp{ (m_{cx})}\!\sum_{y=x, y\in X_1}\!d_{cy}}{\sum_{x\in \mathsf{M}_2}\!\exp{ (m_{cx})}\!\sum_{y=x, y\in X_2}\!d_{cy}}\!=\!const \!>\! 0.
	\end{aligned}
\end{equation}
Setting $\frac{1}{q} = const$ and $d_{cy} = \exp{ (-\beta\mathbf S_{cy})} = \psi(-\beta\mathbf S_{cy})$, we finally have $q\sum_{y=x, y\in X_1}\!\psi(-\beta\mathbf S_{cy})=\sum_{y=x, y\in X_2}\!\psi(-\beta\mathbf S_{cy})$.

\end{proof}

\subsection*{Proof of Theorem \ref{theorem-s}}
\begin{proof}
Theorem \ref{theorem-s} can also be proved by considering the sufficiency and necessity of the iff conditions stated.
The sufficiency of the iff conditions in Theorem \ref{theorem-s} is firstly proved.
Given a central node $c_i$, its aggregation function $h(c_i, X_i)$ can be written as:
%Given the definition of $h(c, X)$, we have:
\begin{equation}\label{hx-s}
    \begin{aligned}
    &h(c_i, X_i) = \sum_{x\in X_i} \alpha_{c_ix}g(x),\alpha_{c_ix} = \frac{f_{c_ix}\cdot d_{c_ix}}{ \sum_{x\in X_i} f_{c_ix}\cdot d_{c_ix}},\\
	&f_{c_ix} = \frac{\exp{ (m_{c_ix})}}{\sum_{x\in X_i}\exp{ (m_{c_ix})}}, d_{c_ix} =1-\beta \frac{\exp{(\mathbf S_{c_ix})}}{\sum_{x \in X_i}\exp{(\mathbf S_{c_ix})}},
    \end{aligned}
\end{equation}
Given $c_1 = c_2$, $\mathsf{M}_1 = \mathsf{M}_2 = \mathsf{M}$, $q[\sum_{y=x, y\in X_1}\sum_{x\in X_1}\psi(\mathbf S_{c_1x})-\sum_{y=x, y\in X_1}$ $ \beta\psi(\mathbf S_{c_1y}) ]= \sum_{y=x, y\in X_2}\sum_{x\in X_2}\psi(\mathbf S_{c_2x}) - \sum_{y=x, y\in X_2} \beta \\ \psi(\mathbf S_{c_2y})$, and $\psi(\cdot) \doteq \exp{(\cdot)}$, we have:
\begin{equation}\label{hx2-rew2-s}
\begin{aligned}
&h(c_2, X_2) = \sum_{x \in X_2}\frac{f_{c_2x}\cdot d_{c_2x}}{ \sum_{x\in X_2} f_{c_2x}\cdot d_{c_2x}} g(x)\\
&\!=\!\sum_{x \in \mathsf{M}_2}\!\frac{\exp{(m_{c_2x})}\!\sum_{y=x, y\in X_2}\![1-\beta \frac{\exp{(\mathbf S_{c_2y})}}{\sum_{x \in X_2}\!\exp{(\mathbf S_{c_2x})}}]}{\sum_{x \in \mathsf{M}_2}\exp{(m_{c_2x})}\!\sum_{y=x, y\in X_2}\![1\!-\!\beta \frac{\exp{(\mathbf S_{c_2y})}}{\sum_{x \in X_2}\exp{(\mathbf S_{c_2x})}}]}\!g(x)\\
&\!=\!\sum_{x\in \mathsf{M}_1}\frac{q\cdot \exp{ (m_{c_1x})}\sum_{ y=x, y\in X_1}d_{c_1y}}{\sum_{x\in \mathsf{M}_1}q\cdot\exp{ (m_{c_1x})}\sum_{y=x, y\in X_1}d_{c_1y}}g(x) \!= \!h(c_1, X_1).
\end{aligned}
\end{equation}
Next, we prove the necessity of the iff conditions stated in Theorem \ref{theorem-s}.
Given $h(c_1, X_1) = h(c_2, X_2)$, we have:
\begin{equation}\label{h1minush2-s}
	\begin{aligned}
		&h(c_1, X_1)-h(c_2, X_2) \!=\! \sum_{x\in X_1}[\frac{f_{c_1x}\cdot d_{c_1x}}{ \sum_{x\in X_1} f_{c_1x}\cdot d_{c_1x}}]\cdot g(x)\\ 
		& - \sum_{x\in X_2}[\frac{f_{c_2x}\cdot d_{c_2x}}{ \sum_{x\in X_2} f_{c_2x}\cdot d_{c_2x}}]\cdot g(x)=0.
	\end{aligned}
\end{equation}
Assuming $\mathsf{M}_1 \ne \mathsf{M}_2$, we have:
\begin{equation}\label{s1nes2-s}
	\begin{aligned}
		&h(c_1, X_1)-h(c_2, X_2) = \\
		&\sum_{x\in \mathsf{M}_1 \cap \mathsf{M}_2}\![\frac{\exp{ (m_{c_1x})}\sum_{y=x, y\in X_1}\!d_{c_1y}}{\sum_{x\in \mathsf{M}_1}\!\exp{ (m_{c_1x})}\sum_{y=x, y\in X_1}\!d_{c_1y}}\\ 
		&- \frac{\exp{ (m_{c_2x})}\sum_{y=x, y\in X_2}\!d_{c_2y}}{\sum_{x\in \mathsf{M}_2}\!\exp{ (m_{c_2x})}\sum_{y=x, y\in X_2}\!d_{c_2y}}]\cdot g(x)\\
		&+\sum_{x\in \mathsf{M}_1 \setminus \mathsf{M}_2}\![\frac{\exp{ (m_{c_1x})}\sum_{y=x, y\in X_1}\!d_{c_1y}}{\sum_{x\in \mathsf{M}_1}\!\exp{ (m_{c_1x})}\sum_{y=x, y\in X_1}\!d_{c_1y}}]\cdot g(x)\\ 
		&-\sum_{x\in \mathsf{M}_2 \setminus \mathsf{M}_1}[\frac{\exp{ (m_{c_2x})}\sum_{y=x, y\in X_2}\!d_{c_2y}}{\sum_{x\in \mathsf{M}_2}\!\exp{ (m_{c_2x})}\sum_{y=x, y\in X_2}\!d_{c_2y}}]\cdot g(x) = 0.
	\end{aligned}
\end{equation}
Again we may define $g^\prime(\cdot)$ as Eq. (\ref{gprime-c}) shows.
It is known that $h(c_1, X_1)-h(c_2, X_2)=0$ holds for both $g(\cdot)$ and $g^\prime(\cdot)$.
Thus, we have:
\begin{equation}\label{s1nes3-s}
	\begin{aligned}
		&h(c_1, X_1)-h(c_2, X_2) = \\
		&\sum_{x\in \mathsf{M}_1 \cap \mathsf{M}_2}\![\frac{\exp{ (m_{c_1x})}\sum_{y=x, y\in X_1}\!d_{c_1y}}{\sum_{x\in \mathsf{M}_1}\!\exp{ (m_{c_1x})}\sum_{y=x, y\in X_1}\!d_{c_1y}}\\ 
		&- \frac{\exp{ (m_{c_2x})}\sum_{y=x, y\in X_2}\!d_{c_2y}}{\sum_{x\in \mathsf{M}_2}\!\exp{ (m_{c_2x})}\sum_{y=x, y\in X_2}\!d_{c_2y}}]\cdot g^\prime(x)\\
		&+\sum_{x\in \mathsf{M}_1 \setminus \mathsf{M}_2}\![\frac{\exp{ (m_{c_1x})}\sum_{y=x, y\in X_1}\!d_{c_1y}}{\sum_{x\in \mathsf{M}_1}\!\exp{ (m_{c_1x})}\sum_{y=x, y\in X_1}\!d_{c_1y}}]\cdot g^\prime(x)\\ 
		&-\sum_{x\in \mathsf{M}_2 \setminus \mathsf{M}_1}[\frac{\exp{ (m_{c_2x})}\sum_{y=x, y\in X_2}\!d_{c_2y}}{\sum_{x\in \mathsf{M}_2}\!\exp{ (m_{c_2x})}\sum_{y=x, y\in X_2}\!d_{c_2y}}]\cdot g^\prime(x) = 0.\\
	\end{aligned}
\end{equation}
As both Eqs. (\ref{s1nes2-s}) and (\ref{s1nes3-s}) equal zero, we have:
\begin{equation}
	\begin{aligned}
		&\sum_{x\in \mathsf{M}_1 \setminus \mathsf{M}_2}\![\frac{\exp{ (m_{c_1x})}\sum_{y=x, y\in X_1}\!d_{c_1y}}{\sum_{x\in \mathsf{M}_1}\!\exp{ (m_{c_1x})}\sum_{y=x, y\in X_1}\!d_{c_1y}}]\\
		& + \sum_{x\in \mathsf{M}_2 \setminus \mathsf{M}_1}[\frac{\exp{ (m_{c_2x})}\sum_{y=x, y\in X_2}\!d_{c_2y}}{\sum_{x\in \mathsf{M}_2}\!\exp{ (m_{c_2x})}\sum_{y=x, y\in X_2}\!d_{c_2y}}]=0.
	\end{aligned}
\end{equation}
Like the analysis on proving Theorem \ref{theorem-c}, $\mathsf{M}_1 \ne \mathsf{M}_2$ is false.
%Like the analysis in the main manuscript, $\mathsf{M}_1 \ne \mathsf{M}_2$ is not true.
Now we are able to assume $\mathsf{M}_1 = \mathsf{M}_2 = \mathsf{M}$.
The terms about $x \in \mathsf{M}_1 \setminus \mathsf{M}_2$ and $x \in \mathsf{M}_2 \setminus \mathsf{M}_1$ in Eq. (\ref{s1nes2-s}) can therefore be eliminated and it is known that the following equation must hold to ensure $h(c_1, X_1)-h(c_2, X_2)=0$:
\begin{equation}\label{s1es22-s}
	\begin{aligned}
		&\frac{\sum_{y=x, y\in X_1}\!d_{c_1y}}{\sum_{y=x, y\in X_2}\!d_{c_2y}}=\\
		&\frac{\exp{ (m_{c_2x})}\sum_{x\in \mathsf{M}_1}\!\exp{ (m_{c_1x})}\sum_{y=x, y\in X_1}\!d_{c_1y}}{\exp{ (m_{c_1x})}\sum_{x\in \mathsf{M}_2}\!\exp{ (m_{c_2x})}\!\sum_{y=x, y\in X_2}\!d_{c_2y}}.
	\end{aligned}
\end{equation}
Assuming $\mathsf{M} = \lbrace s, s_0\rbrace$, $c_1 = s_0$, $c_2 = s$, the feature correlations between the central node and its neighbors are $m_{c_1x} = 1$ for $x \in \mathsf{M}$, $m_{c_2s} = 1$, and $m_{c_2s_0} = 2$.
%Let $S = \lbrace s, s_0\rbrace$, $c_1 = s_0$, $c_2 = s$, the feature similarity between central node and others be $m_{c_1x} = 1$ for $x \in S$, $m_{c_2s} = 1$, and $m_{c_2s_0} = 2$. 
When $x = s$, we have:
%Considering $x = s$, we have:
\begin{equation}
	\begin{aligned}
		&\frac{\sum_{s\in X_1}d_{c_1s}}{\sum_{s\in X_2}d_{c_2s}}=\frac{e [e\sum_{s\in X_1}d_{c_1s} + e\sum_{s_0\in X_1}d_{c_1s_0}]}{e[e\sum_{s\in X_2}d_{c_2s} + e^2\sum_{s_0\in X_2}d_{c_2s_0}]}.
	\end{aligned}
\end{equation}
$d_{cx}=1-\beta \frac{\exp{(\mathbf S_{cx})}}{\sum_{x \in X}\exp{(\mathbf S_{cx})}}$ can be any positive value as the computation of feature correlation and the learning of $\mathbf S$ are mutually independent.
Considering $d_{cx} = a > 0$,
We have $\frac{\mu_1(s)}{\mu_2(s)}=\frac{|X_1|}{|X_2|-n+ne}$. 
This equation does not hold because LHS (a rational number) does not equal RHS (an irrational number).
Thus, $c_1 \ne c_2$ is false.
Since $c_1 = c_2 = c$, Eq. (\ref{s1es22-s}) can be rewritten as:
\begin{equation}\label{c1ec2-c}
	\begin{aligned}
		&\frac{\sum_{y=x, y\in X_1}\![\sum_{x \in X_1}\exp{(\mathbf S_{cx})} - \beta \exp{(\mathbf S_{cy})}]}{\sum_{y=x, y\in X_2}[\sum_{x \in X_2}\exp{(\mathbf S_{cx})} - \beta \exp{(\mathbf S_{cy})}]} =\\
		&\!\frac{\sum_{x\in \mathsf{M}_1}\!\exp{ (m_{cx})}\!\sum_{y=x, y\in X_1}[\sum_{x \in X_1}\exp{(\mathbf S_{cx})} - \beta \exp{(\mathbf S_{cy})}]}{\sum_{x\in \mathsf{M}_2}\!\exp{ (m_{cx})}\!\sum_{y=x, y\in X_2}[\sum_{x \in X_2}\exp{(\mathbf S_{cx})} - \beta \exp{(\mathbf S_{cy})}]}\\
		&=const > 0.
	\end{aligned}
\end{equation}
Letting $const = \frac{1}{q}$ and $\exp{(\cdot)} = \psi(\cdot)$,
we finally have $q\sum_{y=x, y\in X_1}$ $[\sum_{x\in X_1}\psi(\mathbf S_{cx}) - \beta\psi(\mathbf S_{cy}) ]= \sum_{y=x, y\in X_2}[\sum_{x\in X_2}\psi(\mathbf S_{cx}) - \beta\psi(\mathbf S_{cy})]$.
\end{proof}

\subsection*{Proof of Corollary \ref{coro-att}}
\begin{proof}
We may complete this proof by following the procedure presented in \cite{DBLP:conf/iclr/XuHLJ19,he2021learning}.
According to Theorem \ref{theorem-c}, we assume $X_1 = (\mathsf{M}, \mu_1)$, $X_2 = (\mathsf{M}, \mu_2)$, $c \in \mathsf{M}$, and $q\cdot\sum_{y=x, y\in X_1} \psi(-\beta\mathbf S_{c_1y}) = \sum_{y=x, y\in X_2} \psi(-\beta\mathbf S_{c_2y})$, for $q > 0$.
When $\mathcal T$ uses the attention scores solely according to Eq. (\ref{c-strategy}) to aggregate node features, we have $\sum_{x\in X_1} \alpha_{cx} g(x) = \sum_{x\in X_2} \alpha_{cx} g(x)$.
This means $\mathcal T$ fails to discriminate the structures satisfying the conditions stated in Theorem \ref{theorem-c}.
When $\mathcal T$ uses Eq. (\ref{att-aggregation}) where the attention coefficients are obtained by the \textit{Contractive apprehension span} (Eq. (\ref{c-strategy})) to aggregate node features, we have $\sum_{x\in X_1} \alpha_{cx} g(x) - \sum_{x\in X_2} \alpha_{cx} g(x) = \epsilon (\frac{1}{|X_1|}-\frac{1}{|X_2|})\alpha_{cc} g(c)$, where $|X_1| = |\mathcal N_1|$, and $|X_2| = |\mathcal N_2|$.
Since $|X_1| \ne |X_2|$, $\sum_{x\in X_1} \alpha_{cx} g(x) - \sum_{x\in X_2} \alpha_{cx} g(x) \ne 0$, which means $\mathcal T$ based on Eqs. (\ref{c-strategy}) and (\ref{att-aggregation}) is able to discriminate all the structures that $\mathcal T$ solely based on Eq. (\ref{c-strategy}) fails to distinguish.
Following the similar procedure, when the \mechanism~layer (Eq. (\ref{att-aggregation})) utilizes the \textit{Subtractive apprehension span} (Eq. (\ref{s-strategy})), we are able to prove that the corresponding aggregation function also can distinguish those distinct structures that the aggregation function only using \textit{Subractive apprehension span} fails to discriminate.
\end{proof}

\end{document}